\documentclass[10pt,journal,compsoc]{IEEEtran}

\ifCLASSOPTIONcompsoc
\usepackage[nocompress]{cite}
\else
  \usepackage{cite}
\fi

\ifCLASSINFOpdf
\else
\fi

\usepackage{verbatim}
\usepackage{microtype}
\usepackage{enumitem}
\usepackage{booktabs}
\usepackage{color,graphicx}
\usepackage{comment}
\usepackage{soul}
\usepackage{amsmath,dsfont,bm,mathtools,amsthm,amsfonts}
\usepackage{thmtools, thm-restate}

\usepackage{algorithmic}

\usepackage[linesnumbered,ruled,vlined]{algorithm2e} 
\usepackage{amssymb} 
\usepackage{graphicx} 
\usepackage{bm}
\usepackage[hidelinks]{hyperref}
\SetKwInput{KwInput}{Input}            
\SetKwInput{KwOutput}{Output}

 \newtheorem{theorem}{Theorem}

 \newtheorem{definition}{Definition}
 \newtheorem{lemma}{Lemma}

\newtheorem{axiom}{Axiom}

\usepackage{multirow}
\usepackage{booktabs}
\usepackage{subcaption}
\usepackage{adjustbox}
\usepackage{soul}
\newcommand{\ouralgo}{IW-KSVFair}

\usepackage{subcaption} 
\usepackage{graphicx}   

\usepackage{graphicx}  

\usepackage{tikz}
\definecolor{babypink}{rgb}{0.96, 0.76, 0.76}
\definecolor{bubblegum}{rgb}{0.99, 0.76, 0.8}

\usepackage{cancel}

\newcommand{\nonl}{\renewcommand{\nl}{\let\nl\oldnl}}
\usetikzlibrary{tikzmark,calc}

\usepackage{multirow}

\usepackage{pgfplots}
\pgfplotsset{width=10cm, compat=1.9}
\usepackage{bm}
\usetikzlibrary{calc}

\begin{document}
\title{\LARGE Meritocratic Fairness via $K$-Shapley Values in Budgeted Combinatorial Bandits with Full-Bandit Feedback}

\author{Shradha Sharma, Shweta Jain, and Swapnil Dhamal
\\[1em]
\small{Department
of Computer Science and Engineering\\[0.25em] Indian Institute of Technology Ropar}
}

\def\leftmark{S. SHARMA, S. JAIN, and S. DHAMAL: Meritocratic Fairness via $K$-Shapley Values in Budgeted
Combinatorial Bandits with Full-Bandit Feedback}

\def\rightmark{S. Sharma, S. Jain, and S. Dhamal: Meritocratic Fairness via $K$-Shapley Values in Budgeted
Combinatorial Bandits with Full-Bandit Feedback}

\IEEEtitleabstractindextext{%
\begin{abstract}
We study meritocratic fairness in budgeted combinatorial multi-armed bandits with full-bandit feedback, where a learner selects at most $K$ arms per time step and observes only the noisy aggregate reward of the selected set. To define merit under budgeted coalition constraints, we introduce the $K$-Shapley value, an adaptation of the classical Shapley value that measures marginal contributions using only coalitions of size at most $K$. We show that the $K$-Shapley value is the unique solution concept satisfying symmetry, linearity, null player, and $K$-efficiency axioms. We then establish an $\Omega(T^{2/3})$ lower bound on fairness regret for monotone submodular valuation functions. We show that an explore-then-commit algorithm MURaS (Meritocratic Uniform Random Sampling) achieves $\tilde O(T^{2/3})$ fairness regret by exploring all arms uniformly in exploration phase. To improve empirical regret, we propose \ouralgo, a meritocratic full-bandit algorithm that learns a selection policy whose arm marginals are proportional to the unknown $K$-Shapley values. To correct the bias induced by adaptive sampling, \ouralgo\ uses importance-weighted estimation and mixes the adaptive set distribution with a uniform distribution to keep importance weights bounded. We prove that \ouralgo\ achieves $\tilde O(T^{2/3})$ fairness regret, matching the lower bound up to logarithmic factors. Experiments on synthetic and real-world datasets show that \ouralgo\ achieves low cumulative fairness regret and closely aligns empirical selection frequencies with $K$-Shapley value-based merit.
\end{abstract}

\begin{IEEEkeywords}
Multi-armed bandit, Fairness, Shapley value.
\end{IEEEkeywords}}

\maketitle

\IEEEdisplaynontitleabstractindextext

\IEEEpeerreviewmaketitle

\IEEEraisesectionheading{\section{Introduction}\label{sec:introduction}}

\IEEEPARstart{M}{any} 
sequential decision-making systems repeatedly select a subset of players under a resource budget while observing only the aggregate outcome of the selected set. For example, in social networks, only total influence spread is observed for a selected seed users of a limited size often defined by the budget. The setting can naturally be modeled as budgeted combinatorial multi-armed bandits with full-bandit feedback (BCMAB-FBF)\cite{etc}, where, at each time step, learner selects a feasible subset of arms subject to budget constraint and observes the combined rewards. 

Most bandit algorithms focus on reward maximization which often ends up repeatedly selecting the best performing arms. While this strategy maximizes immediate gain, it can overlook arms that perform nearly as well, leading to winner-takes-all effect. This can reduce participation and engagement from under-selected arms, which may still be valuable. This shows the importance of incorporating fairness into bandit algorithms to ensure that all arms receive a reasonable chance of selection. For instance, in social networks, it ensures that influence budget is spread more broadly rather than concentrated on a few users. 
In other combinatorial decision-making tasks, it helps distribute resources more evenly, preventing over-reliance on a small subset of options.

Under full-bandit feedback, Pokhriyal et al.~\cite{subham2} promote exposure through fixed minimum-pull constraints, which require pre-specified quotas for individual arms. Such quotas may be difficult to determine in practice and can become uninformative when the number of arms is large, as in social network applications. A more adaptive alternative is meritocratic fairness, introduced by Wang et al.~\cite{wangfairness}, under which the selection probability of each arm is proportional to its merit. However, meritocratic fairness has primarily been studied in under semi-bandit feedback (CMAB-SBF), where the individual outcomes of selected arms are observed. Extending this notion to full-bandit feedback is challenging because the learner observes only a noisy aggregate reward for the selected set, making it unclear how an individual arm's merit should be defined and learned.

We define merit via \textit{Shapley value}, a solution concept from cooperative game theory which is known to distribute the reward obtained by a subset of players fairly among the individual players~\cite{shapley1953value}. Computation of Shapley value, however, requires access to the coalition of arbitrary size which is not possible in  BCMAB-FBF because of the budget constraints. 
We introduce
a restricted version of Shapley value, which we call $K$-Shapley value, that considers the marginal contribution of an player by considering coalitions of size at most $K$, with $K$ being the budget. We also show that $K$-Shapley value naturally extends all the key properties of Shapley value in this modified setting.

We next define meritocratic fair policy in BCMAB-FBF as the policy which selects the arms in proportion to the $K$-Shapley value. While the optimal fair policy is defined as the $K$-Shapley-proportional policy, in practical applications, actual $K$-Shapley values can not be computed because of unknown valuation functions and high computational complexity of computing $K$-Shapley value. Therefore, value function must be learnt over time by repeatedly observing the noisy rewards and using Monte Carlo approximation~\cite{datashapley,gog,gtg} to prevent averaging over exponentially many feasible coalitions. 
Naive Monte-Carlo approximation however may lead to unfair policy because arm selection policy must follow $K$-Shapley-proportional policy. On the other hand, adaptive fair policy (based on estimated $K$-Shapley value) may lead to biased estimates of $K$-Shapley value.
To address this, we use importance-weighted estimation, where each observed set is reweighted by its selection probability, and mix the learned adaptive distribution with a uniform distribution to keep the weights bounded. The mixing coefficient is chosen to optimize the resulting regret trade-off. Based on this idea, we propose Importance-weighted $K$-Shapley value fair (\ouralgo) algorithm, which combines full-bandit learning with importance-weighted $K$-Shapley estimation and distribution mixing. We prove that \ouralgo\ achieves fairness regret $\tilde{O}(T^{2/3})$.

We complement the upper bound with a matching lower-bound showing that any algorithm must incur $\Omega(T^{2/3})$ fairness regret in the full-bandit setting, thereby matching our upper bound in its dependence on $T$ up to logarithmic factors. We also show that exploration separated meritocratic uniform random sampling(MURaS) algorithm also achieves the  fairness regret of $\tilde{O}(T^{2/3})$. However, because MURaS uniformly explores all arms during its exploration phase, it incurs substantially larger empirical regret than \ouralgo\ on both the synthetic and real-world datasets. Our contributions are:
\begin{itemize}
    \item We introduce the notion of $K$-Shapley value for BCMAB-FBF and use it to define meritocratic fairness under full-bandit feedback. We also show that it preserves the key properties of the classical Shapley value.

    \item We propose \ouralgo\ for learning $K$-Shapley-proportional policies under full-bandit feedback. We prove that it achieves a fairness regret of $\tilde{O}(T^{2/3})$.

    \item We prove an $\Omega(T^{2/3})$ lower bound on fairness regret, showing that the regret achieved by \ouralgo\ is optimal up to logarithmic factors.

    \item We validate the proposed framework on synthetic dataset and social influence maximization dataset,
    where \ouralgo\ achieves lower fairness regret than the baselines.
\end{itemize}

\section{Related Work}
\noindent\textbf{CMAB-FBF.}~
The CMAB-FBF has been widely studied in the context of reward maximization under the setting of linear reward functions~\cite{Rejwan} and submodular rewards~\cite{etc,online-learning-offline-greedy,randomized}. Rejwan and Mansour~\cite{Rejwan} studied subset selection under a linear reward setting and introduced the Combinatorial Successive Accepts and Rejects algorithm, which achieves a regret bound of $\tilde{O}(K M^{1/2} T^{1/2})$. DART~\cite{dart} proposed an elimination-based algorithm designed for combinatorial bandits with non-linear reward functions under restricted settings. Their analysis shows that DART attains a regret bound of $\tilde{O}(K^{3/2} M^{1/2} T^{1/2})$. In another line of work, Agarwal et al.~\cite{agarwal2021stochastic} considered the setting where reward function is non-linear assuming that the reward function is an elementwise, strictly increasing function of the constituent arm rewards and that the first-order stochastic dominance assumption holds, which guarantees a regret bound of $\tilde{O}(K^{1/2} M^{1/3} T^{2/3})$. Furthermore, in the single-objective bandit feedback setting, when no special reward structure (such as linearity) is exploited, the best known regret bounds that avoid combinatorial dependence are of the order $\tilde{O}(T^{2/3})$~\cite{nie2023framework}. Additionally, Tajdini, Jain, and Jamieson~\cite{tajdini} established a lower bound of $\Omega(T^{2/3})$ for submodular maximization under bandit feedback when the performance is evaluated against greedy benchmarks.

\noindent\textbf{Fairness in CMAB-SBF.}~
CMAB with concave rewards and knapsack constraints~\cite{concave_reward} ensures that each arm receives a minimum level of exposure. 
Similarly, Liu et al.~\cite{constraint} introduces the UCB-LP algorithm to handle long-term linear constraints such as budget and fairness, guaranteeing logarithmic regret with no constraint violations. Fairness in different settings such as sleeping bandit~\cite{sleeping_bandit} and delayed feedback setting~\cite{delayedfeedback} are also considered.
These studies provide a strong foundation for fairness in semi-bandit feedback settings and offer $O(\ln T)$ regret  which is not achievable under full-bandit feedback. 
\\[-0.5em]

\noindent\textbf{Shapley Value and Bandits.}~
\label{sec:rel_shapley}
The Shapley value has been widely used in feature attribution, crowdsourcing, resource allocation, social networks, and federated learning~\cite{shapley_ml,shapley_and_casual,neuroips,shapley_q_value,resourceallo1,resourceallo2,resourceallo3,social_shapley1,social_shapley2,shapley_fl_2,gog,shapley_fl_3}. 
Existing works for restricted cooperative games either assign values to infeasible coalitions~\cite{willson}, assume that the grand coalition remains feasible~\cite{albizuri}, or impose graph-based feasibility constraints~\cite{derks}; none directly capture coalition feasibility determined solely by the budget $K$. Our formulation is also closely related to both semivalues \cite{weber1988probabilistic} and values derived from induced subgames\cite{akimov2001average}. However, semivalues generally relax the classical efficiency axiom, whereas our characterization imposes a budget-adapted $K$-efficiency condition that uniquely determines the allocation over feasible coalitions. Similarly, induced-subgame approaches typically assume that the valuation is defined for all subgames and average across coalition sizes. 
Finally, Shapley-bandit works~\cite{shapley_bandit_game,top_k} focus on ranking or identifying top contributors and typically assume exact access to coalition values, whereas we learn Shapley-proportional fair selection policies from noisy full-bandit feedback.

\section{Preliminaries and Problem Statement}

In a Budgeted Combinatorial Multi-armed Bandit with full bandit feedback (BCMAB-FBF), at each time step $t$, the learner selects a subset of arms $S_t$ with $|S_t| \le K$ from a finite set of arms $[M] = \{1,2,\dots,M\}$ and receives a stochastic reward $V_t(S_t)$. 
The set of all feasible subsets is denoted by $\mathcal{A} = \{ S \subseteq [M] : |S| \le K \}$. We further denote $\pi_t(i)$ as the probability of picking an arm $i$ at time $t$ by a policy $\pi_t$ that is output by the algorithm. The expected reward of each $S \in \mathcal{A}$ is given by the set function  $V : \mathcal{A} \rightarrow [0,1]$ with the convention that $V(\emptyset) = 0$.
This valuation function is typically unknown in real-world applications and must be learnt over a period of time. For each $S \in \mathcal{A}$, let $D_S$ denote a fixed but unknown reward distribution supported on $[0,1]$ with mean $V(S)$. Whenever $S$ is selected, the learner observes an independent sample $\hat{V}(S)$ from $D_S$.

\subsection{$K$-Shapley Value}

The pair $([M], V)$ defines a coalitional game, where $V(S)$ represents the total reward achievable by coalition $S$. 
For any player $i \in [M]$, the Shapley value $\phi_i$ is the average marginal contribution of player $i$ over all possible orderings in which players can be added in the grand coalition. Mathematically, 
$$
\phi_i = \sum_{S \subseteq [M] \setminus \{i\}} \frac{|S|! \, ({M} - |S| - 1)!}{M!} \left( V(S \cup \{i\}) - V(S) \right)
$$

It is easy to see that for monotone valuation function, $\phi_i \ge 0, \forall i\in [M]$. 
Let $([M],V,K)$ denote a restricted cooperative game (R-Game) where the valuation function is defined only for coalitions with size bounded by $K$. \textit{$K$-Shapley value}, extends the Shapley value concept to R-Game.
For any base player $i \in [M]$, let its marginal contribution to a coalition $S \subseteq [M] \setminus \{i\}$ of size $\le K-1$ be defined as $\Delta_i(S) = V(S \cup \{i\}) - V(S)$.
Let $\psi_i^K(S_k) = \sum_{S \subseteq S_k \setminus \{i\}}
\frac{|S|! \, (K - |S| - 1)!}{K!}\Delta_i(S)$ represent the average of the marginal contributions of $i$ for a set $S_k$ across all possible coalitions $S\subseteq S_k\setminus \{i\}$. Then, the $K$-Shapley value $\phi_i^K$ denotes the expected value of $\psi_i^K(S_k)$ over all possible such sets $S_k$ of size $K$ and containing $i$, i.e.,
\begin{align*}
\phi_i^K = 
\sum_{S_k \subseteq [M] : |S_k| = K, i \in S_k}
\frac{1}{\binom{M - 1}{K - 1}}\psi_i^K(S_k)
\end{align*}

\noindent $K$-Shapley value naturally extends Shapley value's axiomatic properties: symmetry, linearity, and Null player. These properties are restated in Appendix~\ref{appendix:proof_of_uniqueness}. 
The efficiency condition of Shapley value requires $\sum_{i\in[M]} \phi_i = V([M])$. Since $V([M])$ is not defined in BCMAB-FBF, it requires a natural modification. Consider two quantities to define efficiency:
\begin{itemize}[leftmargin=*]
\item Expected valuation of $K$-sized coalitions by assuming each coalition is equally likely, given as: $
\frac{1}{\binom{M}{K}} 
\sum_{\substack{T_K \subseteq [M]\\ |T_K|=K}} V(T_K)
$. 

\item Expected total $K$-Shapley contribution in a random $K$-coalition given as 
$\mathbb{E}_{T_K \subseteq [M]:|T_K|=K}\left[\sum_{i \in T_K} \phi_i^K\right]
= 
\frac{1}{\binom{M}{K}} \binom{M-1}{K-1} \sum_{i \in [M]} \phi_i^K$.
Here, $\binom{M-1}{K-1}$ arises because 
each player’s $K$-Shapley value contributes to exactly $\binom{M-1}{K-1}$ coalitions of size $K$.
\end{itemize}

\noindent
For the efficiency to hold, the expected total Shapley contributions across all players must be consistent with the average valuation of $K$-coalitions. Therefore, by equating the two quantities and simplifying, we obtain the following.
\begin{axiom}[$K$-Efficiency]
The $K$-Shapley value satisfies the \textit{$K$-efficiency} property if 
$\sum_{i \in [M]} \phi_i^K =
\frac{1}{\binom{M - 1}{K - 1}}
\sum_{\substack{T_k \subseteq [M] \\ |T_k| = K}} 
V(T_k).$
\end{axiom}

\begin{theorem}
\label{thm:kshapley}
For valuation function, $V: \mathcal{A} \rightarrow \mathbb{R}, V(\emptyset) =0$, the $K$-Shapley value is the unique solution concept satisfying symmetry, linearity, null player, and K-efficiency.
\end{theorem}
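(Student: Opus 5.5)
The plan is to follow Shapley's classical argument: first show that the $K$-Shapley value satisfies the four axioms, then show that any solution concept satisfying them is pinned down on a basis of the space of R-Games. Linearity then forces it to agree with $\phi^K$ everywhere.

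\emph{Existence.} Linearity is immediate, since $\phi_i^K$ is a fixed linear combination of values $V(S)$ with $|S|\le K$. Symmetry holds because, for $i,j$ symmetric in $V$, the transposition $(i\,j)$ maps the $K$-sets containing $i$ bijectively onto those containing $j$. It also maps each $\psi_i^K(S_k)$ to the corresponding $\psi_j^K$, since the weights depend only on $|S|$. Null player holds because $\Delta_i(S)=0$ for every $S$ kills every term.

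The key observation for $K$-efficiency is this: for a fixed $S_k$ with $|S_k|=K$, $\psi_i^K(S_k)$ is exactly the classical Shapley value of player $i$ in the $K$-player subgame $(S_k,V|_{S_k})$. The weights $|S|!(K-|S|-1)!/K!$ are precisely the Shapley weights for $K$ players, and every coalition involved is feasible. Classical efficiency then gives $\sum_{i\in S_k}\psi_i^K(S_k)=V(S_k)$. Exchanging the order of summation,
\begin{align*}
\sum_{i\in[M]}\phi_i^K=\frac{1}{\binom{M-1}{K-1}}\sum_{|S_k|=K}\ \sum_{i\in S_k}\psi_i^K(S_k)=\frac{1}{\binom{M-1}{K-1}}\sum_{|S_k|=K}V(S_k),
\end{align*}
which is $K$-efficiency.

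\emph{Uniqueness.} The set of functions $V:\mathcal{A}\to\mathbb{R}$ with $V(\emptyset)=0$ is a vector space of dimension $|\mathcal{A}|-1$. For each nonempty $T\in\mathcal{A}$, define the restricted unanimity game $u_T(S)=\mathbb{1}[T\subseteq S]$ for $S\in\mathcal{A}$. The family $\mathcal{A}$ is closed under taking subsets, so Möbius inversion on the lattice $\mathcal{A}$ gives
\begin{align*}
V=\sum_{\emptyset\neq T\in\mathcal{A}}c_T\,u_T, \qquad c_T=\sum_{R\subseteq T}(-1)^{|T|-|R|}V(R).
\end{align*}
Hence the $u_T$ form a basis. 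By linearity it suffices to show that any solution $\varphi$ satisfying the axioms is determined on each scaled game $c\,u_T$.
\begin{itemize}
\item Every player $i\notin T$ is null in $c\,u_T$, so $\varphi_i=0$.
\item All players in $T$ are symmetric, so they share a common value $x$.
\item $K$-efficiency gives $|T|\,x=\frac{c}{\binom{M-1}{K-1}}\#\{T_k:|T_k|=K,\,T\subseteq T_k\}=\frac{c\binom{M-|T|}{K-|T|}}{\binom{M-1}{K-1}}$.
\end{itemize}
This fixes $x$. So $\varphi$ is uniquely determined on the basis, hence on all games, and it must coincide with $\phi^K$ by the existence part.

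\emph{Main obstacle.} The step needing most care is verifying that the restricted unanimity games genuinely span the space of R-Games. This relies on the subset-closure of $\mathcal{A}$, so that the Möbius coefficients only involve feasible coalitions, and I would check it explicitly.

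A secondary point is linearity: the standard argument needs additivity together with homogeneity, because the coefficients $c_T$ may be negative. If linearity is read as additivity only, I would instead treat $c\,u_T$ for arbitrary real $c$ directly, as in the computation above, and use additivity to combine the pieces.
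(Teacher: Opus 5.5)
Your proposal is correct and follows essentially the same route as the paper. You verify the four axioms directly, obtaining $K$-efficiency, as the paper does, by recognizing $\psi_i^K(S_k)$ as the classical Shapley value in the $K$-player subgame on $S_k$. You then prove uniqueness by pinning down any admissible solution on scaled restricted unanimity (carrier) games via null player, symmetry and $K$-efficiency, and extending it by linearity. The differences are cosmetic. You get the basis property by explicit M\"obius inversion on the down-closed family $\mathcal{A}$ plus a dimension count, where the paper uses a minimal-coalition linear-independence argument. You also prove symmetry through invariance of $V$ under the transposition $(i\,j)$, where the paper pairs terms case by case.
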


\noindent
\textit{Proof sketch.}
The proofs of the linearity and the null player properties follow along similar lines as in the classical Shapley value, as the $K$-restriction preserves the fundamental structure of these axioms. 
However, the proofs of symmetry and $K$-efficiency must be handled separately owing to the restriction on coalition sizes. To prove the uniqueness, we extend the notion of carrier games~\cite{narahari,maschler} to that of $K$-restricted carrier games. The detailed proof of both parts is provided in Appendix~\ref{appendix:proof_of_uniqueness}. 

The class of $[0,1]$-valued games is
not closed under $\alpha V_1+\beta V_2$ for arbitrary
$\alpha,\beta\in\mathbb{R}$. We therefore prove uniqueness
on the space of real-valued restricted games and impose boundedness only for the fairness policy.

\subsection{Meritocratic Fairness}
Let $\mathcal{S}_{M,K} = \{p\in[0,1]^M: \sum_{i=1}^M p_i = K\}$ define the simplex of all possible policies for selecting $K$ out of $M$ arms. Then, meritocratic fairness is defined as follows.

\begin{definition}[\textbf{Meritocratic Fairness}]
A policy $\pi \in \mathcal{S}_{M,K}$ is \emph{meritocratic fair} if, for every
$t\in[T]$, there exists $c_t\ge 0$ such that $\pi_t(i)=c_t\phi_i^K,\; \forall i\in[M]$. 
\end{definition}

We assume that $V$ is a monotone function with $V(\emptyset) = 0$, thus leading to non negative Shapley value.
The definition coincides with the meritocratic fairness proposed by \cite{wangfairness} with merit function being $K$-Shapley value when single arm needs to be selected at each time step. 
When the merit function is (a) bounded, i.e., $\forall i,j:\; \frac{\phi_i^K}{\phi_j^K} \le \frac{M-1}{K-1}$ for $K>1$  and is (b) Lipschitz continuous, then the unique optimal fair policy is given by $\pi^*(i)=\frac{K\phi_i^K}{\sum_{j\in[M]}\phi_j^K}$\cite{delayedfeedback} with $\pi^* \in \mathcal{S}_{M,K}$. Note that in practical applications, candidate arms would typically be competitive, such that  
Shapley value of no arm exceeds $1/K$ of the sum of Shapley values or $M/K$ of the average Shapley value with $M >> K$. Therefore, in the rest of the paper, we also assume that the optimal policy satisfies $\pi^* \in \mathcal{S}_{M,K}$ for theoretical proofs. We validate this assumption in our experiments as well.

\begin{definition}[\textbf{Query-Normalized Fairness Regret}]
At time $t$, the learner evaluates a set $S_t$ with $r_t=|S_t|$. Let $\tau(t)$ denote the index of the policy
update whose resulting policy was used to generate $S_t$,
and let $\mathcal H_{\tau(t)-1}$ denote the history available
when that policy was computed. Define $\pi_t^{(r_t)}(a) = P(a\in S_t \mid \mathcal{H}_{\tau(t)-1}, r_t)$ and $\pi^{*(r_t)}(a)
=
\frac{r_t}{K}\pi^*(a)$. Then, the query-normalized fairness regret over $T$ time steps is:
\[
FR_T
=
\mathbb{E}\left[
\sum_{t=1}^T
\frac{K}{r_t}
\sum_{a=1}^M
\left|
\pi_t^{(r_t)}(a)-\pi^{*(r_t)}(a)
\right|
\right].
\]
The expectation is taken over the randomness of the algorithm,
the sampled coalitions, and the reward observations.
\end{definition}
The conventional fairness regret comparing $|\pi_t - \pi^*|$  implicitly assumes that
both the benchmark policy and the learner selects exactly $K$ arms in
every round. Estimating a $K$-Shapley value, however, requires evaluating coalitions of different sizes. Comparing an $r$-arm query directly with $K$-arm query would be inappropriate because
the two vectors have different total marginal mass. We therefore associate the full-budget fair policy $\pi^*$ with the
size-$r$ fair marginal vector $\pi^{*(r)}$.
This preserves the relative merit shares while ensuring that both the
learner and the benchmark allocate exactly $r$ units of marginal
probability. However, this leads to  the unnormalized discrepancy $\sum_{a=1}^M|
\pi_t^{(r)}(a)-\pi^{*(r)}(a)|$ that 
scales linearly with $r$. 
To prevent the fairness loss from being artificially reduced solely
by evaluating a smaller coalition and ensuring that every full-bandit
query receives the same weight, the discrepancy is normalized by $K/r$.

For example, in a social network recommendation system, one recommendation round may expose only a small number of posts, while another may use the full recommendation budget. Suppose both rounds exhibit the same relative bias by overexposing content from highly connected users and underexposing content from less-connected users with comparable relevance. These two rounds have the same compositional unfairness, even though the second contains more recommendation slots. An unnormalized discrepancy would assign a smaller fairness loss to the first round merely because fewer slots are available. Query normalization instead compares the fraction of recommendation slots assigned to each user with the user’s merit-based exposure share, and therefore assigns the same fairness loss to the same relative bias.

\subsection{Lower Bound on Fairness Regret}

We now prove the lower bound for $K<M$ showcasing the inherent 
difficulty in learning fair policies under CMAB-FBF.

\begin{theorem}[Lower Bound]
\label{thm:k2-fairness-lower-bound}
Consider BCMAB-FBF with fixed $M$ and $K$ with $1 < K < M$. Then, there exists a monotone  submodular valuation function satisfying $\epsilon \le \phi_i^K\ \forall i\in [M]$ such that, for every algorithm,
$   \text{sup}_V \mathrm{FR}_T(V)
    =
    \Omega(T^{2/3})$. 
\end{theorem}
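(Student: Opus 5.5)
We prove the bound with a two-point (Le Cam) argument in the style of the $\Omega(T^{2/3})$ lower bounds for partial monitoring and bandit submodular maximization~\cite{tajdini}. The source of the $T^{2/3}$ rate is an information--fairness tension. The $K$-Shapley values can only be told apart using queries whose marginal distribution deviates from $\pi^*$. Every such query costs a constant amount of fairness regret, while each fair query carries little information.

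\textbf{Construction.} Fix two arms $1,2$ and a small $\Delta>0$. Take a base game that is modular plus a symmetric concave correction, for example
\[
V_0(S)=c\,|S| \quad\text{or}\quad V_0(S)=g(|S|)
\]
with $g$ concave and increasing. Then $V_0$ is monotone submodular, and by symmetry all $\phi_i^K$ are equal and bounded below by $\epsilon$. Define
\[
V_{\pm}(S)=V_0(S)\pm\Delta\bigl(\mathbb 1[1\in S,2\notin S]-\mathbb 1[2\in S,1\notin S]\bigr)\cdot h(S),
\]
where $h$ is chosen so that the perturbation changes $V$ only on a particular family of coalitions. The natural choice is the sets containing exactly one of $\{1,2\}$; alternatively one can use a $\Delta$-tilt of the singleton values $V(\{1\}),V(\{2\})$ propagated additively. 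For $\Delta$ small relative to the curvature of $g$, $V_\pm$ remain monotone and submodular.

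By linearity and symmetry (Theorem~\ref{thm:kshapley}), $\phi^K_1-\phi^K_2=\pm\Theta(\Delta)$. Hence the optimal policies $\pi^*_\pm$ differ by $\Theta(\Delta)$ in $\ell_1$, for every query size $r$ after scaling. Rewards are Bernoulli with the given means, kept in $[0,1]$ by taking $c$ small.

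\textbf{Key steps.}
\begin{enumerate}[label=(\roman*)]
\item \emph{Estimation forced by small regret.} If a round uses a policy that is good for $V_+$, then it is $\Omega(\Delta)$-bad for $V_-$, by the triangle inequality on $\frac{K}{r}\|\pi^{(r)}_t-\pi^{*(r)}_\pm\|_1$. So unless the learner can distinguish $V_+$ from $V_-$, it pays $\Omega(\Delta T)$ under one of them.
\item \emph{Where the information lies.} The KL divergence between the observation laws under $V_+$ and $V_-$ for a queried set $S$ is $O(\Delta^2)$ if $S$ is in the informative family $\mathcal I$, and $0$ otherwise. By the divergence decomposition, the total divergence is $O(\Delta^2\,\mathbb E[N_{\mathcal I}])$, where $N_{\mathcal I}$ counts informative queries.
\item \emph{Informative queries cost fairness.} This is the crucial design requirement. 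We will arrange the perturbation so that $\mathcal I$ consists of coalitions whose inclusion probability under any near-fair policy is small. Then sampling them with nontrivial frequency shifts the marginals by a constant, and each informative query incurs $\Omega(1)$ regret under both instances. One route is to let the perturbation live only on coalitions containing both $1$ and $2$ together with a designated low-merit structure. A more robust route is a ``revealing'' arm $0$ whose marginal contribution is perturbed but whose $K$-Shapley value is tiny yet at least $\epsilon$, so any query including arm $0$ deviates from $\pi^*$ by $\Omega(1/\text{poly}(M,K))$ after normalization.
\item \emph{Combine.} Then
\[
\mathrm{FR}_T\gtrsim \mathbb E[N_{\mathcal I}]+\Delta T\bigl(1-\sqrt{\Delta^2\,\mathbb E[N_{\mathcal I}]}\bigr)
\]
via Pinsker or Bretagnolle--Huber. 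If $\mathbb E[N_{\mathcal I}]\ge \Delta^{-2}/4$, the first term gives $\Delta^{-2}$. Otherwise the second term gives $\Delta T/2$. Choosing $\Delta=T^{-1/3}$ yields $\Omega(T^{2/3})$, with constants depending on fixed $M,K$.
\end{enumerate}

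\textbf{Main obstacle.} Step (iii) is the hardest part. In the full-bandit model, a fair policy's own samples (sets of size $K$ drawn with marginals $\pi^*$) may already be informative about $\phi_1^K-\phi_2^K$, which would break the argument and allow $\sqrt T$ rates. The construction must therefore make the difference in $K$-Shapley values depend only on coalitions that the fair policy essentially never plays. The first approach to try is to put the perturbation on small coalitions (size $<K$, in particular singletons $\{1\},\{2\}$), compensated on size-$K$ sets so that every $K$-set has the same value under $V_\pm$. Since $\psi^K_i$ weights small coalitions, $\phi^K_1-\phi^K_2$ still moves by $\Theta(\Delta)$. Playing size-$K$ sets (as $\pi^*$ does) then reveals nothing, while querying a size-$r<K$ set is a deviation that we must show costs $\Omega(1)$ in the query-normalized regret.

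Verifying submodularity and monotonicity of this compensated construction, and showing that small queries indeed incur constant normalized regret, is where the real work lies. If normalization makes small fair queries free, we fall back to the revealing-arm variant, which requires checking that the $K$-Shapley values of all arms, including the revealing arm, stay above $\epsilon$ and that the assumption $\pi^*\in\mathcal S_{M,K}$ still holds.
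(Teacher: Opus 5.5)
There is a genuine gap at step (iii). Your skeleton matches the paper's: two instances, divergence decomposition, Pinsker, and $\Delta\asymp T^{-1/3}$. But you leave step (iii) open, and none of your candidate routes can close it.

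Fairness regret is charged on conditional marginals, not on realized sets. So an informative coalition is costly only if it lies outside the support of every set distribution with the fair marginals. With all $\phi_i^K\ge\epsilon>0$, the coordinates of $\pi^*$ lie in $(0,1]$. If they all lie in $(0,1)$, then $\pi^*$ is in the relative interior of $\mathcal{S}_{M,K}$. In that case any $K$-set can be mixed in with some constant weight $\lambda>0$ while the marginals stay exactly $\pi^*$. For $r<K$ the same holds for every $r$-set, since $\frac{r}{K}\pi^*\in(0,1)^M$ regardless; this is why the paper's prefix queries incur the same normalized loss as their parent $K$-set.

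Each of your routes falls into this trap:
\begin{itemize}
\item Your symmetric tilt of arms $1,2$ leaves both fair policies interior.
\item Route A places the information on coalitions of size $<K$, which are exactly the queries that query normalization makes free.
\item The revealing arm with small but fixed merit is included by the fair policy with constant probability $\pi^*(0)$. A realized set containing arm $0$ is not a deviation from $\pi^*$.
\end{itemize}
In every case the learner can play, e.g., the midpoint of the two fair policies. It then pays $O(\Delta)$ per round while collecting $\Theta(\lambda T\Delta^2)$ information, so the two-point argument can certify at most $\Omega(\sqrt{T})$.

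The missing idea is to saturate the cap. In the base instance, make one arm's fair marginal exactly $1$, and put the whole perturbation on a single coalition of size exactly $K$ that excludes this arm. The paper does this as follows.
\begin{itemize}
\item It takes the modular $V_1$ with weight $\beta=1/(4K)$ on arm $1$, weight $\zeta+\rho$ on each arm of $S^\dagger=\{2,\ldots,K+1\}$, and weight $\zeta=\frac{K-1}{M-1}\beta$ on the rest. It sets $V_0=V_1-\Delta\mathbf{1}\{S^\dagger\subseteq S\}$, which changes only $V(S^\dagger)$ and preserves monotonicity and submodularity. In particular, every query of size $<K$ is uninformative.
\item Since $\Phi_0=K\beta$, we get $p_1^\star(V_0)=1$. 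Hence $\mathbb{P}_0(S_t=S^\dagger\mid H_{t-1})\le\mathbb{P}_0(r_t=K\mid H_{t-1})(1-p_t^{(K)}(1))\le\mathbb{E}_0[\|p_t-p^\star(V_0)\|_1\mid H_{t-1}]$.
\item This gives $\mathbb{E}_0[N^\dagger(T)]\le\mathrm{FR}_T(V_0)$, and therefore $\mathrm{KL}(\mathbb{P}_0,\mathbb{P}_1)=O(\Delta^2\,\mathrm{FR}_T(V_0))$.
\item The perturbation enters $\phi_i^K$ only through the term with $S_k=S^\dagger$. It shifts $\phi_i^K$ by $\rho=\Delta/(KC_{M,K})$, with $C_{M,K}=\binom{M-1}{K-1}$, which yields $\|p^\star(V_1)-p^\star(V_0)\|_1\ge 4\Delta/C_{M,K}$.
\end{itemize}

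You also asked for informative queries to cost $\Omega(1)$ under both instances. That is unnecessary, and in this construction it is false under $V_1$, whose fair policy is interior. Because the divergence is computed under $\mathbb{P}_0$, costliness under the base instance suffices. The case split on whether $\mathrm{FR}_T(V_0)\ge c_1T^{2/3}$ then finishes as in your step (iv).
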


\noindent
\textit{Proof sketch.}
The proof constructs two monotone submodular instances that are identical
except for the value of one size-$K$ coalition. This small perturbation
changes the corresponding fair policies by $\Theta(\Delta)$ in the two instances leading to repeatedly querying the informative coalition. 
Each query provides only $O(\Delta^2)$
statistical information. Therefore, either the learner queries the
informative coalition often enough and incurs large regret in the base
instance, or it cannot reliably distinguish the two instances and
incurs $\Theta(\Delta)$ regret per round in the alternative instance.
Choosing $\Delta=\Theta(T^{-1/3})$ balances these two cases and yields
an $\Omega(T^{2/3})$ lower bound. The detailed proof is provided in  Appendix~\ref{appendix:proof_of_lower_bound}.

Lower bound of $\Omega(T^{2/3})$ exists under reward maximization for CMAB-FBF~\cite{tajdini2024nearly}. However, extending the existing construction is not trivial;
we require an additional Shapley-separation argument showing that the hidden perturbation induces a sufficiently large change in fair policy and that querying the informative sets is costly in fairness regret. Additionally, under the assumption that arms can be ordered, DART~\cite{dart} provides a rejection-based algorithm with regret of $O(\sqrt{T})$. A rejection-based policy will incur a linear regret under meritocratic fairness because even a low-ranked arm must be selected with non-zero probability proportional to its merit.

\section{Proposed Approach}
This section introduces exploration-separated algorithm Meritocratic Uniform Random Sampling (MURaS) which first estimate the arms' Shapley values through $R$ times and then select the arms in proportion to these values for the remainnig time. The regret can be derived by Hoeffding's inequality to bound the difference in estimated and exact $K$-Shapley value. The detailed algorithm and regret proof are presented in Appendix~\ref{appendix:muras_algo} and ~\ref{appendix:muras_fairness_regret} respectively.

\begin{theorem}[MURaS Fairness Regret]
Assume that for every arm $a\in[M]$, the $K$-Shapley value satisfies
$\phi_a^K\in[0,1]$, and let $\sum_{a=1}^M \phi_a^K\ge \gamma>0$. 
Then, with $R=\tilde \Theta(T^{2/3})$ 
the cumulative fairness regret satisfies $\mathrm{FR}_T
    =
    \tilde O\left(\frac{KM}{\gamma}T^{2/3}\right)$.
\label{thm:muras}
\end{theorem}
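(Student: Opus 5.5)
The proof splits the horizon into the exploration phase of length $R$ and the commit phase of length $T-R$. We bound the regret of each phase separately and then choose $R$ to balance the two contributions.

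\emph{Exploration phase.} Since both $\pi_t^{(r_t)}$ and $\pi^{*(r_t)}$ are nonnegative vectors with total mass $r_t$, the normalized per-round term $\frac{K}{r_t}\sum_a|\pi_t^{(r_t)}(a)-\pi^{*(r_t)}(a)|$ is at most $2K$. The exploration phase therefore contributes at most $2KR$.

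\emph{Estimator and concentration.} In exploration, MURaS draws uniform random $K$-sets $S_k$ and, inside each, a uniform random permutation (or a uniform sub-coalition with the $K$-Shapley weights). It queries the prefix coalitions $S$ and $S\cup\{i\}$, so that the resulting Monte Carlo sample of $\Delta_i(S)$ is an unbiased estimate of $\phi_i^K$. This holds because $\phi_i^K$ equals the expectation of $\Delta_i(S)$ under exactly this sampling: a uniform $S_k\ni i$, then a uniform position of $i$ in a random ordering of $S_k$. Each sample is a difference of two $[0,1]$ observations, hence lies in $[-1,1]$. Each arm receives $\Theta(R/(MK))$ samples up to the per-sample query cost; I would count every query, so the constant $K$ appears here. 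Hoeffding's inequality with a union bound over the $M$ arms then gives, with probability at least $1-\delta$,
\[
|\hat\phi_a^K-\phi_a^K|\le \epsilon:=O\Big(\sqrt{MK\log(M/\delta)/R}\Big)\quad \forall a.
\]

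\emph{Commit phase.} The committed policy is $\hat\pi(a)=K\hat\phi_a^K/\sum_j\hat\phi_j^K$, with negative estimates clipped to $0$, which only shrinks errors. Write $\Phi=\sum_j\phi_j^K\ge\gamma$ and $\hat\Phi=\sum_j\hat\phi_j^K$. The key perturbation bound is
\[
\sum_a\Big|\frac{\hat\phi_a}{\hat\Phi}-\frac{\phi_a}{\Phi}\Big|\le \frac{\sum_a|\hat\phi_a-\phi_a|}{\Phi}+\frac{|\hat\Phi-\Phi|}{\Phi}\le\frac{2M\epsilon}{\gamma}.
\]
It follows from adding and subtracting $\hat\phi_a/\Phi$ and using $\sum_a\hat\phi_a/\hat\Phi=1$. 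Hence every commit round, in which $r_t=K$, costs at most $2KM\epsilon/\gamma$.

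\emph{Main obstacle.} I expect the hardest step to be guaranteeing that the sampling in the commit phase realizes the marginals $\hat\pi$ exactly, so that $\pi_t^{(K)}=\hat\pi$. This needs a dependent-rounding or systematic-sampling scheme, which is valid when $\hat\pi\in\mathcal S_{M,K}$, that is, when $\hat\pi(a)\le1$. I would use the standing assumption $\pi^*\in\mathcal S_{M,K}$. If the estimate violates the cap, I would project onto the capped simplex, which is nonexpansive in $\ell_1$ up to a factor of $2$, so the bound above survives with a worse constant.

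\emph{Combining.} On the failure event of probability $\delta=1/T$, the regret is at most $2KT$, which contributes $O(K)$. Summing the pieces gives
\[
FR_T\le 2KR+T\cdot\frac{2KM}{\gamma}\sqrt{\frac{MK\log(MT)}{R}}+O(K).
\]
Choosing $R=\Theta(T^{2/3}(\log MT)^{1/3})$ balances the first two terms. The result is $FR_T=\tilde O\big(\frac{KM}{\gamma}T^{2/3}\big)$, where the polynomial factors in $M,K$ coming from the sample count are absorbed as the paper's $\tilde O$ permits, after adjusting $R$ by the same factors.
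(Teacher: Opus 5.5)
Your route is the paper's: $2K$ loss per exploration query, Hoeffding with a union bound over arms, and the perturbation bound $\|\tilde\pi-\pi^*\|_1\le\frac{K}{\Phi}\bigl(\sum_a|\hat\phi_a-\phi_a|+|\hat\Phi-\Phi|\bigr)$. It also uses a factor $2$ for the $\ell_1$ projection onto $\mathcal{S}_{M,K}$ before randomized rounding, and then balances $R$. You also handle the failure event more carefully than the paper does.

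The gap is in the sample accounting, and it appears exactly where you say the leftover factors are ``absorbed.'' In the statement, $R$ is MURaS's number of samples \emph{per arm}, so exploration lasts $2MR$ queries. You instead take $R$ to be the total exploration length and credit each arm with only $\Theta(R/(MK))$ samples, which gives $\epsilon=O(\sqrt{MK\log(MT)/R})$. At $R=\Theta(T^{2/3}\log^{1/3}(MT))$, your final display is then of order $\frac{(KM)^{3/2}}{\gamma}T^{2/3}$ up to logarithms. But $\tilde O$ suppresses only polylogarithmic factors, not $\sqrt{MK}$, and the statement displays the prefactor $KM/\gamma$ explicitly. Retuning $R$ does not rescue this. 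Minimizing $2KR+\frac{2KM}{\gamma}T\sqrt{MKL/R}$ over $R$ gives order $K^{4/3}M\gamma^{-2/3}L^{1/3}T^{2/3}$, which is not $O(\frac{KM}{\gamma}T^{2/3})$ up to logarithms unless $K\gamma=O(1)$.

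The extra factor $1/K$ in your per-arm sample count has no source. For each arm $a$ and each $r\le R$, MURaS draws a uniform $K$-set containing $a$ and a uniform permutation, and queries only $B$ and $B\cup\{a\}$: two queries per sample, not $K$. Your alternative of querying all $K$ prefixes of a permuted uniform $K$-set also yields one sample per query, hence $\Theta(R_{\mathrm{tot}}/M)$ samples per arm in expectation. In that variant the per-arm counts are random, so you would also need a union bound over counts.

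With the per-arm loop, each arm has exactly $R$ independent samples in $[-1,1]$. Then $\epsilon=\sqrt{2\log(2M/\delta)/R}$ with no $M$ or $K$ under the root, and the exploration phase costs at most $2K\cdot 2MR$. Taking $\delta=1/T$, the total is $4KMR+T\cdot\frac{4KM}{\gamma}\sqrt{2\log(2MT)/R}+2K$. Choosing $R=\tilde\Theta(T^{2/3})$ yields $\tilde O\bigl(KM(1+\gamma^{-1})T^{2/3}\bigr)$. This is the stated bound when $\gamma=O(1)$, which is the same absorption of the exploration term that the paper makes.
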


\noindent

MURaS finds Shapley value of every arm uniformly and thus  incurs a huge fairness regret during the exploration rounds. 
We next present \ouralgo, which combines exploration and exploitation through adaptive optimistic updates. While asymptotic fairness regret still remains the same, i.e,  $\tilde O(T^{2/3})$, 
our experiments also show that the \ouralgo\ achieves much lesser fairness regret as compared to MURaS.

\begin{algorithm}[h!]
\caption{\textbf{IW-KSVFair: Importance-Weighted $K$-Shapley Value Fairness}}
\label{algo:iw_ksvfair}
\KwInput{$T,M,K,\delta,\epsilon$}
\KwOutput{Selected sets $S_t$ and policies $\pi_t$}

Initialize $N_{a}\gets 0$, $\hat\phi_a\gets 0\ \forall a\in[M]$, $\mathcal{P}_{t,0} = \phi\ \forall t$\

$\eta \gets \Theta(\frac{M^{2/3}}{K^{1/3}T^{1/3}} )$\

\For{$t=1$ \KwTo $\lfloor T/K \rfloor$}{
    \For{$a\in[M]$} { \label{line:4}
    
        $\hat{\phi}_{t,a}\gets
\min\{\max\{\bar\phi_{t,a},\epsilon\},1\} \label{line:5}$\;
$
        \tilde\pi_t(a)
        \gets
        \frac{K\hat{\phi}_{t,a}}{\sum_{j=1}^M\hat{\phi}_{t,j}} \label{line:6}
$\;
$\pi_t = arg\min_{p\in \mathcal{S}_{M,K}}||p-\tilde\pi_t||_1$ 
\;
    }
Let $P_t$ be a distribution over $K$-sets with marginals $\pi_t$\ \label{line:7};

    $P_t^{\mathrm{mix}}(S)
        =
        (1-\eta)P_t(S)
        +
        \eta\frac{1}{\binom{M}{K}},
        \quad |S|=K$\; \label{line:8}
    
    $q_t(a) = (1-\eta)\pi_t(a) + \eta\frac{K}{M}$\; \label{line:9}
Sample $S_t\sim P_t^{\mathrm{mix}}$\; \label{line:10}

    Sample a permutation $\sigma_t = (a_{t,1}, a_{t,2}, \ldots, a_{t,K})$ uniformly at random from all permutations of $S_t$\; \label{line:11}

    \ForEach{$r\in [K]$}{
    
    Define $\mathcal{P}_{t,r} = \{a_{t,1}, a_{t,2}, \ldots, a_{t,r}\}$ to be the prefixes for arm $a_{t,r}$\;

    Pull $\mathcal{P}_{t,r}$ and observe $\hat{V}(\mathcal{P}_{t,r})$\;
        
        $\Delta_t(a_{t,r})\gets 
        \hat{V}\!\left(\mathcal{P}_{t,r}\right)
        -\hat{V}\!\left(\mathcal{P}_{t,r-1}\right)
        $, $w_t(a_{t,r},S_t)
            \gets
            \frac{q_t(a_{t,r})}{\binom{M-1}{K-1}P_t^{\mathrm{mix}}(S_t)}$\; \label{line:15}
        $Z_t(a_{t,r})\gets w_t(a_{t,r},S_t)\Delta_t(a_{t,r})$, $N_{a_{t,r}}\gets N_{a_{t,r}}+1$\; \label{line:16}
$\bar\phi_{a_{t,r}} \gets \frac{(N_{a_{t,r}}-1)\bar{\phi}_{a_{t,r}} + Z_t(a_{t,r})}{N_{a_{t,r}}}$\;\label{line:17}
    }
}
$t' \gets Kt$\;
\For{$t=t'+1$ \KwTo $T$}{$S_t \sim P_{t'}^{mix}$} 
\end{algorithm}

\subsection{\ouralgo : Importance Weighted $K$-Shapley Fair Algorithm }

Algorithm~\ref{algo:iw_ksvfair} presents \ouralgo, an importance-weighted algorithm for learning meritocratic fair policies. At each round, the algorithm first computes clipped non-negative estimates $\hat{\phi}_{t,a}$ of the $K$-Shapley values and converts them into an arm-selection policy $\pi_t$ (Lines~\ref{line:4}--\ref{line:6}). The distribution $P_t$ is then chosen over feasible $K$-sets so that its marginals are given by $\pi_t$ (Line~\ref{line:7}). To ensure sufficient exploration and bounded importance weights, $P_t$ is mixed with the uniform distribution over all $K$-subsets using the mixing parameter $\eta$, resulting in the mixed distribution $P_t^{\mathrm{mix}}$ (Line~\ref{line:8}). In particular, the uniform mixture guarantees that
$
P_t^{\mathrm{mix}}(S)\geq \frac{\eta}{\binom{M}{K}},
$
which implies that the importance weights are bounded by
$
W=\frac{M}{K\eta}.
$ 
The marginal probability of an arm $a$  under the mixed distribution is $q_t(a)$ (Line~\ref{line:9}). The algorithm then samples a set $S_t$ from $P_t^{\mathrm{mix}}$ (Line~\ref{line:10}). 

The challenge is that the $K$-Shapley value is an expectation over all $K$-subsets containing an arm and over all permutations within those subsets. Therefore, using unweighted marginal contribution  would  produce a biased estimate of the true $K$-Shapley value. 
The algorithm samples a random permutation of the selected set and computes the arm's marginal contribution to its predecessor coalition (Lines~\ref{line:11}--\ref{line:15}) and corrects the bias, the algorithm uses importance weighting (Line~\ref{line:16}). This converts samples drawn from the algorithm's adaptive set distribution into conditionally unbiased samples of the $K$-Shapley value, given that the arm is selected. 
The choice of $\eta$ balances the variance introduced by importance weighting
and the fairness regret introduced by uniform exploration. 
Finally, the algorithm updates the running estimate $\bar{\phi}_a$ of the $K$-Shapley value of arm $a$ (Lines~\ref{line:17}).

\begin{theorem}
Assume that the $K$-Shapley values satisfy $\phi_a^K \in [\epsilon,1]$ for all $a\in[M]$. 
Assume that at each round $t$, the distribution $P_t$ over $K$-sets is exactly computable with marginals $\pi_t$. Then, for
$
    \eta
    =
    \Theta(\frac{M^{2/3}}{K^{1/3}T^{1/3}} )
$, the expected cumulative fairness regret of \ouralgo\ satisfies
\[
\mathrm{FR}_T
=
\widetilde{O}\left(
\frac{C}{\epsilon}K^{2/3} M^{2/3} T^{2/3} + \frac{C}{\epsilon}M^{4/3}K^{1/3}T^{1/3}
\right).
\]
Here, $C = \max_t c_t$ with $c_t = \max\{1,||\tilde{\pi}_t||_\infty\}\le K$.
\end{theorem}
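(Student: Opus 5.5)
The regret splits into three parts: the cost of mixing in uniform exploration, the policy error caused by errors in the $K$-Shapley estimates, and the tail phase after the learning loop, which simply reuses the last learned policy.

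\emph{Step 1: unbiasedness and boundedness.} In each learning round the algorithm pulls the $K$ prefixes of a uniformly random permutation of $S_t\sim P^{\mathrm{mix}}_t$, so each outer round consumes $K$ queries. Fix an arm $a$ and condition on $a\in S_t$. The prefix preceding $a$ is then a uniformly random subset $S\subseteq S_t\setminus\{a\}$, with the weight $\frac{|S|!(K-|S|-1)!}{K!}$ appearing in $\psi_a^K(S_t)$. Hence $\mathbb{E}[\Delta_t(a)\mid S_t,\ a\in S_t]=\psi^K_a(S_t)$. Multiplying by $w_t(a,S_t)=q_t(a)/(\binom{M-1}{K-1}P^{\mathrm{mix}}_t(S_t))$ and averaging over $S_t$ conditional on $a\in S_t$ gives
\[
\sum_{S\ni a}\frac{P_t^{\mathrm{mix}}(S)}{q_t(a)}\cdot\frac{q_t(a)}{\binom{M-1}{K-1}P_t^{\mathrm{mix}}(S)}\,\psi^K_a(S)=\phi^K_a .
\]
So every $Z_t(a)$ is a conditionally unbiased sample of $\phi^K_a$ given $\mathcal H_{t-1}$. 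The uniform mixture gives $P_t^{\mathrm{mix}}(S)\ge\eta/\binom MK$. Together with $q_t(a)\le1$ and $|\Delta_t|\le1$, this yields $|Z_t(a)|\le W=M/(K\eta)$.

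\emph{Step 2: concentration.} The sequence $Z_t(a)-\phi^K_a$, indexed by the rounds in which $a$ is selected, is a martingale difference sequence bounded by $W$. By Azuma--Hoeffding (or Freedman) and a union bound over arms and rounds, with probability $1-\delta$ we get $|\bar\phi_{t,a}-\phi^K_a|\le W\sqrt{2\log(MT/\delta)/N_{t,a}}$.

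Each arm enters $S_t$ with probability at least $\eta K/M$. A further concentration step therefore gives $N_{t,a}\gtrsim \eta K t/M$ once $t\gtrsim M\log(MT)/(\eta K)$; the earlier rounds are charged a trivial regret of $2K$ each. Clipping to $[\epsilon,1]$ can only reduce the error, since $\phi^K_a\in[\epsilon,1]$.

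\emph{Step 3: from estimates to policy error.} Using $\hat\phi\ge\epsilon$ and $\sum_j\phi_j\ge M\epsilon$, a ratio perturbation bound gives
\[
\|\tilde\pi_t-\pi^*\|_1\le\frac{2K\|\hat\phi_t-\phi\|_1}{\sum_j\phi_j^K}\lesssim \frac{K}{M\epsilon}\sum_a|\hat\phi_{t,a}-\phi^K_a| .
\]
Projecting onto $\mathcal S_{M,K}$ at most doubles this, because $\pi^*\in\mathcal S_{M,K}$. The factor $c_t=\max\{1,\|\tilde\pi_t\|_\infty\}$ enters here when normalizing the capped marginals, which produces the constant $C$. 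Mixing adds $\|q_t-\pi_t\|_1\le2K\eta$.

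For a prefix query of size $r$, the marginals are $P(a\in\mathcal P_{t,r})=\frac rK q_t(a)$, since the permutation is uniform. So the normalized per-query discrepancy $\frac Kr\sum_a|\cdot|$ equals $\|q_t-\pi^*\|_1$, and each outer round contributes $K\|q_t-\pi^*\|_1$.

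\emph{Step 4: summation and the choice of $\eta$.} Summing over $t\le T/K$ outer rounds gives three contributions:
\begin{itemize}
\item The mixing term contributes $K\eta\cdot T$.
\item The estimation term contributes $K\sum_t \frac{CK}{M\epsilon}\cdot M\cdot W\sqrt{M\log/(\eta K t)}\approx \frac{C}{\epsilon}\frac{M^{3/2}}{K^{1/2}\eta^{3/2}}\sqrt{T}$. The tail phase, which uses $P^{\mathrm{mix}}_{t'}$, is bounded by the same final-round error times the remaining queries and is of the same order.
\item The burn-in contributes $M\log/(\eta K)\cdot K$, which yields the lower-order term $\frac{C}{\epsilon}M^{4/3}K^{1/3}T^{1/3}$.
\end{itemize}
Balancing $K\eta T$ against the estimation term, with the authors' $\eta=\Theta(M^{2/3}K^{-1/3}T^{-1/3})$, gives the leading $\frac{C}{\epsilon}K^{2/3}M^{2/3}T^{2/3}$ term up to logarithmic factors. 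Finally, integrate the high-probability bound with $\delta=1/T$.

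The main obstacle is Step 4's variance accounting. With the crude bound $W$, the estimation error carries $\eta^{-3/2}$, and this need not balance exactly to the stated exponents. My plan is to use Freedman's inequality instead: the conditional second moment of $Z_t(a)$ is bounded by $\mathbb{E}[w_t^2]\le q_t(a)M/(K\eta)$, which replaces $W^2$ by $W$ in the variance term and should recover precisely the stated $\eta$ and rates. I also need to check carefully that the query-normalized regret really equals $\|q_t-\pi^*\|_1$ for prefix queries.
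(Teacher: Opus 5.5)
\textbf{The gap is in Step 4, and the Freedman upgrade you propose does not close it.} Your summation rests on the forced-exploration count bound $N_{t,a}\gtrsim \eta Kt/M$, and that bound alone already spoils the rate.

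Plug it into Freedman's bound. Per arm,
\[
\sqrt{WL/N_{t,a}}\lesssim\sqrt{\frac{M}{K\eta}\cdot\frac{ML}{\eta Kt}}=W\sqrt{L/t}.
\]
Your Step 3 bound then gives $\|q_t-\pi^*\|_1\lesssim \frac{KW}{\epsilon}\sqrt{L/t}+K\eta$. Summing $K$ times this over $t\le T/K$ gives an estimation term of order $\frac{M\sqrt{KT}}{\epsilon\,\eta}$, up to logarithms. This scales as $\eta^{-1}$, whereas the balance you need requires $\eta^{-1/2}$. At the stated $\eta=\Theta(M^{2/3}K^{-1/3}T^{-1/3})$ your bound is of order $T^{5/6}$, and even re-optimizing $\eta$ against $\eta KT$ only gives $T^{3/4}$. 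Freedman removes one factor $\sqrt W$, but charging every arm at the worst-case sampling rate $\eta K/M$ costs another $\eta^{-1/2}$.

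\textbf{The missing idea is the paper's self-normalized counting argument, which needs no lower bound on $N_{t,a}$.}
\begin{itemize}
\item Using the clipping, write $\frac{K}{\widehat\Phi_t}e_{t,a}=\widetilde\pi_t(a)\frac{e_{t,a}}{\widehat\phi_{t,a}}\le\frac{\widetilde\pi_t(a)e_{t,a}}{\epsilon}$.
\item Apply the projection property $\widetilde\pi_t(a)\le C_t\pi_t(a)$ and $(1-\eta)\pi_t\le q_t$ to obtain $\|q_t-\pi^*\|_1\le\frac{4C}{\epsilon}\sum_a q_t(a)e_{t,a}+2\eta K$. This weighted conversion is where $C$ genuinely enters; in your unweighted bound $C$ plays no role.
\item Since $e_{t,a}$ is $\mathcal{H}_{t-1}$-measurable and $\mathbb{E}[\mathbb{I}\{a\in S_t\}\mid\mathcal{H}_{t-1}]=q_t(a)$, you may replace $q_t(a)$ by the indicator in expectation.
\item Then $\sum_t\mathbb{I}\{a\in S_t\}/\sqrt{N_{t,a}\vee1}\le1+2\sqrt{N_{n+1,a}}$. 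Cauchy--Schwarz with $\sum_aN_{n+1,a}=T$ gives $\sum_t\sum_a q_t(a)\sqrt{W/N_{t,a}}\lesssim\sqrt{WMT}$.
\end{itemize}
The estimation term is therefore $\frac{CK}{\epsilon}\sqrt{WMT}=\frac{C}{\epsilon}M\sqrt{KT/\eta}$, which balances against $\eta KT$ exactly at the stated $\eta$. The $T^{1/3}$ term is the second-order Freedman contribution $\frac{CKWM}{\epsilon}$, summed via $\sum_t\mathbb{I}\{a\in S_t\}/N_{t,a}\le2+\log n$. It is not a burn-in cost.

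\textbf{An alternative repair that stays closer to your route.} The clipping and the projection give $\pi_t(a)\ge\min\{1,\widetilde\pi_t(a)\}\ge K\epsilon/M$, so $q_t(a)\ge K\epsilon/M$ independently of $\eta$. Using this count bound instead of $\eta K/M$ restores $T^{2/3}$. It does so with a worse $\epsilon$-dependence, namely $\epsilon^{-3/2}$ in the leading term, in place of $C/\epsilon$.

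\textbf{A minor point.} The post-loop phase consists of fewer than $K$ queries, so it contributes only $O(K^2)$ rather than a term ``of the same order'' as the estimation error.
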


\noindent
\textit{Proof sketch.}
The proof has three main steps. First, proving that conditioned on arm $a$ being selected,
the weighted marginal contribution is an unbiased estimate of its
true $K$-Shapley value. Second, bounding the importance weight by
$W=M/(K\eta)$ using the uniform mixing component guarantees. Third controling the estimation error at the rate $\widetilde{O}(\sqrt{W/N_a}+W/N_a)$ using a Freedman-type concentration inequality. Third, every prefix query generated in a macro-round has
query-normalized fairness loss equal to $\|q_t-\pi^*\|_1$.
Summing over the $K$ prefixes and applying the standard counting
argument gives $\widetilde{O}\!\left(
\frac{CK}{\epsilon}\sqrt{WMT}
+\frac{CKWM}{\epsilon}
+\eta KT
\right)$. Finally, substituting $W=M/(K\eta)$ and balancing the leading
estimation and mixing terms gives the stated bound for
$\eta=\Theta(\frac{M^{2/3}}{K^{1/3}T^{1/3}} )$.
The detailed proof is provided in Appendix~\ref{appendix:regret_iwksvfair}.

The assumption $\phi_a^K \geq \epsilon$ is natural when the candidate set is restricted to nodes with non-negligible merit. For instance, under the Independent Cascade model for influence spread in social networks, adding arm (node) $a$ to a seed set $S$ has marginal contribution $\Delta_a(S)$ that is typically strictly positive because, without being selected as a seed, the probability that $a$ is activated by the other seeds is generally less than one. 
The condition $\phi_a^K \geq \epsilon$ strengthens strict positivity by excluding arms whose influence contribution is negligible, via preliminary screening based on centrality, estimated marginal spread, etc. 
Hence, the assumption applies only to the retained candidate arms and ensures that the merit-proportional policy is well defined and statistically learnable.

\section{Experimental Analysis}
We conduct experiments on a synthetic dataset, and a real-world \textit{Social Influence Maximization} (SIM) dataset.
In SIM, each node acts as a potential influencer, and the goal is to select a seed set to generate diffusion under the Independent Cascade model~\cite{tardos}.
Conventional influence maximization methods tend to repeatedly select a small
group of nodes with high estimated marginal spread, while ignoring other nodes with positive marginal  contributions.
In contrast, \ouralgo\ 
learns selection marginals
proportional to the nodes' $K$-Shapley values,
which leads to more diverse and fair seed selection.

\subsection{Benchmark Algorithms}

\begin{itemize}[leftmargin=*]

    \item \textbf{Uniform Random Sampling (URS):} URS selects a set uniformly from all $\binom{M}{K}$ sets of size $K$. Therefore, every arm has marginal selection probability $K/M$.

    \item \textbf{NoMix:} NoMix is an ablation of IW-KSVFair in which uniform mixing is removed by setting $\eta=0$. It retains the importance-weighted update but directly samples from the adaptive set distribution.

    \item\textbf{NoIW:} 
    NoIW is an ablation of IW-KSVFair that removes only the
    importance-weighted correction. It uses the same mixing
    parameter $\eta$, the same adaptive set distribution $P_t$,
    and the same mixed distribution.
    However, NoIW directly uses the observed marginal
    contribution 
    $
    Z_t(a)=\Delta_t(a)
    $.

    \item \textbf{Fair-CMAB~\cite{subham2}:} 
    Fair-CMABaddresses fairness by guaranteeing the minimum number of pulls for each arm, which is fixed in advance. The fairness thresholds were generated uniformly at random from the interval $\left[\, 0,\; \frac{1}{2\lceil M/K \rceil} - 0.01 \,\right]$~\cite{subham2}.
    Although it enforces quota-based fairness, it remains the only existing work that addresses fairness under the FBF. We include it to contrast static fairness constraints with our meritocratic fairness framework.

    \item \textbf{Explore-then-commit Greedy (ETCG)~\cite{etc}:}
    ETCG is an exploration separated algorithm which first explores the candidate base arms and then greedily commits to the $K$ arms 
    with the highest estimated marginal gain. 

    \item \textbf{Gap-based Exploration (GAP-E)~\cite{top_k}:} GAP-E maintains the  rankings of arms based on estimated Shapley values in each time step and focuses on exploring top-$K$ arms.
\end{itemize}

\begin{figure*}[h!]

    \vspace{0.6em}

\hspace{0.5em}
\includegraphics[
        width=0.99\textwidth
    ]{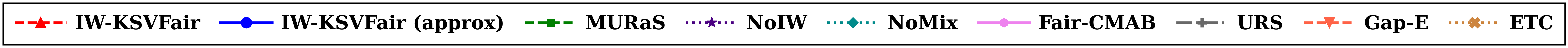}

    \centering
    
    \begin{subfigure}[t]{0.32\textwidth}
        \centering
        \includegraphics[width=\textwidth]{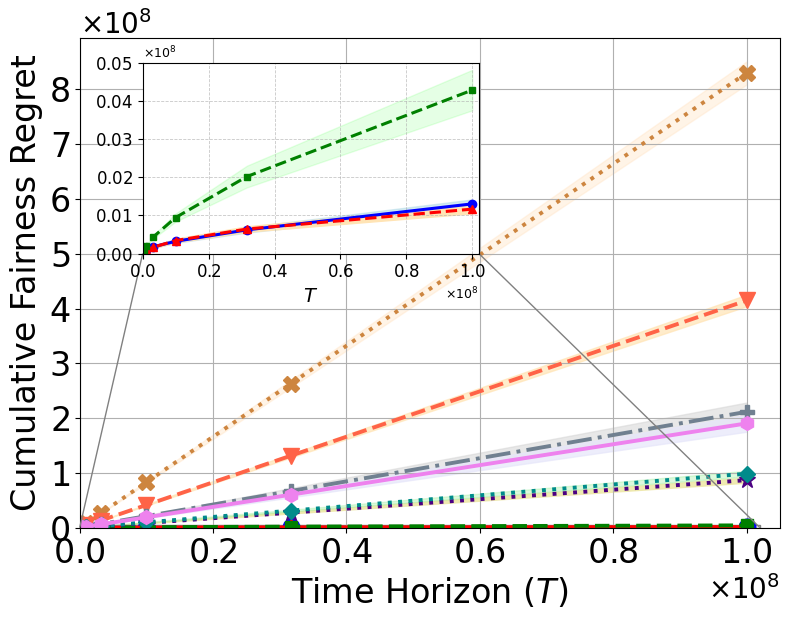}
        \caption{Cumulative Fairness Regret}
        \label{fig:syn_linear}
    \end{subfigure}%
    \hfill
    \begin{subfigure}[t]{0.32\textwidth}
        \centering
        \includegraphics[width=\textwidth]{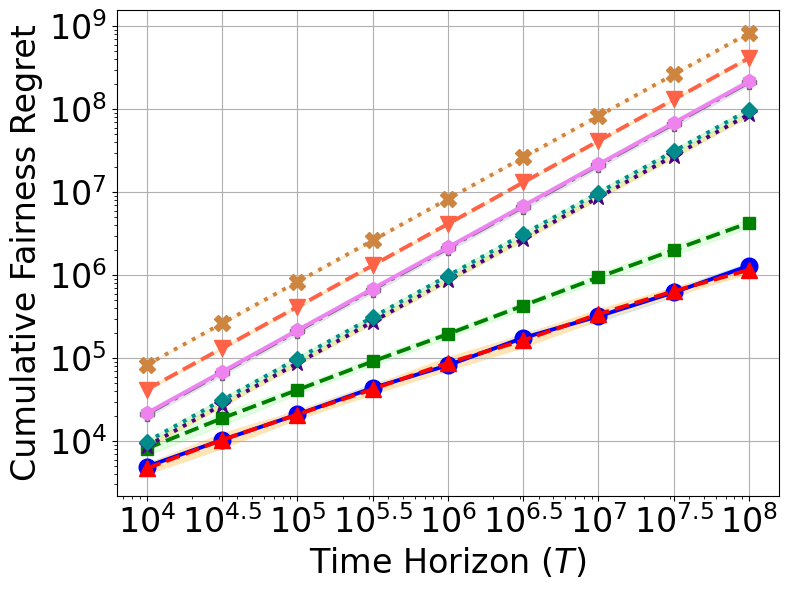}
        \caption{Cumulative Fairness Regret (log-log)}
        \label{fig:syn_log}
    \end{subfigure}
    \hfill
    \begin{subfigure}[t]{0.32\textwidth}
        \centering
        \includegraphics[width=\textwidth]{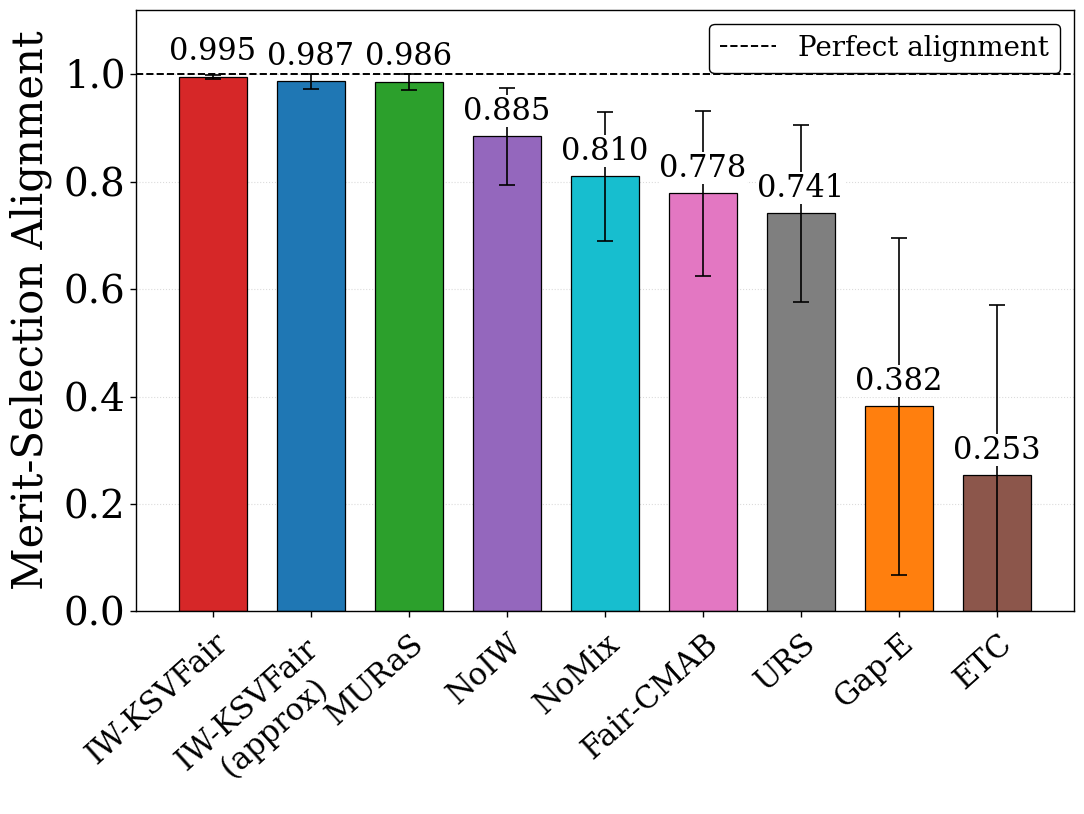}
        \caption{Merit-Selection Alignment at $T=10^8$}
        \label{fig:syn_ratio}
    \end{subfigure}%
    \caption{Comparative analysis of the considered algorithms on the Synthetic dataset.
    }
    \label{fig:exp_synthetic}
\end{figure*}

\begin{figure*}[h!]
    \centering
   
    \begin{subfigure}[t]{0.32\textwidth}
        \centering
        \includegraphics[width=\textwidth]{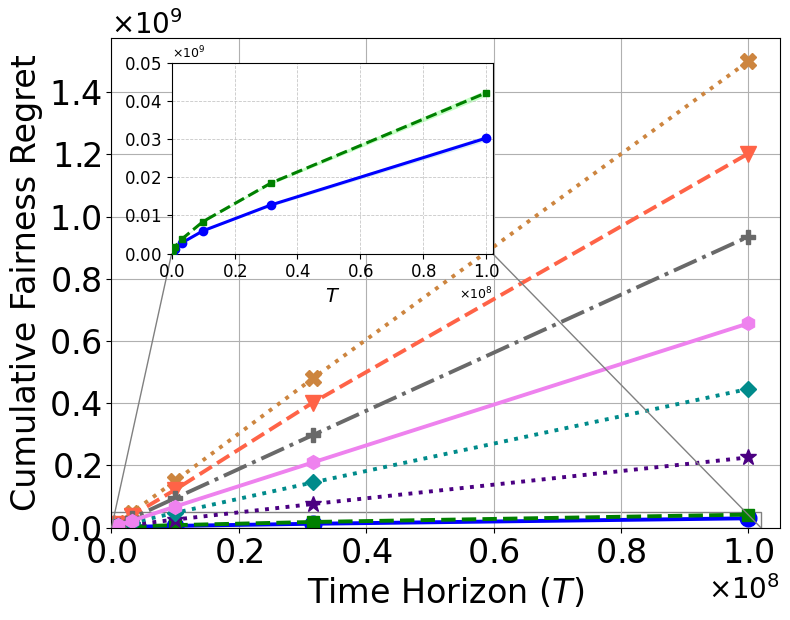}
        \caption{Cumulative Fairness Regret}
        \label{fig:sim_linear}
    \end{subfigure}%
    \hfill
     \begin{subfigure}[t]{0.32\textwidth}
        \centering
        \includegraphics[width=\textwidth]{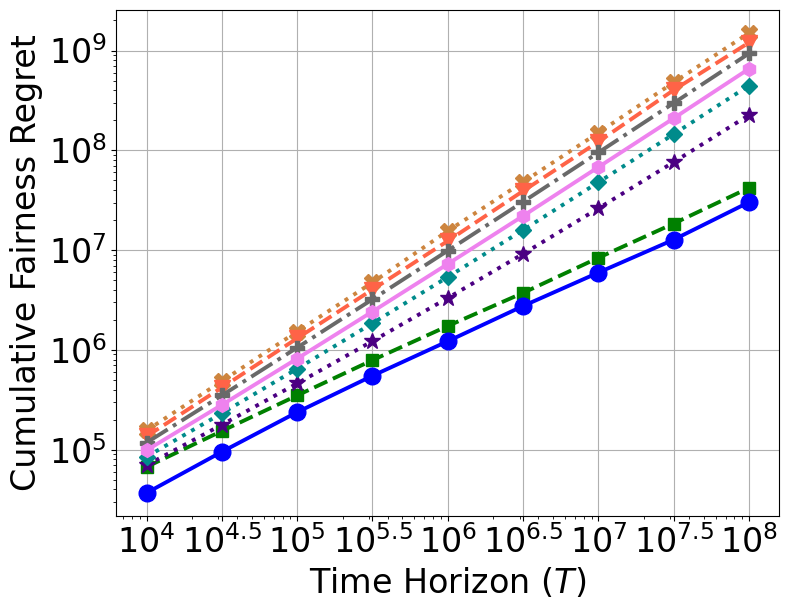}
        \caption{Cumulative Fairness Regret (log-log)}
        \label{fig:sim_log}
    \end{subfigure}
    \hfill 
    \begin{subfigure}[t]{0.32\textwidth}
        \centering
        \includegraphics[width=\textwidth]{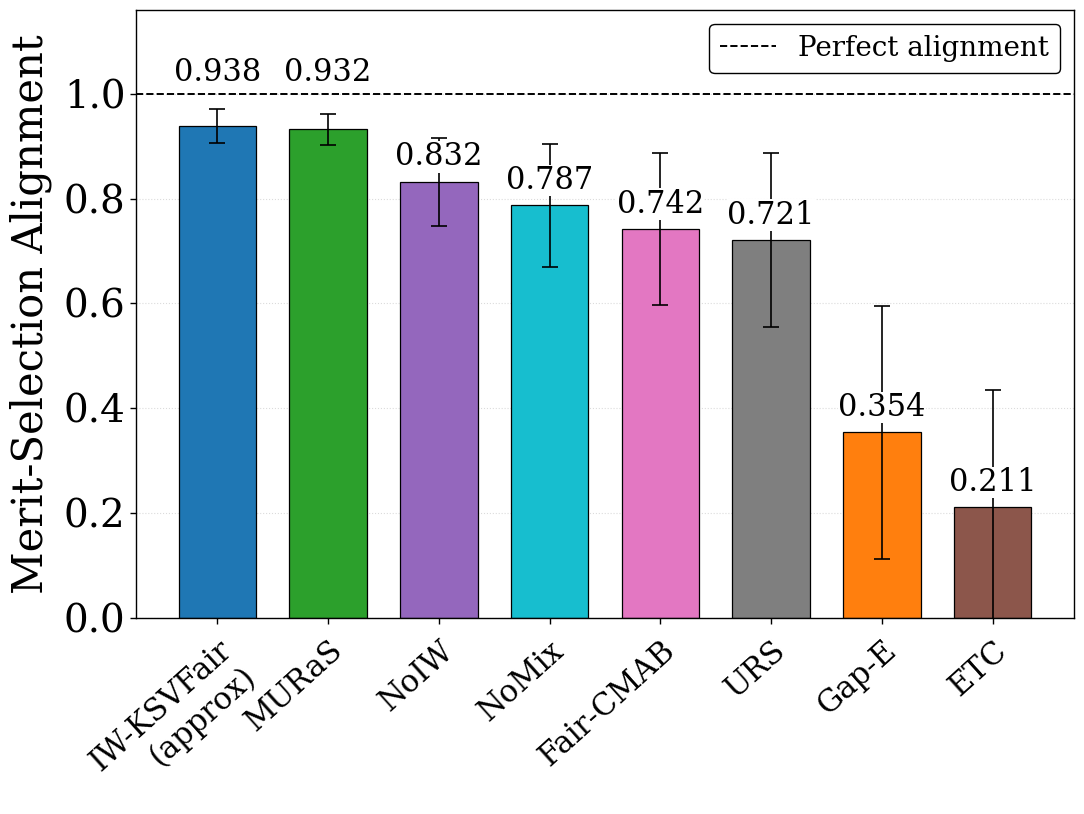}
        \caption{Merit-Selection Alignment at $T=10^8$
        }
        \label{fig:sim_ratio}
    \end{subfigure}%
    \caption{
    Comparative analysis of the considered algorithms on the Social Influence Maximization dataset.
    }
    \label{fig:exp_sim}

    \label{fig:exp3_sim}
\end{figure*}

\subsection{Datasets}

\noindent\textbf{Synthetic dataset.}~
We consider $M=20$ and $K=5$.
For each arm $a\in[M]$, its mean quality is sampled independently as
$
\mu_a \sim \mathcal{U}([0.2,0.9]).
$
Along the lines of \cite{etc},
when a set $S$ is queried at time $t$, the realized quality of each selected arm is
$
X_{a,t} = \mu_a + \varepsilon_{a,t},
$
where $\varepsilon_{a,t}$ is independently sampled from a zero-mean Gaussian distribution with standard deviation $0.1$, truncated to 
$[-0.1,0.1]$.
The observed full-bandit reward is
$
\widehat{V}_t(S)
=
1-\prod_{a\in S}(1-X_{a,t})
$.
\\[-0.5em]

\noindent\textbf{Social Influence Maximization (SIM).}~
We use a subgraph of the Facebook ego-network dataset~\cite{maindataset},
containing 
$M=534$ nodes and $8158$ directed edges. 
We consider $K=10$.
Influence spreads according to the Independent Cascade model, with activation probability $0.1$ for every edge; this is along the lines of \cite{etc}. 
A full-bandit observation $\widehat V_t(S)$ is the fraction of nodes influenced by seed set $S$.
This setting evaluates whether the algorithms selects seed nodes in proportion to the their marginal contributions to influence spread under full-bandit feedback.

\subsection{Experimental Setup}
We evaluate all algorithms on the following values of $T$ (time horizons):
$\{10^x: x = 4,4.5, \ldots, 7.5, 8\}$.
Every experiment is repeated using ten random seeds:
$\{1,2,\ldots,10\}$.
Our experiments
are conducted using Python 3.11  
on a workstation with Intel Xeon w7-2495X CPU @ 2.50 GHz, 64 GB RAM, and NVIDIA RTX PRO 4500 Blackwell GPU with 32 GB VRAM. 
Every evaluation of a coalition value is counted as one full-bandit query. 
Under our experimental setup,
for a single instance at time horizon $T=10^8$, MURaS and \ouralgo\ (approx) each required approximately $80$ seconds on the synthetic dataset, while exact IW-KSVFair required approximately $75000$ seconds ($\approx 21$ hours). On SIM, MURaS and \ouralgo\ (approx) required approximately $200$ and $1500$ seconds, respectively. Hence, the scalable approximation of \ouralgo\ substantially reduces the computational cost compared with exact set enumeration, although it remains more computationally expensive than MURaS because the set distribution needs to be calibrated repeatedly.

In all our experiments, we consider $\epsilon=10^{-4}$.
Based on our theoretical analysis, in order to achieve the said regret bounds, we require $R=\widetilde{\Theta}(T^{2/3})$ for MURaS and $\eta=\Theta(\frac{M^{2/3}}{K^{1/3}T^{1/3}})$ for \ouralgo. 
In our experiments, we set  
$R=T^{2/3}/\Lambda$
and
$\eta=\frac{M^{2/3}}{\Gamma K^{1/3}T^{1/3}}$,
where 
$2MR$ is the number of full-bandit queries performed by MURaS during
exploration, 
and $\eta T$ is the expected number of full-bandit queries corresponding to uniform exploration in \ouralgo.
We use $\Lambda=8, \Gamma=0.8$ for synthetic dataset, and $\Lambda=100, \Gamma=3$ for SIM dataset.
An ablation study over different values of $\Lambda$ and $\Gamma$ for synthetic dataset is presented in Appendix~\ref{appendix: gamma}.

Moreover, constructing the distribution \(P_t\) in Algorithm~\ref{algo:iw_ksvfair} (see line~\ref{line:8}) exactly is computationally expensive, as it is defined over all \(\binom{M}{K}\) feasible \(K\)-sets and must be recalibrated at every policy update so that its induced arm marginals match \(\pi_t\). This computation is feasible for the synthetic setting with \(M=20\) and \(K=5\), but becomes computationally expensive for the SIM setting with \(M=534\) and \(K=10\). We therefore use \ouralgo~(approx), based on conditional Poisson sampling and dynamic programming. The details are provided in Appendix~\ref{appendix:entropy}

\subsection{Experimental Results}

All reported results are averaged over ten random seeds. The shaded regions in the line plots and the error bars in the bar plots represent standard deviation. For the synthetic dataset, the reference policy $\pi^*$ is computed from the exact $K$-Shapley values. For SIM, it is computed using Monte Carlo estimation.
We observed that $\pi^* \in \mathcal{S}_{M,K}$ for both synthetic and SIM datasets, and its detailed computation is provided in Appendix~\ref{appendix:fair_policy}.
The learning algorithms do not have access to this reference policy.
\\[-0.5em]

\noindent\textbf{\textit{Cumulative Fairness Regret:}} 
We first evaluate the cumulative query-normalized fairness regret with respect to $\pi^*$, which selects each arm in proportion to
its true $K$-Shapley value.
From Figure~\ref{fig:exp_synthetic},
we observe that \ouralgo\ (approx) closely matches \ouralgo\ on the synthetic dataset, showing that the scalable approximation of the set distribution does not result in a substantial loss in fairness.
Figures~\ref{fig:syn_linear} and~\ref{fig:sim_linear} show that they achieve the lowest cumulative fairness regret across the considered time horizons on both datasets. 
Note that \ouralgo, its scalable approximation, and MURaS operate at a substantially lower regret scale than the remaining baselines, and so, are barely distinguishable in the full plots; the zoomed-in views highlight their relative performance.
The log-log plots in Figures~\ref{fig:syn_log} and \ref{fig:sim_log}
further show that the slopes for \ouralgo\ and MURaS are close to $2/3$, consistent with their theoretical
$\widetilde{O}(T^{2/3})$ guarantees. In contrast, the remaining algorithms exhibit slopes close to $1$, indicating linear fairness regret.

MURaS also exhibits sublinear regret, but incurs substantially higher cumulative regret because its uniform exploration phase does not adapt to the estimated merits of the arms. The substantially larger regret of NoIW and NoMix demonstrates the importance of both importance-weighted estimation and uniform mixing, respectively. The remaining baselines accumulate considerably higher regret because their selection policies are not designed to match the target $K$-Shapley-proportional marginals.
Their per-round policy mismatch persists even at large time horizons, causing fairness regret to accumulate at a nearly constant rate, resulting in linear regret. Additional results on the cumulative fairness regret of \ouralgo\ and MURaS for different values of $M$ and $K$ are presented in Appendix~\ref{appendix:abalition}.
\\[-0.5em]

\noindent\textbf{\textit{Merit-Selection Alignment:}}
We evaluate how closely the empirical selection frequencies (number of times each arm is pulled) of an algorithm aligns with the arms' merits (normalized $K$-Shapley values). Let $m_a$ denote the normalized merit of arm $a$ and $s_a$ its normalized empirical selection frequency. For each arm $a$, we define the merit-selection alignment score as
$
b_a
=
\min\left\{\frac{m_a}{s_a},\frac{s_a}{m_a}\right\}$,
with $b_a=0$ when $s_a=0$.
The score lies in $[0,1]$, where $b_a=1$ indicates that arm $a$ is selected exactly in proportion to its merit. For each algorithm, we report the mean alignment score
$
B=\frac{1}{M}\sum_{a=1}^{M} b_a.
$
$B = 1$ indicates exact merit-proportional selection.

At time horizon $T=10^8$, Figures~\ref{fig:syn_ratio} and~\ref{fig:sim_ratio} show that \ouralgo\ achieves nearly perfect merit-selection alignment, which is closely matched by its scalable approximation.
MURaS also attains high terminal alignment after committing to its estimated fair policy.
However, its considerably larger cumulative fairness regret shows that terminal alignment alone does not capture the unfairness incurred during its uniform exploration phase. 
NoIW and NoMix obtain lower alignment, demonstrating the importance of unbiased estimation and sufficient exploration, respectively.
URS and Fair-CMAB provide exposure without adapting it to heterogeneous merits, whereas GAP-E and ETCG concentrate primarily on highly ranked arms and therefore exhibit considerably weaker merit-proportional alignment.
An armwise comparison of normalized merit and empirical selection frequency is provided in Appendix~\ref{appendix:merit_sel_freq}.

\section{Conclusion}

In this work, we introduced a novel framework for ensuring meritocratic fairness in CMAB under full-bandit feedback. Unlike existing fairness models that rely on pre-defined quotas or exposure-based measures, our approach uses the Shapley value to estimate each arm’s contribution and selects arms in proportion to their merit. We proposed the \ouralgo\ algorithm, which adaptively estimates Shapley values while mitigating the effect of noise from stochastic feedback. Our theoretical analysis showed that \ouralgo\ achieves sublinear fairness regret, ensuring that fairness improves consistently with time. 
Our experiments show that \ouralgo\ learns $K$-Shapley value-proportional selection marginals substantially earlier than uniform explore-then-commit learning. Moreover, its scalable approximation closely matches it without a substantial loss in fairness.
In future work, an important theoretical direction is to extend our regret guarantees to the practically relevant setting in which the distribution $P_t$ is computed only approximately; currently, we validate the effect of this approximation only empirically. We also plan to extend the framework to dynamic and non-stationary environments, where arms may arrive or depart and their merits may evolve over time.

\ifCLASSOPTIONcompsoc
\else
\fi

\ifCLASSOPTIONcaptionsoff
  \newpage
\fi

\bibliographystyle{IEEEtran}
\bibliography{references}

\newcounter{appsec}
\renewcommand{\theappsec}{\Alph{appsec}}

\newcommand{\appsection}[2]{%
  \refstepcounter{appsec}%
  \section*{\theappsec. #1}%
  \label{#2}%
}

\newcounter{appsubsec}[appsec]
\renewcommand{\theappsubsec}{\theappsec.\arabic{appsubsec}}

\newcommand{\appsubsection}[2]{%
  \refstepcounter{appsubsec}%
  \subsection*{\theappsubsec. #1}%
  \label{#2}%
}

\onecolumn
\appendix
\setcounter{appsec}{0}


\appsection{Proof of Theorem 1 (Properties and Uniqueness of $\bm{K}$-Shapley Value)}{appendix:proof_of_uniqueness}

\appsubsection{Axiomatic Properties}{}
\begin{axiom}[Symmetry]
Players $i$ and $j$ are said to be symmetric if 
$V(S \cup \{i\}) = V(S \cup \{j\})$, 
$ \forall\, S \subseteq [M] \setminus \{i,j\}, \ |S| \le K - 1.$ $K$-Shapley value is said to satisfy symmetry property if, for all pairs of symmetric players $i, j \in [M]$, $\phi_i^K = \phi_j^K$.
\end{axiom}

\noindent

\begin{axiom}[Linearity]
Let $([M], V_1, K)$ and $([M], V_2, K)$ be any two R-Games, and let $\alpha,\beta \in \mathbb{R}$. 
Define a new R-Game $([M],\, \alpha V_1 + \beta V_2, K)$ such that, for any coalition $S \subseteq [M]$ with $|S| \le K$,
$
(\alpha V_1 + \beta V_2)(S) = \alpha V_1(S) + \beta V_2(S).
$
The $K$-Shapley value satisfies the \textit{linearity} property if, $\forall i \in [M]$,
$
\phi_i^K(\alpha V_1 + \beta V_2) 
= \alpha\phi_i^K(V_1) + \beta\phi_i^K(V_2).
$
\end{axiom}

\begin{axiom}[Null Player]
A player $i \in [M]$ is called a null player if
$
V(S \cup \{i\}) = V(S), \forall\, S \subseteq [M] \setminus \{i\}, |S| \le K-1.
$ K-Shapley value satisfies the null player property if
$\phi_i^K = 0.$
\end{axiom}

For any base arm $i \in [M]$, let its marginal contribution to a coalition $S \subseteq [M] \setminus \{i\}$ of size $\le K-1$ be defined as $\Delta_i(S) = V(S \cup \{i\}) - V(S)$.
Let $\psi_i^K(S_k) = \sum_{S \subseteq S_k \setminus \{i\}}
\frac{|S|! \, (K - |S| - 1)!}{K!}\Delta_i(S)$ represent the average of marginal contributions of $i$ for a set $S_k$ across all possible coalitions $S\subseteq S_k\setminus \{i\}$. Then, the $K$-Shapley value $\phi_i^K$ denotes the expected value of $\psi_i^K(S_k)$ over all possible such sets $S_k$ of size $K$ and containing $i$, i.e.,
\begin{align*}
\phi_i^K = 
\sum_{S_k \subseteq [M] : |S_k| = K, i \in S_k}
\frac{1}{\binom{M - 1}{K - 1}}\psi_i^K(S_k)
\end{align*}

\begin{align*}
\phi_i^{K}
= 
\sum_{\substack{S_K \subseteq [M] : |S_K|=K,\, i \in S_K}}
\sum_{\substack{S \subseteq S_K \setminus \{i\}}}
\frac{|S|!\,(K-|S|-1)!}{\binom{M-1}{K-1}\,K!}
\big[V(S\cup\{i\}) - V(S)\big]
\end{align*}

\noindent We now show that $\phi_i^{K}$ satisfies the symmetry, 
linearity, and null player, K-efficiency properties.\\

\appsubsection{Proof of Symmetry Axiom}{appendix:symmetry}

If two arms $i, j \in [M]$ contribute equally to every possible subset of arms of size up to $K$,
then their $K$-Shapley values are identical, i.e.,
$$
\phi_i^K = \phi_j^K.
$$

Here, $M$ denotes the total number of arms, and $[M] := \{1,2,\dots,M\}$ denotes the corresponding
set of arms. Throughout this proof, coalitions are subsets of $[M]$.

Let the valuation function of any coalition $S \subseteq [M]$ be denoted by $V(S)$.
For an arm $i \in [M]$, its marginal contribution to a coalition
$S \subseteq [M] \setminus \{i\}$ is defined as

$$
\Delta_i(S) = V(S \cup \{i\}) - V(S).
$$

The $K$-Shapley value of arm $i$ is given by

$$
\phi_i^K =
\sum_{\substack{S_k \subseteq [M]: |S_k| = K,\\ i \in S_k}}
\;\sum_{S \subseteq S_k \setminus \{i\}}
\frac{|S|!\,(K - |S| - 1)!}{\binom{M-1}{K-1}K!}
\Delta_i(S).
$$

We aim to show that $\phi_i^K=\phi_j^K$ under the symmetry
condition
\[
V(S\cup\{i\})=V(S\cup\{j\}),
\qquad
\forall S\subseteq[M]\setminus\{i,j\},\ |S|\leq K-1.
\]

We show that every marginal contribution term appearing in the $K$-Shapley value of arm $i$
can be paired with a unique marginal contribution term for arm $j$ that has
(i) the same numerical value and
(ii) the same combinatorial weight.
Let $\mathcal{S}_i$ denote all possible subsets of $[M]$ that contain player $i$. 
We now consider all $K$-sized coalitions $S_k \in \mathcal{S}_i$ and analyze them case by case.

\textbf{Case 1:} $S_k$ contains $j$.
Consider any subset $S \subseteq S_k \setminus \{i\}$.
There are two subcases:

\textit{(a) Subcase 1a: $j \notin S$.}

In this case, $S \subseteq S_k \setminus \{i,j\}$.
The marginal contribution of arm $i$ to $S$ is

$$
\Delta_i(S) = V(S \cup \{i\}) - V(S).
$$

To compare with arm $j$, we consider the corresponding term in the $K$-Shapley expansion of
$\phi_j^K$, obtained by interchanging the roles of $i$ and $j$.
The marginal contribution of arm $j$ is

$$
\Delta_j(S) = V(S \cup \{j\}) - V(S).
$$

Since $S \subseteq [M] \setminus \{i,j\}$, the symmetry condition implies
$V(S \cup \{i\}) = V(S \cup \{j\})$.
Therefore,
$$
\Delta_i(S) = \Delta_j(S).
$$

Moreover, the combinatorial weight
$$
\frac{|S|!\,(K - |S| - 1)!}{\binom{M-1}{K-1}K!}
$$
depends only on the size $|S|$ and not on the identity of the arm.
Thus, the corresponding terms in $\phi_i^K$ and $\phi_j^K$ are identical.
\\

\textit{(b) Subcase 1b: $j \in S$.}

In this case, we can write

$$
S = S' \cup \{j\},
\qquad
\text{where } S' \subseteq S_k \setminus \{i,j\}.
$$

The marginal contribution of arm $i$ is

$$
\Delta_i(S' \cup \{j\})
= V(S' \cup \{i,j\}) - V(S' \cup \{j\}).
$$

To compare with arm $j$, we consider the corresponding subset
$S' \cup \{i\} \subseteq S_k \setminus \{j\}$.
The marginal contribution of arm $j$ is

$$
\Delta_j(S' \cup \{i\})
= V(S' \cup \{i,j\}) - V(S' \cup \{i\}).
$$

Since $S'\subseteq[M]\setminus\{i,j\}$, symmetry condition gives
\[
V(S'\cup\{i\})=V(S'\cup\{j\}).
\]

Moreover, $V(S'\cup\{i,j\})$ is common to both marginal
contributions. Therefore,
\[
\Delta_i(S'\cup\{j\})
=
\Delta_j(S'\cup\{i\}).
\]

The sizes of the subsets $S' \cup \{j\}$ and $S' \cup \{i\}$ are equal, and hence their
combinatorial weights in the $K$-Shapley formula are also identical.

Combining Subcases (a) and (b), we conclude that for every coalition $S_k$ containing both
$i$ and $j$, all marginal contribution terms in $\phi_i^K$ have matching terms in
$\phi_j^K$ with the same value and the same weight.

\bigskip

\textbf{Case 2:} $S_k$ does not contain $j$.

For any $S_k \in \mathcal{S}_i$, define the corresponding coalition

$$
S_k' := (S_k \setminus \{i\}) \cup \{j\}.
$$

Then $S_k'$ satisfies $|S_k'| = K$, $j \in S_k'$, and $i \notin S_k'$.

Now consider any subset $S \subseteq S_k \setminus \{i\}$.
The same subset $S$ also satisfies $S \subseteq S_k' \setminus \{j\}$.
The marginal contribution of arm $i$ to $S$ is

$$
\Delta_i(S) = V(S \cup \{i\}) - V(S),
$$

while the marginal contribution of arm $j$ to the corresponding subset is

$$
\Delta_j(S) = V(S \cup \{j\}) - V(S).
$$

Since $S \subseteq [M] \setminus \{i,j\}$, the symmetry condition implies
$V(S \cup \{i\}) = V(S \cup \{j\})$, and hence

$$
\Delta_i(S) = \Delta_j(S).
$$

The subset $S$ has the same size in both cases, so the combinatorial weight in the
$K$-Shapley formula is unchanged.
Thus, every term in $\phi_i^K$ arising from coalitions in $\mathcal{S}_i$ has a
corresponding term in $\phi_j^K$ with identical value and weight.

Across all coalitions $S_k \subseteq [M]$ of size $K$, every marginal contribution term
contributing to the $K$-Shapley value of arm $i$ has a unique corresponding term for arm $j$
with the same numerical value and the same combinatorial weight.
Therefore,

$$
\phi_i^K = \phi_j^K.
$$

\appsubsection{Linearity (Additivity) Axiom}{appendix:linearity}

Let $V_1$ and $V_2$ be two $K$-restricted valuation
functions defined for all $S\subseteq[M]$ with $|S|\leq K$.
For $\alpha,\beta\in\mathbb{R}$, define
\begin{align*}
V(S) = \alpha V_1(S) + \beta V_2(S). 
\end{align*}

We need to show that
\begin{align*}
\phi_i^{K}(V) = \alpha\phi_i^{K}(V_1) + \beta\phi_i^{K}(V_2) .
\end{align*}

From the definition of $K$-Shapley Value,
\begin{align*}
\phi_i^{K}(V)
&= 
\sum_{S_K}\sum_S
\frac{|S|!\,(K-|S|-1)!}{\binom{M-1}{K-1}\,K!}
\big[V(S\cup\{i\}) - V(S)\big]
\end{align*}
Using $V(S)=\alpha V_1(S)+\beta V_2(S)$ gives
\begin{align*}
V(S\cup\{i\}) - V(S)
= \alpha[V_1(S\cup\{i\}) - V_1(S)]\\
+ \beta[V_2(S\cup\{i\}) - V_2(S)].
\end{align*}

Now, substituting this into the summation and splitting the sum,
\begin{align*}
\phi_i^{K}(V)
&= 
\alpha\sum_{S_K}\sum_S
\frac{|S|!\,(K-|S|-1)!}{\binom{M-1}{K-1}\,K!}
[V_1(S\cup\{i\}) - V_1(S)] \\
&\quad + \beta\sum_{S_K}\sum_S
\frac{|S|!\,(K-|S|-1)!}{\binom{M-1}{K-1}\,K!}
[V_2(S\cup\{i\}) - V_2(S)].
\end{align*}

Each summation above is exactly $\phi_i^{K}(V_1)$ or $\phi_i^{K}(V_2)$.
Therefore,
\begin{align*}
\phi_i^{K}(V) = \alpha\phi_i^{K}(V_1) + \beta\phi_i^{K}(V_2).
\end{align*}

\appsubsection{Null Player Axiom}{appendix:null}

Suppose arm $i$ is a null player, i.e.,
$
V(S\cup\{i\})=V(S),
\forall S\subseteq[M]\setminus\{i\}
\text{ with } |S|\leq K-1.
$
Then $V(S\cup\{i\}) - V(S) = 0$ for all $S$, and hence
\begin{align*}
\phi_i^{K}
= 
\sum_{S_K}\sum_S
\frac{|S|!\,(K-|S|-1)!}{\binom{M-1}{K-1}\,K!}\times 0 = 0
\end{align*}
Since the $K$-Shapley value measures the average marginal contribution of each arm across all $K$-coalitions,
a player who adds no value has zero marginal contribution everywhere,
and therefore a zero average.\\

\appsubsection{K-Efficiency Axiom}{appendix:efficiency}

The $K$-Shapley value satisfies a modified notion of efficiency suitable for budgeted settings. 
Unlike the classical Shapley value, which distributes the total value of the grand coalition $[M]$ among all players, 
the $K$-Shapley value operates under a restricted setting where only coalitions of size at most $K$ are feasible. 
Its $K$-efficiency property ensures that the total contribution distributed across all $M$ players 
is consistent with the average valuation of all 
coalitions of size $K$.
Specifically, 
$$
\sum_{i \in [M]} \phi_i^{K}
=
\frac{1}{\binom{M-1}{K-1}}
\sum_{\substack{T_K \subseteq [M]\\ |T_K|=K}} V(T_K) .
$$

\noindent
To establish this condition, we first verify that the above equality holds directly from the definition of the $K$-Shapley value. 
Then, we interpret it in terms of expected coalition values to provide understanding of its meaning.

\bigskip
\noindent
Starting from the formulation of the $K$-Shapley value for player $i$:
$$
\phi_i^{K}
= 
\sum_{\substack{S_K \subseteq [M] : |S_K|=K,\, i \in S_K}}
\sum_{\substack{S \subseteq S_K \setminus \{i\}}}
\frac{|S|!\,(K-|S|-1)!}{\binom{M-1}{K-1}\,K!}
\big[V(S\cup\{i\}) - V(S)\big],
$$

Summing over all $M$ players:
$$
\sum_{i \in [M]} \phi_i^{K}
= 
\sum_{i \in [M]} 
\sum_{\substack{S_K \subseteq [M]: |S_K|=K,\, i \in S_K}}
\sum_{\substack{S \subseteq S_K \setminus \{i\}}}
\frac{|S|!\,(K-|S|-1)!}{\binom{M-1}{K-1}\,K!}
\big[V(S\cup\{i\}) - V(S)\big]
$$

Rearranging the order of summation, we group terms by each $K$-coalition $S_K$:
$$
\sum_{i \in [M]} \phi_i^{K}
=
\sum_{\substack{S_K \subseteq [M]: |S_K|=K}}
\sum_{i \in S_K}
\sum_{\substack{S \subseteq S_K \setminus \{i\}}}
\frac{|S|!\,(K-|S|-1)!}{\binom{M-1}{K-1}\,K!}
\big[V(S\cup\{i\}) - V(S)\big]
$$

\noindent
For each fixed coalition $S_K$, the inner double summation over $i$ and $S$ 
is exactly the efficiency decomposition of the standard Shapley value applied within that $K$-coalition, i.e.,
$$
\sum_{i \in S_K} \sum_{S \subseteq S_K \setminus \{i\}}
\frac{|S|!\,(K-|S|-1)!}{K!}[V(S\cup\{i\}) - V(S)] = V(S_K)
$$

Substituting this back, we get:
$$
\sum_{i \in [M]} \phi_i^{K}
=
\frac{1}{\binom{M-1}{K-1}}
\sum_{\substack{S_K \subseteq [M]\\ |S_K|=K}} V(S_K)
$$

Thus, the total of all players’ $K$-Shapley values equals a scaled aggregate 
of the valuations of all $K$-coalitions:
$$
\sum_{i \in [M]} \phi_i^{K}
=
\frac{1}{\binom{M-1}{K-1}}
\sum_{\substack{T_K \subseteq [M]\\ |T_K|=K}} V(T_K)
$$

\noindent
When $K = M$, we have $\binom{M-1}{M-1} = 1$, and hence the expression 
reduces to the standard Shapley efficiency condition:
$$
\sum_{i \in [M]} \phi_i^{M} = V([M])
$$

\noindent
Therefore, the $K$-Shapley value generalizes the classical efficiency property 
by ensuring that the total distributed contribution across all players 
is consistent with the expected coalition value under a $K$-restricted coalition framework.

\bigskip

\textbf{Part-II of the Theorem}

\appsubsection{Uniqueness of the $K$-Shapley Value}{appendix:unique}

We want to prove that,
for every $K$-restricted valuation function
$V$ defined on coalitions $S\subseteq[M]$ with $|S|\leq K$
and satisfying $V(\emptyset)=0$,
there exists a unique allocation function $\phi^K(V) = (\phi_1^K(V), \ldots, \phi_M^K(V))$ that satisfies the symmetry, linearity, null player and $K$-efficiency properties. The unique allocation satisfying these four properties is the $K$-Shapley value.\\

\noindent
\textbf{Definition :}
Let $D_K \subseteq M$ be a nonempty coalition with $|D_K| \le K$.
The corresponding $K$-restricted carrier game $([M], u_{D_K})$ is defined, for every feasible coalition 
$S \subseteq [M]$ with $|S| \le K$, as

$$
u_{D_K}(S) =
\begin{cases}
1, & \text{if } D_K \subseteq S,\\[4pt]
0, & \text{otherwise.}
\end{cases}
$$

\bigskip
\noindent\textbf{Claim 1:} Every $K$-restricted game $(M, V)$ is a linear combination of carrier games.

\begin{proof}
The space of $K$-restricted coalitional games over the set of players $M$ 
is a vector space of dimension 
$\displaystyle \sum_{k=1}^{K} \binom{M}{k}$, 
corresponding to all nonempty feasible coalitions of size at most $K$. 
The number of carrier games equals the number of such nonempty feasible coalitions, 
namely $\sum_{k=1}^{K} \binom{M}{k}$. 
To prove the above, it suffices to show that the carrier games are linearly independent 
over $\mathbb{R}^{\sum_{k=1}^{K} \binom{M}{k}}$; this will imply that they form a linear basis 
of the space of $K$-restricted games. 
Indeed, every set of $\sum_{k=1}^{K} \binom{M}{k}$ independent vectors in a vector space of the same dimension 
forms a basis for that space, and therefore every element of the vector space can be written as a linear combination 
of the basis elements.

Suppose, by contradiction, that the carrier games are linearly dependent. 
Then there exists a linear combination of carrier games with non-zero coefficients that sums to the zero vector. 
In other words, there exist real numbers $\alpha_{D_K}$, not all zero, such that
$$
\sum_{\substack{D_K \subseteq [M],\, D_K \neq \emptyset,\\ |D_K| \le K}} 
\alpha_{D_K} \, u_{D_K}(S) = 0,
\qquad \forall\, S \subseteq M \text{ with } |S| \le K.
$$

Let
$$
\mathcal{D} = \{\, D_K \subseteq [M] : D_K \neq \emptyset,\, |D_K| \le K,\, \alpha_{D_K} \neq 0 \,\}
$$

be the set of all feasible coalitions with non-zero coefficients in the linear combination above. 
Since we assumed that not all coefficients are zero, the set $\mathcal{D}$ is nonempty. 
Let $S_0 \in \mathcal{D}$ be a minimal coalition in $\mathcal{D}$; that is, there is no coalition in $\mathcal{D}$ 
strictly contained in $S_0$.

We will show that
$$
\sum_{\substack{D_K \subseteq [M],\, D_K \neq \emptyset,\\ |D_K| \le K}} 
\alpha_{D_K}\, u_{D_K}(S_0) \neq 0,
$$
in contradiction to the previous equation.

Note that
\begin{align*}
\sum_{\substack{D_K \subseteq [M],\, D_K \neq \emptyset,\\ |D_K| \le K}} 
\alpha_{D_K}\, u_{D_K}(S_0)
&=\sum_{\substack{D_K \subset S_0,\, D_K \neq \emptyset,\\ |D_K| \le K}} 
\alpha_{D_K}\, u_{D_K}(S_0)
+ \alpha_{S_0}\, u_{S_0}(S_0)
+ \sum_{\substack{D_K \nsubseteq S_0,\, |D_K| \le K}} 
\alpha_{D_K}\, u_{D_K}(S_0).
\end{align*}

We have $\alpha_{D_K} = 0$ for every ${D_K}$ satisfying ${D_K} \subset S_0$, 
since $S_0$ is a minimal coalition in $\mathcal{D}$. 
For every ${D_K}$ satisfying ${D_K} \nsubseteq S_0$, 
the definition of a carrier game implies that $u_{D_K}(S_0) = 0$. 
Therefore,
\begin{align*}
\sum_{\substack{D_K \subseteq [M],\, D_K \neq \emptyset,\\ |D_K| \le K}} 
\alpha_{D_K}\, u_{D_K}(S_0)
&= \alpha_{S_0}\, u_{S_0}(S_0)
= \alpha_{S_0}
\neq 0.
\end{align*}

This is the contradiction. Hence, the assumption that the carrier games are linearly dependent is false. This implies that the carrier games are linearly independent in the space of $K$-restricted coalitional games.
\end{proof}

\textbf{Claim 2:}
Let $D_K \subseteq [M]$ be a nonempty feasible coalition of size $|D_K| \leq K$, and let $\alpha \in \mathbb{R}$ be a constant. 
Define a $K$-restricted carrier game $([M], u_{D_K,\alpha})$ as follows. 
For every feasible coalition $S \subseteq M$ with $|S| \le K$,
$$
u_{D_K,\alpha}(S) =
\begin{cases}
\alpha, & \text{if } D_K \subseteq S, \\[4pt]
0, & \text{otherwise.}
\end{cases}
$$

If $\rho^K$ is a solution concept satisfying $K$-efficiency, symmetry, and the null player property, then
$$
\rho^K_i([M]; u_{D_K,\alpha}) =
\begin{cases}
\displaystyle  C_{|D_K|}\frac{\alpha}{|D_K|}, & \text{if } i \in D_K, \\[8pt]
0, & \text{if } i \notin D_K.
\end{cases}
$$
Here, $C_{|D_K|} = \frac{\dbinom{M-|D_K|}{K-|D_K|}}{\dbinom{M-1}{K-1}}$.

\begin{proof}
In the $K$-restricted game $([M]; u_{D_K,\alpha})$, 
every player $i \notin D_K$ is a null player, 
and every pair of players in $D_K$ are symmetric. 
By the $K$-efficiency property, the total contribution of these players must equal to
\begin{align*}
\sum_{i\in D_K} \rho^K_i([M]; u_{D_K,\alpha})
&= \frac{1}{\binom{M-1}{K-1}}\left(\sum_{D_K \subseteq T_K, |T_K|=K}V(T_K) + \sum_{D_K \nsubseteq T_K, |T_K|=K}V(T_K)\right)\\
&=\frac{1}{\binom{M-1}{K-1}}\sum_{D_K \subseteq T_K, |T_K|=K}\alpha\\
&=\frac{\binom{M-|D_K|}{K-|D_K|}}{\binom{M-1}{K-1}}\alpha
\end{align*}

By the symmetry property, all players in $D_K$ must receive equal value. 
Therefore, each player in $D_K$ receives $C_{|D_K|}\frac{\alpha}{|D_K|}$. 
For every $i\notin D_K$, the null player property gives
$
\rho_i^K([M];u_{D_K,\alpha})=0.
$
This proves the claim.
\end{proof}

\noindent
We have shown that the $K$-Shapley value $\mathrm{\phi}^K$ satisfies linearity, $K$-efficiency, symmetry, and the null player property. 
It remains to show that the $K$-Shapley value is the unique solution concept satisfying these properties.

Let $\rho^K$ therefore be a solution concept satisfying these properties; we will show that $\rho^K = \mathrm{\phi}^K$. 
Let $(M, V)$ be a $K$-restricted coalitional game. 
Claim~1 implies that $V$ can be expressed as a linear combination of carrier games of the form $u_{D_K,\alpha_{D_K}}$, 
corresponding to feasible coalitions $D_K$ of size at most~$K$. 
In other words, there exist real numbers $\alpha_{D_K}$ such that

$$
V(S) 
= 
\sum_{\substack{D_K \subseteq M,\, D_K \neq \emptyset,\\ |D_K| \le K}} 
u_{D_K,\alpha_{D_K}}(S),
\qquad \forall\, S \subseteq M \text{ with } |S| \le K.
$$

By Claim~2, since both $\rho^K$ and $\mathrm{\phi}^K$ satisfy symmetry, $K$-efficiency, and the null player property,

$$
\rho^K(M, u_{D_K,\alpha_{D_K}}) 
= 
\mathrm{\phi}^K(M, u_{D_K,\alpha_{D_K}}),
\qquad 
\forall\, D_K \subseteq M,\ D_K \neq \emptyset,\ |D_K| \le K.
$$

Since both $\rho^K$ and $\mathrm{\phi}^K$ satisfy linearity, we have
\begin{align*}
\rho^K(M, V)
&=
\sum_{\substack{D_K \subseteq M,\, D_K \neq \emptyset,\\ |D_K| \le K}} 
\rho^K(M, u_{D_K,\alpha_{D_K}}) \\
&=
\sum_{\substack{D_K \subseteq M,\, D_K \neq \emptyset,\\ |D_K| \le K}} 
\mathrm{\phi}^K(M, u_{D_K,\alpha_{D_K}}) \\
&=
\mathrm{\phi}^K(M, V).
\end{align*}

Because this holds for every $K$-restricted game $(M, V)$, 
we conclude that $\rho^K = \mathrm{\phi}^K$. 
In other words, every solution concept satisfying linearity, $K$-efficiency, symmetry, 
and the null player property is identical to the $K$-Shapley value.\\

\appsection{Proof of Lower Bound}{appendix:proof_of_lower_bound}
At each time step $t$, let $r_t=|S_t|\in\{1,\ldots,K\}$,
and let $p_t^{(r_t)}\in[0,1]^M$ denote the conditional
marginal vector of the learner's size-$r_t$ query, i.e., $p_t^{(r_t)}(i)
=
\Pr(i\in S_t\mid H_{t-1},r_t).$ Then, the query-normalized marginal vector is given as: $p_t:=\frac{K}{r_t}p_t^{(r_t)}.$ Similarly, if $\phi_i^K$ denote the $K$-Shapley value of arm $i$, then
the meritocratic fair marginal vector $p^\star$ as
\[
    p_i^\star
    =
    \frac{K\phi_i^K}
    {\sum_{j=1}^M \phi_j^K}.
\]
As a result, the query-normalized instantaneous fairness
loss satisfies
\[
\frac{K}{r_t}
\left\|
p_t^{(r_t)}-p^{\star(r_t)}
\right\|_1
=
\left\|
p_t-\pi^*
\right\|_1.
\]
Therefore, the query-normalized fairness regret can be
equivalently written as
\[
FR_T
=
\sum_{t=1}^T
\mathbb{E}
\left[
\left\|p_t-p^\star\right\|_1
\right].
\]
In the below proof, $p_t$ denotes this
query-normalized marginal vector.

\begin{proof}
At each round $t$, the learner selects a feasible set
$S_t\subseteq[M]$ with $|S_t|\le K$ and observes $Y_t 
\sim \operatorname{Bernoulli}(V(S_t))$.

The fairness regret is defined as
\[
    \mathrm{FR}_T
    =
    \sum_{t=1}^T
    \mathbb{E}
    \left[
        \left\|p_t-p^\star\right\|_1
    \right].
\]

We prove the result by constructing two hard instances.
Let $S^\dagger:=\{2,\ldots,K+1\},
C_{M,K}:=\binom{M-1}{K-1}.$
Define $
\beta:=\frac{1}{4K},
\zeta:=\frac{K-1}{M-1}\beta$. Consider any constant $0<\epsilon<\zeta$. Choose $0<\Delta\leq \frac{\zeta}{2},
\rho:=\frac{\Delta}{K C_{M,K}}.$ Define the instance $V_1$ by the modular valuation
\[
V_1(S)
=
\beta\mathbf{1}\{1\in S\}
+
(\zeta+\rho)
\sum_{i\in S^\dagger}\mathbf{1}\{i\in S\}
+
\zeta
\sum_{i=K+2}^{M}\mathbf{1}\{i\in S\},
\]
where the final sum is zero when $M=K+1$.
Define the base instance $V_0$ by $V_0(S)
=
V_1(S)
-
\Delta\mathbf{1}\{S^\dagger\subseteq S\}$. 
Since $|S^\dagger|=K$ and only coalitions of size at most $K$
are feasible, the two instances differ on exactly one feasible
coalition, namely $S^\dagger$.
The function $V_1$ is modular and hence monotone and
submodular. The perturbation in $V_0$ only reduces the
marginal gain obtained when completing $S^\dagger$, which
remains nonnegative since
$\zeta+\rho-\Delta\geq\zeta-\Delta\geq\zeta/2$.
Moreover, $0\leq V_0(S)\leq V_1(S)<1$ for every feasible
coalition $S$. Hence, both $V_0$ and $V_1$ are monotone, submodular, and take values between $0$ and $1$.
Further, under $V_0$ and $V_1$, we have $\phi_1^K(V_0) = \phi_1^K(V_1) = \beta$. Under $V_1$,
\[
\phi_i^K(V_1)
=
\begin{cases}
\zeta+\rho,
& i\in S^\dagger,\\[1mm]
\zeta,
& i\notin S^\dagger,\ i\neq 1.
\end{cases}
\]
For each
$i\in S^\dagger$, the interaction $-\Delta\mathbf{1}\{S^\dagger\subseteq S\}$
appears in exactly one of the $C_{M,K}$ size-$K$ coalitions
containing $i$. Within the coalition $S^\dagger$, its Shapley
contribution to each member is $-\Delta/K$. Therefore, its
contribution to the $K$-Shapley value of each
$i\in S^\dagger$ is $-\frac{\Delta}{K C_{M,K}}=-\rho.$ Thus,
\[
\phi_i^K(V_0)=\zeta,
\quad i=2,\ldots,M.
\]
Thus, the smallest $K$-Shapley value in either instance is
at least $\zeta>\epsilon$, and both instances satisfy the
condition $\phi_i^K\geq\epsilon$ for every $i\in[M]$.
Under $V_0$, the total merit is
\[
\Phi_0
=
\beta+(M-1)\zeta
=
K\beta.
\]
Hence,
\[
p_1^\star(V_0)=1,
\qquad
p_i^\star(V_0)
=
\frac{K\zeta}{K\beta}
=
\frac{K-1}{M-1},
\quad i=2,\ldots,M.
\]

Under $V_1$, the total merit is
\[
\Phi_1
=
K\beta+K\rho
=
K\beta+\frac{\Delta}{C_{M,K}},
\]
and hence
\[
p_1^\star(V_1)
=
\frac{K\beta}
{K\beta+\Delta/C_{M,K}}.
\]
For $i\in S^\dagger$,
\[
p_i^\star(V_1)
=
\frac{K(\zeta+\rho)}
{K\beta+\Delta/C_{M,K}}
=
\frac{\zeta+\rho}{\beta+\rho}
<1,
\]
and for $i\notin S^\dagger$, $i\neq1$,
\[
p_i^\star(V_1)
=
\frac{\zeta}{\beta+\rho}
<1.
\]
Therefore, $p^\star(V_1)\in\mathcal{S}_{M,K}$. 
Since both marginal vectors have total mass $K$, the net
change in coordinates $2,\ldots,M$ is equal in magnitude and
opposite in sign to the change in the first coordinate.
Therefore, by the triangle inequality,
\[
d = \left\|p^\star(V_1)-p^\star(V_0)\right\|_1
\geq
2\left|
p_1^\star(V_1)-p_1^\star(V_0)\right| = \frac{2\Delta}
{C_{M,K}K\beta+\Delta}.
\]

Since $\Delta\leq\zeta/2\leq C_{M,K}K\beta$, we obtain
\[
d
\geq
\frac{\Delta}{C_{M,K}K\beta}
=
\frac{4\Delta}{C_{M,K}}.
\]

Next, we bound the information available to distinguish
$V_0$ from $V_1$. Among feasible coalitions, the two
instances differ only on $S^\dagger$. Define
\[
N^\dagger(T)
:=
\sum_{t=1}^T
\mathbf{1}\{S_t=S^\dagger\}.
\]

Since $S^\dagger$ excludes arm
$1$ and has size $K$,
\begin{align*}
\mathbb{P}_0(S_t=S^\dagger\mid H_{t-1})
&\leq
\mathbb{P}_0(r_t=K,\ 1\notin S_t\mid H_{t-1})\\
&=
\mathbb{P}_0(r_t=K\mid H_{t-1})
\left(1-p_t^{(K)}(1)\right).
\end{align*}
Under $V_0$, $p_1^\star(V_0)=1$. Therefore,
\[
1-p_t^{(K)}(1)
=
\left|
p_t^{(K)}(1)-p_1^\star(V_0)
\right|
\leq
\left\|
p_t^{(K)}-p^\star(V_0)
\right\|_1.
\]
Thus,
\[
\mathbb{P}_0(S_t=S^\dagger\mid H_{t-1})
\leq
\mathbb{E}_0
\left[
\left\|p_t-p^\star(V_0)\right\|_1
\,\middle|\,
H_{t-1}
\right].
\]
Summing over $t$ gives
\[
\mathbb{E}_0[N^\dagger(T)]
\leq
\mathrm{FR}_T(V_0).
\]

Let $\mathbb{P}_0$ and $\mathbb{P}_1$ denote the distributions over the complete
interaction history under $V_0$ and $V_1$, respectively.
The chain rule for KL divergence gives
\[
\mathrm{KL}(\mathbb{P}_0,\mathbb{P}_1)
=
\sum_{t=1}^T
\mathbb{E}_{0}
\left[
\mathrm{KL}
\left(
\mathbb{P}_0(Y_t\mid H_{t-1},S_t)
\,\middle\|\,
\mathbb{P}_1(Y_t\mid H_{t-1},S_t)
\right)
\right].
\]

If $S_t\neq S^\dagger$, the two conditional observation
distributions are identical and their KL divergence is zero.
If $S_t=S^\dagger$, their means $(\mu_0 = V_0(S^\dagger), \mu_1 = V_1(S^\dagger))$ differ by $\Delta$. 

Using standard Bernoulli-KL upper bound
\[
\operatorname{kl}(p,q)
\leq
\frac{(p-q)^2}{q(1-q)}
\]
We get,
$\mathrm{kl}(\mu_0,\mu_1) \le \frac{\Delta^2}{\mu_1(1-\mu_1)}$
From construction, we know that $\mu_1(1-\mu_1) = K(\zeta+\rho)(1-K(\zeta+\rho)) \ge \frac{K-1}{4(M-1)}(1-K\zeta-\frac{\Delta}{C_{M,K}})\ge \frac{K-1}{4(M-1)}(1-K\zeta-\frac{\zeta}{2C_{M,K}})\ge \frac{K-1}{4(M-1)}(1-(K+\frac{1}{2})\zeta)$. Substituting value of $zeta$ and using the fact that $M > M-1\ge K$, we get $(1-\mu_1) \ge 3/4 $, resulting in $\mu_1(1-\mu_1) \ge \frac{3(K-1)}{16(M-1)}$.
Thus,
\[
\mathrm{kl}
\left( \mu_0,\mu_1
\right)
\leq
\frac{16\Delta^2(M-1)}{3(K-1)}.
\]
It follows that
\[
\mathrm{KL}(\mathbb{P}_0,\mathbb{P}_1)
\leq
\frac{16\Delta^2(M-1)}{3(K-1)}
\mathbb{E}_0[N^\dagger(T)].
\]

Consequently,
\[
\mathrm{KL}(\mathbb{P}_0,\mathbb{P}_1)
\leq
\frac{16\Delta^2(M-1)}{3(K-1)}
\mathrm{FR}_T(V_0).
\]

We now relate indistinguishability to fairness regret. For any marginal vector $p$,
\[
    \left\|p-p^\star(V_0)\right\|_1
    +
    \left\|p-p^\star(V_1)\right\|_1
    \ge
    \left\|p^\star(V_0)-p^\star(V_1)\right\|_1
    =
    d.
\]
For each round $t$, define the event
\[
    A_t
    =
    \left\{
        \left\|p_t-p^\star(V_0)\right\|_1
        \ge
        \frac d2
    \right\}.
\]
On $A_t^c$, the triangle inequality implies
\[
    \left\|p_t-p^\star(V_1)\right\|_1
    \ge
    \frac d2.
\]
Therefore,
\[
\begin{aligned}
    &\mathbb{E}_{\mathcal{H}_T\sim \mathbb{P}_0}
    \left[
        \left\|p_t-p^\star(V_0)\right\|_1
    \right]
    +
    \mathbb{E}_{\mathcal{H}_T\sim \mathbb{P}_1}
    \left[
        \left\|p_t-p^\star(V_1)\right\|_1
    \right] \\
    &\qquad\ge
    \frac d2 \mathbb{P}_0(A_t)
    +
    \frac d2 \mathbb{P}_1(A_t^c) \\
    &\qquad=
    \frac d2
    \left(
        \mathbb{P}_0(A_t)
        +
        1-\mathbb{P}_1(A_t)
    \right) \\
    &\qquad\ge
    \frac d2
    \left(
        1-\mathrm{TV}(\mathbb{P}_0,\mathbb{P}_1)
    \right),
\end{aligned}
\]
where $\mathrm{TV}(\cdot,\cdot)$ denotes total variation distance. Summing over $t=1,\ldots,T$
gives
\[
    \mathrm{FR}_T(V_0)+\mathrm{FR}_T(V_1)
    \ge
    \frac{Td}{2}
    \left(
        1-\mathrm{TV}(\mathbb{P}_0,\mathbb{P}_1)
    \right).
\]
By Pinsker's inequality, 
\[
    \mathrm{TV}(\mathbb{P}_0,\mathbb{P}_1)
    \le
    \sqrt{
        \frac{1}{2}
        \mathrm{KL}(\mathbb{P}_0,\mathbb{P}_1)
    }.
\]
We now choose
\[
\Delta=\alpha T^{-1/3},
\]
where $\alpha>0$ is a fixed constant. For sufficiently large
\[
T
\geq
\left(\frac{2\alpha}{\zeta}\right)^3,
\]
we have $\Delta\leq\zeta/2$, as required.
\\
Choose a constant $c_1>0$ such that
\[
\frac{16(M-1)\alpha^2c_1}{3(K-1)}
\leq
\frac12.
\]
There are two cases. First, suppose
$
\mathrm{FR}_T(V_0)
\geq
c_1T^{2/3}.
$
Then the desired result follows immediately.
\\
Otherwise,
\[
\mathrm{FR}_T(V_0)
<
c_1T^{2/3}.
\]
Using the KL bound,
\begin{align*}
\mathrm{KL}(\mathbb{P}_0,\mathbb{P}_1)
&\leq
\frac{16\Delta^2(M-1)}{3(K-1)}
\mathrm{FR}_T(V_0)\\
&<
\frac{16(M-1)\alpha^2T^{-2/3}}{3(K-1)}
c_1T^{2/3}\\
&=
\frac{16(M-1)\alpha^2c_1}{3(K-1)}\\
&\leq
\frac12.
\end{align*}
Pinsker's inequality therefore gives
\[
\operatorname{TV}(\mathbb{P}_0,\mathbb{P}_1)
\leq
\sqrt{\frac12\mathrm{KL}(\mathbb{P}_0,\mathbb{P}_1)}
\leq
\frac12.
\]
Using the testing inequality established above,
\[
\mathrm{FR}_T(V_0)+\mathrm{FR}_T(V_1)
\geq
\frac{Td}{2}
\left(1-\operatorname{TV}(\mathbb{P}_0,\mathbb{P}_1)\right)
\geq
\frac{Td}{4}.
\]
Since
$
d\geq\frac{4\Delta}{C_{M,K}},
$
we obtain
\[
\mathrm{FR}_T(V_0)+\mathrm{FR}_T(V_1)
\geq
\frac{T\Delta}{C_{M,K}}
=
\frac{\alpha}{C_{M,K}}T^{2/3}.
\]
Therefore,
\[
\max\left\{
\mathrm{FR}_T(V_0),
\mathrm{FR}_T(V_1)
\right\}
\geq
\frac{\alpha}{2C_{M,K}}T^{2/3}.
\]
\\
Combining the two cases yields
\[
\max\left\{
\mathrm{FR}_T(V_0),
\mathrm{FR}_T(V_1)
\right\}
\geq
\min\left\{
c_1,
\frac{\alpha}{2C_{M,K}}
\right\}
T^{2/3}.
\]
\end{proof}

\begin{table}[hb]
\centering
\caption{Notation table.}
\label{tab:notation}
\renewcommand{\arraystretch}{1.2}
\begin{tabular}{@{}p{0.10\linewidth}p{0.84\linewidth}@{}}
\toprule
\textbf{Notation} & \textbf{Description} \\
\midrule

$M$
&
Number of arms.
\\

$[M]$
&
Set of arms, $[M]=\{1,2,\ldots,M\}$.
\\

$K$
&
Maximum number of arms that can be selected in a coalition.
\\

$T$
&
Time horizon measured in full-bandit queries.
\\

$\mathcal{A}$
&
Set of feasible coalitions,
$\mathcal{A}=\{S\subseteq[M]:|S|\leq K\}$.
\\

$S_t$
&
Coalition evaluated at time $t$.
\\

$r_t$
&
Size of the coalition evaluated at time $t$, i.e.,
$r_t=|S_t|$.
\\

$V(S)$
&
Expected valuation of coalition $S$.
\\

$\widehat{V}(S)$
&
Noisy full-bandit observation obtained by evaluating coalition $S$.
\\

$\Delta_a(S)$
&
Marginal contribution of arm $a$ to coalition $S$, defined as
$\Delta_a(S)=V(S\cup\{a\})-V(S)$.
\\

$\phi_a^K$
&
True $K$-Shapley value of arm $a$, used as its merit.
\\

$N_{t,a}$
&
Number of marginal-contribution samples accumulated for arm $a$
before policy update $t$.
\\

$\mathcal{S}_{M,K}$
&
Set of feasible arm-marginal vectors,
$\mathcal{S}_{M,K}
=\{\boldsymbol{p}\in[0,1]^M:
\sum_{a=1}^{M}p(a)=K\}$.
\\

$\pi^{*}$
&
Target $K$-Shapley-proportional arm-marginal vector.
\\

$\pi_t$
&
Feasible arm-marginal vector constructed at policy update $t$.
\\

$P_t$
&
Adaptive distribution over size-$K$ coalitions whose arm
marginals are given by $\pi_t$.
\\

$P_t^{\mathrm{mix}}$
&
Mixture of $P_t$ with the uniform distribution over all
size-$K$ coalitions.
\\

$q_t(a)$
&
Marginal probability of selecting arm $a$ under
$P_t^{\mathrm{mix}}$.
\\

$\eta$
&
Uniform mixing parameter used in $P_t^{\mathrm{mix}}$.
\\

$W$
&
Upper bound on the importance weights, given by
$W=M/(K\eta)$.
\\

$w_t(a,S_t)$
&
Importance weight assigned to the observed marginal contribution
of arm $a$ when coalition $S_t$ is sampled.
\\

$Z_t(a)$
&
Importance-weighted marginal-contribution sample used to update
the estimate of $\phi_a^K$.
\\

$R$
&
Number of marginal-contribution samples collected per arm during
the exploration phase of MURaS.
\\

\bottomrule
\end{tabular}
\end{table}

\appsection{MURaS Algorithm and Regret Proof}{appendix:muras}

\begin{algorithm}[t]
\caption{\textbf{Meritocratic Uniform Random Sampling (MURaS)}}
\label{algo:URS+Fair}

\KwInput{$T$: timestamp budget, $M$: number of arms, $K$: number of arms selected per deployed round, $R$: number of Monte Carlo samples}
\KwOutput{Deployed selected sets $S_\tau$, marginal selection probabilities $\pi_\tau$, estimated merits $\{\hat{\phi}_a\}_{a\in[M]}$}

Initialize $\hat{\phi}_{a}\gets 0$, $N_a\gets 0$, and $C_a\gets 0$ for all $a\in[M]$\;

\For{$a\in[M]$}{
    \For{$r=1$ \KwTo $R$}{
        Uniformly at random select a subset
        $U_{a,r}\subseteq[M]\setminus\{a\}$ such that
        $|U_{a,r}|=K-1$\

        $P_{a,r}\gets U_{a,r}\cup\{a\}$
        \tcp*{$P_{a,r}$ is a set of size $K$ containing $a$}

        Sample a permutation $\sigma_{a,r}$ uniformly at random from all
        permutations of $P_{a,r}$\

        Let
        $B_a^{\sigma_{a,r}}
        \gets
        \{b\in P_{a,r}: b \text{ appears before } a
        \text{ in } \sigma_{a,r}\}$\

        $\Delta_{a,r}
        \gets
        \hat V\!\left(B_a^{\sigma_{(a,r)}}\cup\{a\}\right)
        -
        \hat V\!\left(B_a^{\sigma_{(a,r)}}\right)$\

        $\hat{\phi}_{a}
        \gets
        \hat{\phi}_{a}
        +
        \dfrac{1}{R}\Delta_{a,r}$\

        $N_a\gets N_a+1$\
    }
}

$\tau\gets 2MR$

\While{$\tau < T$}{
    $\tau\gets\tau+1$\

    \For{$a\in[M]$}{
        $\hat{\phi}_{a}
        \gets
        \max\{\hat{\phi}_{a},0\}$\
    }

    \eIf{$\sum_{j=1}^{M}\hat{\phi}_{j}=0$}{
        \For{$a\in[M]$}{
            $\bar{\pi}_{\tau}(a)\gets K/M$\
        }
    }{
        \For{$a\in[M]$}{
            $\bar{\pi}_{\tau}(a)
            \gets
            \dfrac{K\hat{\phi}_{a}}
            {\sum_{j=1}^{M}\hat{\phi}_{j}}$\
        }
    }

    $\pi_{\tau}
    \leftarrow
    \arg\min_{p\in[0,1]^M :
    \sum_{a=1}^{M}p(a)=K}
    \left\|p-\bar{\pi}_\tau\right\|_1
    $

    $S_{\tau}\sim\mathrm{RRS}(\pi_{\tau},K)$

    \textbf{Deploy selected set}\
}

\Return{$\{S_{\tau}\}_{\tau\geq 1}$,
$\{\pi_{\tau}\}_{\tau\geq 1}$,
$\{\hat{\phi}_a\}_{a\in[M]}$}\

\end{algorithm}

\appsubsection{MURaS: Meritocratic Uniform Random Sampling}{appendix:muras_algo}

We start with a naive approach, Uniform Random Sampling (URS), which approximates the Shapley values by sampling techniques. Once it has learnt the Shapley values accurately enough, it employs a meritocratic fairness policy on the learnt Shapley values. The details are described in Algorithm~\ref{algo:URS+Fair}. MURaS essentially learns the Shapley value estimates of each player via a $R$ randomly generated permutation.

Once the Shapley values are estimated, the arms are selected as per the policy given in the definition of Meritocratic fairness, on the basis of estimated Shapley values. 
Algorithm~\ref{algo:URS+Fair} ensures fairness in the following way. First, for the initial time steps, every arm has an equal chance of being selected due to uniform sampling, and once the Shapley values are closely approximated, the algorithm returns a policy that is meritocratic fair.  URS requires $\mathcal{O}(R\cdot M)$ to find the approximated Shapley values. This scales polynomially in $K$, making it tractable compared to the exponential 
cost of the exact Shapley value computation. 
For any feasible marginal vector
$p\in[0,1]^M$ satisfying $\sum_{a=1}^{M}p(a)=K$,
$\operatorname{RRS}(p,K)$ denotes a randomized rounding
scheme~\cite{RRS, delayedfeedback} that samples a set $S\subseteq[M]$ of size $K$ such
that
\[
\Pr(a\in S)=p(a),
\qquad \forall a\in[M].
\]

We now provide the proof of Theorem~\ref{thm:muras}.

\appsubsection{Regret Proof of MURaS}{appendix:muras_fairness_regret}

\begin{proof}
For each arm $a$, MURaS draws $R$ independent samples as follows. For each
$r\in[R]$, it samples a set $V_{a,r}$ uniformly from all $K$-subsets
containing $a$, then samples a uniformly random permutation $\sigma_r$ of
$V_{a,r}$. Let $B_a^{\sigma(a,r)}$ be the set of arms appearing before $a$ in
$\sigma_r$, and define

\[
    \tilde Z_r^a
    =
    \hat V(B_a^{(r)}\cup\{a\})-\hat V(B_a^{(r)}),
\]
where $\hat V(\cdot)$ is a noisy sample of the valuation. Define $\hat\phi_a
    =
    \frac1R\sum_{r=1}^R \tilde Z_r^a$.

Since $\tilde{Z}_r^a\in[-1,1]$ and
$\mathbb{E}[\tilde{Z}_r^a]=\phi_a^K$, Hoeffding's inequality and a union bound over
arms give, with probability at least $1-\delta_1$,
\[
    \left|
    \frac1R\sum_{r=1}^R \tilde Z_r^a-\phi_a^K
    \right|
    \le
    \sqrt{\frac{2}{R}\log\frac{2M}{\delta_1}}
    \qquad \forall a\in[M].
\]
Therefore, with probability at least $1-\delta_1$,
\[
    |\hat\phi_a-\phi_a^K|
    \le
    \epsilon
    =
    \sqrt{\frac{2}{R}\log\frac{2M}{\delta_1}}
    \qquad \forall a.
\]

Let $\Phi=\sum_{a=1}^M\phi_a^K$ and $\hat\Phi=\sum_{a=1}^M\hat\phi_a.$ 
Since Algorithm~2 clips the estimates to be non-negative,
and since $\phi_a^K\geq 0$, clipping does not increase the
estimation error. Moreover, 
since $\Phi \geq \gamma$, we have that
whenever $M\epsilon<\gamma$,
\[
\hat{\Phi}
\geq
\Phi-\sum_{a=1}^{M}
\left|\hat{\phi}_a-\phi_a^K\right|
\geq
\gamma-M\epsilon
>0.
\]
Therefore,

\begin{align*}
\sum_{a=1}^M |\pi^*(a)-\pi(a)| &\le \sum_{a=1}^M|{\pi}(a) - \tilde{\pi}(a)| + \sum_{a=1}^M|\pi^*(a) - \tilde{\pi}(a)| \le  2\sum_{a=1}^M|\pi^*(a)-\tilde\pi(a)|\\ 
&\le2K\sum_{a=1}^M
\left|
\frac{\hat\Phi \phi_a^K-\Phi\hat\phi_a}
{\Phi\hat\Phi}
\right| =
\frac{2K}{\Phi\hat\Phi}
\sum_{a=1}^M
\left|
\hat\Phi \phi_a^K-\Phi\hat\phi_a
\right|.
\end{align*}
Now add and subtract $\hat\Phi\hat\phi_a$ inside the numerator:
\begin{align*}
\hat\Phi \phi_a^K-\Phi\hat\phi_a
&=
\hat\Phi \phi_a^K
-
\hat\Phi\hat\phi_a
+
\hat\Phi\hat\phi_a
-
\Phi\hat\phi_a \\
&=
\hat\Phi(\phi_a^K-\hat\phi_a)
+
\hat\phi_a(\hat\Phi-\Phi).
\end{align*}
Therefore,\begin{align*}\sum_{a=1}^M |\pi^*(a)-\pi(a)|&\le\frac{2K}{\Phi\hat\Phi}\sum_{a=1}^M\left|\hat\Phi(\phi_a^K-\hat\phi_a)+\hat\phi_a(\hat\Phi-\Phi)\right| \\&\le\frac{2K}{\Phi\hat\Phi}\sum_{a=1}^M\left(\hat\Phi|\phi_a^K-\hat\phi_a|+\hat\phi_a|\hat\Phi-\Phi|\right) \\&=\frac{2K}{\Phi\hat\Phi}\left(\hat\Phi\sum_{a=1}^M|\phi_a^K-\hat\phi_a|+|\hat\Phi-\Phi|\sum_{a=1}^M\hat\phi_a\right).\end{align*}
Since $\sum_{a=1}^M\hat\phi_a=\hat\Phi$, we get\begin{align*}\sum_{a=1}^M |\pi^*(a)-\pi(a)|&\leq \frac{2K}{\Phi\hat\Phi}\left(\hat\Phi\sum_{a=1}^M|\phi_a^K-\hat\phi_a|+\hat\Phi|\hat\Phi-\Phi|\right) \\&=\frac{2K}{\Phi}\left(\sum_{a=1}^M|\phi_a^K-\hat\phi_a|+|\hat\Phi-\Phi|\right).\end{align*}Moreover, $|\hat\Phi-\Phi|=\left|\sum_{a=1}^M \hat\phi_a-\sum_{a=1}^M \phi_a^K\right| =\left|\sum_{a=1}^M(\hat\phi_a-\phi_a^K)\right| \le\sum_{a=1}^M|\hat\phi_a-\phi_a^K|$. Hence,\begin{align*}\sum_{a=1}^M |\pi^*(a)-\pi(a)|&\le\frac{4K}{\Phi}\sum_{a=1}^M|\phi_a^K-\hat\phi_a|.\end{align*}If $|\phi_a^K-\hat\phi_a|\le \epsilon$ for every $a\in[M]$, then\[    \sum_{a=1}^M|\phi_a^K-\hat\phi_a|    \le M\epsilon.\]Finally, using $\Phi\ge \gamma$, we obtain\[    \sum_{a=1}^M |\pi^*(a)-\pi(a)|    \le    \frac{4KM}{\gamma}\epsilon.\]

During exploration, MURaS spends $O(RM)$ valuation-sampling time steps. Since the
per-round fairness regret is at most $2K$, the exploration regret is
$O(KRM)$. For the remaining time steps, the regret per round is at most 
$\frac{4KM}{\gamma}\epsilon$.
Thus,
\[
    \mathrm{FR}_T
    \le
    O(KRM)
    +
    T\cdot
    \frac{4KM}{\gamma}
    \left[
    \sqrt{\frac{2}{R}\log\frac{2M}{\delta_1}}
    \right].
\]
Choosing
\[
    R=\tilde\Theta(T^{2/3})
\]
balances the exploration term $KRM$ with the estimation term
$T/\sqrt{R}$, yielding
\[
    \mathrm{FR}_T
    =
    \tilde O\left(\frac{KM}{\gamma}T^{2/3}\right).
\]
If $\gamma=\Omega(1)$, this becomes
\[
    \mathrm{FR}_T=\tilde O(KMT^{2/3}).
\]
\end{proof}

In the experiments, we set the number of marginal-contribution
samples collected per arm to
$
R=\left\lfloor\frac{T^{2/3}}{\Lambda}\right\rfloor,
$
where $\Lambda>0$ is a constant that controls the extent of exploration. 
Since each marginal-contribution sample requires two full-bandit
queries, MURaS uses $2MR$ queries during exploration. Therefore,
we choose $\Lambda$ such that
$
2MR\leq T.
$
Ignoring the floor operation, this condition is equivalent to
$
2 M \frac{T^{2/3}}{\Lambda}\leq T,
$
or equivalently
$\Lambda\geq\frac{2M}{T^{1/3}}.
$
To guarantee feasibility for every considered horizon
$T\geq 10^{4}$, we impose the horizon-independent
condition
$
\Lambda\geq \frac{2M}{10^{4/3}}.
$
The empirical
effect and selection of $\Lambda$ are discussed in
Appendix~\ref{appendix: gamma}.

Despite its simplicity, the MURaS approach has several limitations.  
For instance, 
by relying on URS, all arms are treated identically without considering their potential contribution. This uniform treatment leads to inefficient use of samples, as valuable arms do not receive more exploration, while less informative arms may be oversampled. Also, it does not exploit the sequential feedback structure of the bandit setting, where observations are limited to the chosen arms in each round, resulting in wasted information and higher sample complexity. These limitations highlight the need for a more adaptive, bandit-based approach that can leverage sequential feedback, allocate exploration more effectively, and provide Shapley-based merit estimates at lower cost. Regret is linear in $M$, therefore when $M$ is very high it leads to higher regret.

\appsection{Regret Proof of \ouralgo}{appendix:iwksvfair}
Let $[M]=\{1,\ldots,M\}$ be the set of arms. For any arm $a\in[M]$,
let $\phi_a^K$ denote its true $K$-Shapley value. Define
\[
    \Phi=\sum_{a=1}^M \phi_a^K
\]
We assume $\phi_a^K\in[0,1]\quad \forall a\in[M]$ and  the target meritocratic fair marginal vector $\pi^*(a)
    =
    \frac{K\phi_a^K}{\Phi} \in \mathcal{S}_{M,K}$. Let us denote  $
    t \in \{(\tau-1)K+1: \tau=1,\ldots,\left\lfloor\frac{T}{K}\right\rfloor\}
    $ as time steps denoting the starting of a macro-round in which \ouralgo\ pulls the arm as per policy $\pi_t$ derived from $\tilde\pi_t$. Here,  
\[
    \tilde\pi_t(a)
    =
    \frac{K\hat{\phi}_{t,a}}
    {\sum_{j=1}^M \hat{\phi}_{t,j}}.
\]
with $\hat{\phi}_{t,a}
    =
    \min\{\max\{\bar\phi_{t,a},\epsilon\},1\}$. Let $\mathcal{H}_{t-1}$ denote the history till which the last update happened. Let $P_t$ be an exactly computable distribution over $K$-sets with marginals
$\pi_t$. \ouralgo\ then defines the mixed set distribution
\[
    P_t^{\mathrm{mix}}(S)
    =
    (1-\eta)P_t(S)
    +
    \eta\frac{1}{\binom{M}{K}},
    \qquad |S|=K.
\]
The marginal probability that arm $a$ is selected under this mixed set
distribution is
\[
    q_t(a)
    =
    \mathbb P_{S\sim P_\tau^{\mathrm{mix}}}(a\in S)
\]
Since $P_t$ has marginals $\pi_t$, and uniform distribution over $K$-sets
has marginal $K/M$ for every arm, we get
\[
    q_t(a)
    =
    (1-\eta)\pi_t(a)
    +
    \eta\frac{K}{M}
\]

Note that for any arm $a$, $Z_t(a), N_{t,a}$ is updated atmost once in that macro-round. We prove the regret bound using the series of Lemma that we state and prove below:

\begin{lemma}
\label{lem:iwksvfair-unbiased}
Fix a macro-round $t$ and an arm $a$.  
Then, conditional on the event $a\in S_t$,
\[
    \mathbb E[Z_t(a)\mid \mathcal H_{t-1},a\in S_t]
    =
    \phi_a^K.
\]
Therefore, each time arm $a$ is selected and updated, $Z_t(a)$ is an unbiased
sample of the true $K$-Shapley value $\phi_a^K$.
\end{lemma}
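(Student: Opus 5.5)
The plan is to compute the unconditional quantity $\mathbb E[Z_t(a)\mathbf 1\{a\in S_t\}\mid\mathcal H_{t-1}]$ and then divide by $\mathbb P(a\in S_t\mid\mathcal H_{t-1})=q_t(a)$. Given $\mathcal H_{t-1}$, the objects $\pi_t$, $P_t$, $P_t^{\mathrm{mix}}$ and $q_t$ are all fixed. The weight $w_t(a,S)=q_t(a)/\bigl(\binom{M-1}{K-1}P_t^{\mathrm{mix}}(S)\bigr)$ therefore depends only on the sampled set $S$, and not on the permutation $\sigma_t$ or the reward noise. It is well defined because $P_t^{\mathrm{mix}}(S)\ge \eta/\binom{M}{K}>0$. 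The argument proceeds by the tower rule in three layers: first over $S_t\sim P_t^{\mathrm{mix}}$, then over the uniform permutation $\sigma_t$ of $S_t$, and finally over the noisy observations.

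\emph{Step 1 (noise).} Fix $S_t=S\ni a$ and $\sigma_t$, and let $a$ occupy position $r$. Then $\Delta_t(a)=\hat V(\mathcal P_{t,r})-\hat V(\mathcal P_{t,r-1})$. Each observation is drawn independently with mean $V(\cdot)$, and the convention is $\hat V(\emptyset)=V(\emptyset)=0$. Hence the conditional expectation of $\Delta_t(a)$ equals $V(B\cup\{a\})-V(B)=\Delta_a(B)$, where $B=\mathcal P_{t,r-1}$ is the set of predecessors of $a$ in $\sigma_t$.

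\emph{Step 2 (permutation).} For a uniform permutation of a $K$-set $S$, the predecessor set $B\subseteq S\setminus\{a\}$ of $a$ occurs with probability $|B|!\,(K-|B|-1)!/K!$. To see this, choose the position $|B|+1$ of $a$ (probability $1/K$) and then a uniform $|B|$-subset of the remaining $K-1$ elements. Consequently the expectation over $\sigma_t$ of $\Delta_a(B)$ is exactly $\psi_a^K(S)$.

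\emph{Step 3 (set).} Combining Steps 1 and 2,
\[
\mathbb E[Z_t(a)\mathbf 1\{a\in S_t\}\mid\mathcal H_{t-1}]
=\sum_{S\ni a,|S|=K}P_t^{\mathrm{mix}}(S)\,\frac{q_t(a)}{\binom{M-1}{K-1}P_t^{\mathrm{mix}}(S)}\,\psi_a^K(S).
\]
The factor $P_t^{\mathrm{mix}}(S)$ cancels, leaving
\[
q_t(a)\sum_{S\ni a,\,|S|=K}\frac{\psi_a^K(S)}{\binom{M-1}{K-1}}=q_t(a)\,\phi_a^K
\]
by the definition of the $K$-Shapley value. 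Dividing by $\mathbb P(a\in S_t\mid\mathcal H_{t-1})=q_t(a)$ gives the claim. This uses $q_t(a)\ge\eta K/M>0$, and uses that $q_t(a)$ is indeed the marginal of $P_t^{\mathrm{mix}}$, which holds because $P_t$ has marginals $\pi_t$ exactly (the theorem's assumption).

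The main obstacle is the conditioning bookkeeping rather than any computation. One must ensure that the weight is measurable with respect to $(\mathcal H_{t-1},S_t)$ alone, and that the fresh noisy samples in the prefix queries of macro-round $t$ are independent of how $S_t$ and $\sigma_t$ were chosen. One must also note that $a$ is updated at most once per macro-round, so exactly one $Z_t(a)$ is produced. The combinatorial fact in Step 2 is standard and mirrors the permutation form of the classical Shapley value.
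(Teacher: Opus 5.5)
Your proof is correct and follows essentially the same route as the paper. The paper writes the conditional law of $(S_t,\sigma_t)$ given $a\in S_t$ as $P_t^{\mathrm{mix}}(S)/(q_t(a)K!)$ and notes that $w_t$ is exactly the likelihood ratio to the uniform law on pairs $(S,\sigma)$ with $a\in S$; this is your Step 3, computed conditionally rather than jointly and then divided by $q_t(a)$. Your Steps 1 and 2 make explicit what the paper uses only implicitly: averaging out the reward noise, and the predecessor-set probability $|B|!(K-|B|-1)!/K!$ that turns the permutation average into $\psi_a^K(S)$. Your remark that the algorithm's formula for $q_t(a)$ must equal the true marginal of $P_t^{\mathrm{mix}}$ is likewise left implicit in the paper.
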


\begin{proof}
Fix an arm $a$ and a round $t$. Since the algorithm updates arm $a$ only when
$a\in S_t$, the relevant sampling distribution is the conditional distribution
of $(S_t,\sigma_t)$ given $a\in S_t$.

The probability that $a$ is selected under $P_t^{\mathrm{mix}}$ is
\[
    q_t(a)
    =
    \sum_{S\ni a:\,|S|=K} P_t^{\mathrm{mix}}(S)
\]
Therefore, for any $K$ set $S$ with $a\in S$,
\[
    \mathbb P(S_t=S\mid a\in S_t)
    =
    \frac{P_t^{\mathrm{mix}}(S)}{q_t(a)}.
\]
After $S_t=S$ is selected, the algorithm samples a permutation uniformly from
$\Pi(S)$. Hence,
\[
    \mathbb P(\sigma_t=\sigma\mid S_t=S,a\in S_t)
    =
    \frac1{K!}
\]
Thus, the conditional probability of observing the pair $(S,\sigma)$ is
\[
    r_{t,a}(S,\sigma)
    =
    \mathbb P(S_t=S,\sigma_t=\sigma\mid a\in S_t)
    =
    \frac{P_t^{\mathrm{mix}}(S)}{q_t(a)}
    \frac1{K!}.
\]

The target $K$-Shapley distribution over pairs $(S,\sigma)$ with $a\in S$ is
uniform:
\[
    u_a(S,\sigma)
    =
    \frac{1}{\binom{M-1}{K-1}}
    \frac1{K!}
\]
Therefore, the importance weight for converting the conditional
sampling distribution $r_{t,a}$ into the target $K$-Shapley distribution
$u_a$ is
\[
    \frac{u_a(S,\sigma)}{r_{t,a}(S,\sigma)}
    =
    \frac{
        \frac{1}{\binom{M-1}{K-1}}\frac1{K!}
    }{
        \frac{P_t^{\mathrm{mix}}(S)}{q_t(a)}\frac1{K!}
    }
=
    \frac{q_t(a)}
    {\binom{M-1}{K-1}P_t^{\mathrm{mix}}(S)}.
\]
This is the weight used in IWKSVFair:
\[
    w_t(a,S_t)
    =
    \frac{q_t(a)}
    {\binom{M-1}{K-1}P_t^{\mathrm{mix}}(S_t)}
\]

\begin{align*}
\mathbb E[Z_t(a)\mid \mathcal H_{t-1},a\in S_t]
&=
\sum_{S\ni a:\,|S|=K}
\sum_{\sigma\in\Pi(S)}
r_{t,a}(S,\sigma)
w_t(a,S)
\Delta_a(S,\sigma) \\
&=
\sum_{S\ni a:\,|S|=K}
\sum_{\sigma\in\Pi(S)}
\frac{P_t^{\mathrm{mix}}(S)}{q_t(a)}
\frac1{K!}
\frac{q_t(a)}
{\binom{M-1}{K-1}P_t^{\mathrm{mix}}(S)}
\Delta_a(S,\sigma)
\end{align*}

\begin{align*}
\mathbb E[Z_t(a)\mid \mathcal H_{t-1},a\in S_t]
&=
\sum_{S\ni a:\,|S|=K}
\sum_{\sigma\in\Pi(S)}
\frac{1}{\binom{M-1}{K-1}}
\frac1{K!}
\Delta_a(S,\sigma) \\
&=
\phi_a^K
\qedhere
\end{align*}

\end{proof}

\begin{lemma}
\label{lem:iwksvfair-range-variance}
For every selected arm $a\in S_t$, the estimator
$
    Z_t(a)=w_t(a,S_t)\hat\Delta_t(a)
$
satisfies
$
    0\le |Z_t(a)|\le W,
$
where
$
    W=\frac{M}{K\eta}.
$
Moreover,
\[
    \operatorname{Var}(Z_t(a)\mid \mathcal H_{t-1},a\in S_t)
    \le W.
\]
\end{lemma}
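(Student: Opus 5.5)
The plan is to bound the importance weight deterministically using the uniform mixing component, then bound the noisy marginal contribution by $1$, and finally control the second moment through the first moment of the weight rather than through $W^2$.

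\textbf{Step 1: the weight bound.} Fix a macro-round $t$ and an arm $a\in S_t$, and write $C:=\binom{M-1}{K-1}$.
\begin{itemize}
\item Since $q_t(a)$ is a marginal probability, $q_t(a)\le 1$.
\item Since $P_t(S)\ge 0$, Line~\ref{line:8} gives $P_t^{\mathrm{mix}}(S)\ge \eta/\binom{M}{K}$ for every $K$-set $S$.
\end{itemize}
Hence
\[
w_t(a,S_t)=\frac{q_t(a)}{C\,P_t^{\mathrm{mix}}(S_t)}\le \frac{\binom{M}{K}}{\eta\,C}=\frac{M}{K\eta}=W,
\]
using the identity $\binom{M}{K}=\frac{M}{K}\binom{M-1}{K-1}$. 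The weight is also nonnegative.

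\textbf{Step 2: the bound on $|Z_t(a)|$.} Each observation $\hat V(\mathcal P_{t,r})$ is a sample from a distribution supported on $[0,1]$, and $\hat V(\mathcal P_{t,0})=\hat V(\emptyset)=0$. So the noisy difference satisfies $\hat\Delta_t(a)\in[-1,1]$. Combining with Step 1 gives $0\le|Z_t(a)|\le W$.

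\textbf{Step 3: the variance bound.} I would not use the crude bound $\operatorname{Var}\le W^2$. Instead I would write
\[
\operatorname{Var}(Z_t(a)\mid\mathcal H_{t-1},a\in S_t)\le \mathbb E[Z_t(a)^2\mid\cdot]\le \mathbb E[w_t(a,S_t)^2\mid\cdot]\le W\,\mathbb E[w_t(a,S_t)\mid\cdot],
\]
using $\hat\Delta^2\le 1$ and $0\le w\le W$.

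The remaining point, and the only place any care is needed, is that the conditional mean of the weight is exactly $1$. By the conditional law computed in the proof of Lemma~\ref{lem:iwksvfair-unbiased}, $\mathbb P(S_t=S\mid a\in S_t)=P_t^{\mathrm{mix}}(S)/q_t(a)$. Therefore
\[
\mathbb E[w_t(a,S_t)\mid\mathcal H_{t-1},a\in S_t]=\sum_{S\ni a,\,|S|=K}\frac{P_t^{\mathrm{mix}}(S)}{q_t(a)}\cdot\frac{q_t(a)}{C\,P_t^{\mathrm{mix}}(S)}=\frac{1}{C}\,\#\{S\ni a:|S|=K\}=1.
\]
The permutation and the observation noise are independent of $S_t$ given $\mathcal H_{t-1}$, so they do not affect this computation. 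This gives $\operatorname{Var}\le W$.

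I expect no real obstacle. The one subtlety is remembering to bound the second moment via $W\cdot\mathbb E[w]$ instead of $W^2$; that choice is what makes the Freedman-type concentration later yield the $\sqrt{W/N_a}$ rate.
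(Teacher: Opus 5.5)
Your proposal is correct and matches the paper's proof step for step. The uniform-mixing floor $P_t^{\mathrm{mix}}(S)\ge \eta/\binom{M}{K}$, together with $q_t(a)\le 1$ and $\binom{M}{K}=\frac{M}{K}\binom{M-1}{K-1}$, gives $0\le w_t\le W$. The variance is then bounded via $\mathbb{E}[Z_t(a)^2\mid\cdot]\le W\,\mathbb{E}[w_t(a,S_t)\mid\cdot]=W$, using the conditional law $P_t^{\mathrm{mix}}(S)/q_t(a)$.

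The only cosmetic difference is that you sum over sets rather than over (set, permutation) pairs. This is legitimate because $w_t(a,S_t)$ depends only on $S_t$. Your remark that the permutation is ``independent of $S_t$'' is therefore unnecessary, and strictly speaking false, since $\sigma_t$ is drawn from the permutations of $S_t$.
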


\begin{proof}
By definition,
\[
    P_t^{\mathrm{mix}}(S)
    =
    (1-\eta)P_t(S)
    +
    \eta\frac1{\binom{M}{K}}
\]
Since $P_t(S)\ge 0$, we have
\[
    P_t^{\mathrm{mix}}(S)
    \ge
    \eta\frac1{\binom{M}{K}}
\]
Therefore,
\[
    \frac1{P_t^{\mathrm{mix}}(S)}
    \le
    \frac{\binom{M}{K}}{\eta}
\]
The importance weight is
\[
    w_t(a,S)
    =
    \frac{q_t(a)}
    {\binom{M-1}{K-1}P_t^{\mathrm{mix}}(S)}
\]
Since $q_t(a)\le 1$,
\begin{align*}
w_t(a,S)
&\le
\frac{1}
{\binom{M-1}{K-1}}
\frac{\binom{M}{K}}{\eta}
\end{align*}
Since
$
    \frac{\binom{M}{K}}{\binom{M-1}{K-1}}
    =
    \frac{M}{K}
$,
we get
\[
    w_t(a,S)
    \le
    \frac{M}{K\eta}
\]
Define
\[
    W=\frac{M}{K\eta}.
\]
Thus,
\[
    0\le w_t(a,S)\le W.
\]

Since $-1\le \hat\Delta_t(a)\le 1$, we get
\[
    0\le |Z_t(a)|
    =
    w_t(a,S_t) |\hat\Delta_t(a)|
    \le W
\]

Now we prove the variance bound. Since variance is at most the second moment,
\[
    \operatorname{Var}(Z_t(a)\mid \mathcal H_{t-1},a\in S_t)
    \le
    \mathbb E[Z_t(a)^2\mid \mathcal H_{t-1},a\in S_t]
\]
Using $0\le w_t(a,S_t)\le W$ and
$-1\le\hat\Delta_t(a)\le 1$, we have
\[
    Z_t(a)^2
    =
    w_t(a,S_t)^2\hat\Delta_t(a)^2
    \le
    Ww_t(a,S_t)\hat\Delta_t(a)^2
    \le
    Ww_t(a,S_t)
\]
Taking conditional expectation gives
\[
    \mathbb E[Z_t(a)^2\mid \mathcal H_{t-1},a\in S_t]
    \le
    W\,
    \mathbb E[w_t(a,S_t)\mid \mathcal H_{t-1},a\in S_t]
\]
Now we need to show that
\[
    \mathbb E[w_t(a,S_t)\mid \mathcal H_{t-1},a\in S_t]=1
\]

\begin{align*}
\mathbb E[w_t(a,S_t)\mid \mathcal H_{t-1},a\in S_t]
&=
\sum_{S\ni a:\,|S|=K}
\sum_{\sigma\in\Pi(S)}
\frac{P_t^{\mathrm{mix}}(S)}{q_t(a)}
\frac1{K!}
\frac{q_t(a)}
{\binom{M-1}{K-1}P_t^{\mathrm{mix}}(S)}
\\
&=
\sum_{S\ni a:\,|S|=K}
\sum_{\sigma\in\Pi(S)}
\frac{1}
{\binom{M-1}{K-1}}
\frac1{K!}
\end{align*}
There are exactly $\binom{M-1}{K-1}$ sets $S$ of size $K$ containing $a$,
and each such set has $K!$ permutations. Therefore,
\[
    \mathbb E[w_t(a,S_t)\mid \mathcal H_{t-1},a\in S_t]
    =
    \binom{M-1}{K-1}K!
    \frac{1}
    {\binom{M-1}{K-1}K!}
    =
    1
\]
Thus
\[
    \mathbb E[Z_t(a)^2\mid \mathcal H_{t-1},a\in S_t]
    \le W
\]
Therefore
\[
    \operatorname{Var}(Z_t(a)\mid \mathcal H_{t-1},a\in S_t)
    \le W 
    \qedhere
\]
\end{proof}

\begin{lemma}
\label{lemma:marginalvector}
Let $S_t$ be a random K-set with marginal vector $q_t$ and let $\sigma_t$ be a uniformly random permutation of $S_t$. For $r\in[K]$, let $P_{t,r}$ be the set containing the first $r$ elements of $\sigma_t$. Then $\mathbb{P}(i\in P_{t,r}|\mathcal{H}_{t-1}) = \frac{r}{K}q_t(i)$.
\end{lemma}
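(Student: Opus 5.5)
The plan is to condition on the history and on the realized set, using the tower property:
\[
\mathbb{P}(i\in P_{t,r}\mid\mathcal H_{t-1})=\sum_{S:|S|=K}P_t^{\mathrm{mix}}(S)\,\mathbb{P}(i\in P_{t,r}\mid S_t=S,\mathcal H_{t-1}).
\]
This splits the computation into two parts. The inner probability depends only on the uniform permutation. The outer sum depends only on the set distribution.

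\textbf{Inner term.} Fix $S$ with $|S|=K$. If $i\notin S$, then $i$ cannot lie in any prefix of a permutation of $S$, so the inner probability is $0$. If $i\in S$, note that $\sigma_t$ is drawn uniformly from the $K!$ permutations of $S$, independently of the history given $S_t$. Hence the position of $i$ in $\sigma_t$ is uniform on $\{1,\ldots,K\}$. To see this, the permutations placing $i$ at position $j$ number $(K-1)!$ for every $j$. Since $i\in P_{t,r}$ exactly when its position is at most $r$, the inner probability equals $r/K$. Together the two cases give $\mathbb{P}(i\in P_{t,r}\mid S_t=S,\mathcal H_{t-1})=\frac{r}{K}\mathbf 1\{i\in S\}$.

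\textbf{Outer sum.} Substituting the inner term gives
\[
\mathbb{P}(i\in P_{t,r}\mid\mathcal H_{t-1})=\frac{r}{K}\sum_{S\ni i}P_t^{\mathrm{mix}}(S)=\frac{r}{K}\,q_t(i).
\]
The last step uses the definition of $q_t(i)$ as the marginal of $P_t^{\mathrm{mix}}$. That definition was already verified above to equal $(1-\eta)\pi_t(i)+\eta K/M$.

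The only point requiring care is measurability. The permutation must be sampled independently of everything except $S_t$, and $P_t^{\mathrm{mix}}$ must be $\mathcal H_{t-1}$-measurable; both hold by the construction of Algorithm~\ref{algo:iw_ksvfair}. I expect no real obstacle beyond this.

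As a corollary, $\frac{K}{r}\mathbb{P}(\cdot\in P_{t,r}\mid\mathcal H_{t-1})=q_t$. So each prefix query incurs query-normalized loss exactly $\|q_t-\pi^*\|_1$, which is the form needed in the regret proof.
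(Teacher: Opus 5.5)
Your proof is correct and follows the same route as the paper. You condition on the realized set $S_t$, use that a uniformly random permutation puts $i$ among the first $r$ positions with probability $r/K$ when $i\in S_t$, and average over $S_t$ to get $\frac{r}{K}q_t(i)$, while also spelling out the permutation count and the measurability conditions that the paper leaves implicit.
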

\begin{proof}
Conditioned on $i\in S_t$, we have $P(i\in P_{t,r}|\mathcal{H}_{t-1},\mathbf{1}(i \in S_t)) = \frac{r}{K}$. Thus, $\mathbb{P}(i\in P_{t,r}|\mathcal{H}_{t-1}) = \mathbb{E}[\mathbf{1}\{i\in S_t\}P(i\in P_{t,r}|\mathcal{H}_{t-1},S_t)]=\frac{r}{K}q_t(i)$.
\end{proof}

\begin{lemma}
\label{lemma:projection}
Let $\pi_t\in
\arg\min_{p\in\mathcal{S}_{M,K}}
\|p-\widetilde{\pi}_t\|_1.$
Then, for every $a\in[M]$, $\widetilde{\pi}_t(a)
\le
C_t\pi_t(a),$
where
$C_t
=
\max\left\{
1,\|\widetilde{\pi}_t\|_\infty
\right\}
\le K.$
\end{lemma}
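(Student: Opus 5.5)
The plan is to first pin down the structure of \emph{every} minimizer of $\|p-\widetilde{\pi}_t\|_1$ over $\mathcal{S}_{M,K}$, and then read off the inequality coordinatewise. I will use only two facts about $\widetilde{\pi}_t$ from Algorithm~\ref{algo:iw_ksvfair}. Since $\hat\phi_{t,a}\in[\epsilon,1]$, every entry satisfies $\widetilde{\pi}_t(a)\ge 0$. Moreover $\sum_a \widetilde{\pi}_t(a)=K$. Since $K\le M$, the set $\mathcal{S}_{M,K}$ is nonempty and compact, so a minimizer exists. Write $O=\{a:\widetilde{\pi}_t(a)>1\}$ for the overloaded coordinates, $U=[M]\setminus O$, and $E=\sum_{a\in O}(\widetilde{\pi}_t(a)-1)$ for the excess mass.

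\textbf{Step 1 (lower bound on the cost).} Take any $p\in\mathcal{S}_{M,K}$. Since $p(a)\le 1$,
\[
\sum_{a\in O}|p(a)-\widetilde{\pi}_t(a)|\ \ge\ \sum_{a\in O}\bigl(\widetilde{\pi}_t(a)-p(a)\bigr)\ \ge\ E .
\]
Both vectors have total mass $K$, so $\sum_{a\in U}(p(a)-\widetilde{\pi}_t(a))=\sum_{a\in O}(\widetilde{\pi}_t(a)-p(a))\ge E$. This gives $\sum_{a\in U}|p(a)-\widetilde{\pi}_t(a)|\ge E$, and hence $\|p-\widetilde{\pi}_t\|_1\ge 2E$.

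\textbf{Step 2 (the bound is attained).} The vector that caps the coordinates in $O$ at $1$ and distributes the mass $E$ among coordinates of $U$, increasing each without exceeding $1$, attains cost exactly $2E$. Such a distribution is possible because $\sum_{a\in U}(1-\widetilde{\pi}_t(a))\ge E$, which follows from $M\ge K$. Therefore every minimizer has cost exactly $2E$.

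\textbf{Step 3 (equality forces the structure).} Equality in the chain of Step 1 forces two things. First, $p(a)=1$ for every $a\in O$. Second, on $U$ we need $\sum_{a\in U}|p(a)-\widetilde{\pi}_t(a)|=\sum_{a\in U}(p(a)-\widetilde{\pi}_t(a))$, which means $p(a)\ge\widetilde{\pi}_t(a)$ for all $a\in U$. This characterization is the main step. The subtle point is that the $\ell_1$ projection is not unique, so the argument must cover an arbitrary element of the $\arg\min$; that is exactly why I go through the tight lower bound rather than an explicit formula.

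\textbf{Step 4 (conclusion).} For $a\in U$ we have $\widetilde{\pi}_t(a)\le\pi_t(a)\le C_t\pi_t(a)$, using $C_t\ge 1$ and $\pi_t(a)\ge 0$. For $a\in O$ we have $\widetilde{\pi}_t(a)\le\|\widetilde{\pi}_t\|_\infty=\|\widetilde{\pi}_t\|_\infty\,\pi_t(a)\le C_t\pi_t(a)$. Finally, $C_t\le K$ because $\widetilde{\pi}_t$ is nonnegative with sum $K$, so $\|\widetilde{\pi}_t\|_\infty\le K$, and $K\ge 1$.
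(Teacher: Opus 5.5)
Your proposal is correct and follows essentially the same route as the paper. Both arguments lower-bound the $\ell_1$ cost by $2E$ using the excess mass on the coordinates with $\widetilde{\pi}_t(a)>1$, show this bound is attained because $M\ge K$, and conclude that every minimizer sets those coordinates to $1$ and never decreases the others, from which the inequality follows coordinatewise. The only difference is cosmetic: the paper obtains the lower bound from the identity $\|\widetilde{\pi}_t-p\|_1=2\sum_a(\widetilde{\pi}_t(a)-p(a))_+$, whereas you split the sum over $O$ and $U$ directly.
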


\begin{proof}
Let
\[
H
:=
\left\{
a\in[M]:
\widetilde{\pi}_t(a)>1
\right\}
\]
denote the set of coordinates that violate the upper-bound
constraint, and let
\[
E
:=
\sum_{a\in H}
\left(
\widetilde{\pi}_t(a)-1
\right)
\]
denote their total excess mass.

For any $p\in\mathcal{S}_{M,K}$, since $p(a)\le1$ for every
$a\in H$, we have
\[
\widetilde{\pi}_t(a)-p(a)
\ge
\widetilde{\pi}_t(a)-1,
\qquad a\in H.
\]
Hence,
\[
\sum_{a=1}^M
\left(
\widetilde{\pi}_t(a)-p(a)
\right)_+
\ge
\sum_{a\in H}
\left(
\widetilde{\pi}_t(a)-p(a)
\right)
\ge
E.
\]

Moreover, $\widetilde{\pi}_t$ and $p$ have the same total
mass. Therefore,
\[
\sum_{a=1}^M
\left(
\widetilde{\pi}_t(a)-p(a)
\right)_+
=
\sum_{a=1}^M
\left(
p(a)-\widetilde{\pi}_t(a)
\right)_+.
\]
It follows that
\[
\begin{aligned}
\|\widetilde{\pi}_t-p\|_1
&=
\sum_{a=1}^M
\left(
\widetilde{\pi}_t(a)-p(a)
\right)_+
+
\sum_{a=1}^M
\left(
p(a)-\widetilde{\pi}_t(a)
\right)_+ \\
&=
2
\sum_{a=1}^M
\left(
\widetilde{\pi}_t(a)-p(a)
\right)_+ \\
&\ge 2E.
\end{aligned}
\]

We next show that this lower bound is attainable. Set
$p(a)=1$ for every $a\in H$ and redistribute the removed
mass $E$ among the coordinates in $H^c$, without decreasing
any of those coordinates. Such a redistribution is feasible
because the total unused capacity on $H^c$ is
\[
\begin{aligned}
\sum_{a\in H^c}
\left(
1-\widetilde{\pi}_t(a)
\right)
&=
M-|H|-\sum_{a\in H^c}\widetilde{\pi}_t(a) \\
&=
M-|H|-K+\sum_{a\in H}\widetilde{\pi}_t(a) \\
&=
M-K+E \\
&\ge E,
\end{aligned}
\]
where the last inequality follows from $M\ge K$.
For this feasible vector,
\[
\sum_{a\in H}
|\widetilde{\pi}_t(a)-p(a)|
=
E
\]
and
\[
\sum_{a\in H^c}
|\widetilde{\pi}_t(a)-p(a)|
=
E.
\]
Hence, the minimum $\ell_1$ distance is exactly $2E$. Now let $\pi_t$ be any $\ell_1$ projection. Since it attains
the lower bound $2E$, equality must hold in the inequalities
above. In particular,
\[
\pi_t(a)=1,
\qquad a\in H,
\]
and no mass can be removed from any coordinate in $H^c$.
Therefore,
\[
\pi_t(a)\ge\widetilde{\pi}_t(a),
\qquad a\in H^c.
\]
If $H=\phi$, it is easy to see that $\tilde\pi_t = \pi_t$.
If $H\ne \phi$, then $C_t\ge1$, therefore if $a\in H^c$,
\[
\widetilde{\pi}_t(a)
\le
\pi_t(a)
\le
C_t\pi_t(a).
\]
If $a\in H$, then $\pi_t(a)=1$, so
\[
\widetilde{\pi}_t(a)
\le
\|\widetilde{\pi}_t\|_\infty
\pi_t(a)
\le
C_t\pi_t(a).
\]
Thus,
\[
\widetilde{\pi}_t(a)
\le
C_t\pi_t(a),
\qquad \forall a\in[M].
\]

Finally, since $\widetilde{\pi}_t$ is nonnegative and has
total mass $K$,
\[
\|\widetilde{\pi}_t\|_\infty
\le
\sum_{a=1}^M\widetilde{\pi}_t(a)
=
K.
\]
Therefore, $C_t\le K$.
\end{proof}
\appsubsection{Regret Proof of~\ouralgo}{appendix:regret_iwksvfair}

\begin{proof}
\label{app:iwksv-regret}

We use $T$ to denote the total number of full-bandit queries.
Each macro-round consists of $K$ prefix queries. For ease of
presentation, suppose first that $T$ is divisible by $K$, and
define $n:=\frac{T}{K}$. We index macro-rounds by $\tau\in[n]$. The $r$-th prefix query of macro-round $\tau$ occurs at atomic query index
$t=(\tau-1)K+r, r\in[K]$.

For every arm $a\in[M]$, let $\Phi:=\sum_{a=1}^M\phi_a^K$, and assume $\phi_a^K\in[\epsilon,1], \forall a\in[M],$
for some $\epsilon>0$. The full-budget fair marginal vector
is $\pi^*(a)=\frac{K\phi_a^K}{\Phi}.$ We also assume $\pi^* \in \mathcal{S}_{M,K}$. At the beginning of macro-round $\tau$, let $N_{\tau,a}
=
\sum_{s=1}^{\tau-1}\mathbb{I}\{a\in S_s\}$
denote the number of importance-weighted observations
available for arm $a$. Let $\overline\phi_{\tau,a}$ be their
empirical average and $\widehat\phi_{\tau,a} = \min\{\max\{\overline\phi_{\tau,a},\epsilon\},1\}$ define the clipped estimate. Let
$\widehat\Phi_\tau
=
\sum_{a=1}^M\widehat\phi_{\tau,a}$ and 
$\widetilde\pi_\tau(a)
=
\frac{K\widehat\phi_{\tau,a}}{\widehat\Phi_\tau}$.
 The algorithm computes
$\pi_\tau
\in
\arg\min_{p\in\mathcal{S}_{M,K}}
\|p-\widetilde\pi_\tau\|_1.$ Let $C=\max_{\tau\in[n]}C_\tau$, with $C_\tau$ defined in Lemma \ref{lemma:projection}. The distribution actually used to sample $S_\tau$ has marginals $q_\tau(a)
=
(1-\eta)\pi_\tau(a)+\eta\frac{K}{M}.$
\\[-0.5em]

\noindent\textbf{Uniform concentration of the $K$-Shapley estimates.}~
Fix an arm $a$. Let $\tau_{a,j}$ be the macro-round in
which arm $a$ is selected for the $j$-th time, and define
$Z_{a,j}=Z_{\tau_{a,j}}(a),
X_{a,j}:=Z_{a,j}-\phi_a^K.$ Let $\mathcal{G}_{a,j-1}$ denote the history of all the rounds in which arm $a$ was selected. Then from Lemma \ref{lem:iwksvfair-unbiased} we have:
\[
\mathbb{E}[X_{a,j}\mid\mathcal{G}_{a,j-1}]=0.
\]
Furthermore, from Lemma \ref{lem:iwksvfair-range-variance},  $|Z_{a,j}|\le W$
and $\operatorname{Var}
\bigl(
Z_{a,j}\mid\mathcal{G}_{a,j-1}
\bigr)
\le W$. Since $\phi_a\in[0,1]$ and $W\ge1$,
$|X_{a,j}|
\le |Z_{a,j}|+\phi_a
\le W+1
\le 2W$ and $\mathbb{E}
\left[
X_{a,j}^2\mid\mathcal{G}_{a,j-1}
\right]
\le W$.

Let $L_\delta
:=
\log\left(
\frac{2M(n+1)}{\delta}
\right)$.
By the two-sided Freedman inequality, followed by a union
bound over all arms $a\in[M]$ and all sample counts
$j\in[n]$, with probability at least $1-\delta$, simultaneously
for every $a\in[M]$ and every macro-round $\tau\in[n+1]$,
\[
\left|
\overline\phi_{\tau,a}-\phi_a^K
\right|
\le
\sqrt{
\frac{2WL_\delta}{N_{\tau,a}\vee1}
}
+
\frac{2WL_\delta}
{3(N_{\tau,a}\vee1)}.
\]
Because $\phi_a\in[\epsilon,1]$ and projection onto
$[\epsilon,1]$ cannot increase the distance to any point in
that interval,
\[
e_{\tau,a} = \left|
\widehat\phi_{\tau,a}-\phi_a^K
\right|
\le
\left|
\overline\phi_{\tau,a}-\phi_a^K
\right| \le \sqrt{
\frac{2WL_\delta}{N_{\tau,a}\vee1}
}
+
\frac{2WL_\delta}
{3(N_{\tau,a}\vee1)} = b(N_{\tau,a}).
\]

\noindent\textbf{Relating policy error to estimation error.}~
By optimality of the $\ell_1$ projection and feasibility of
$\pi^*$, $\|\pi_\tau-\widetilde\pi_\tau\|_1
\le
\|\pi^*-\widetilde\pi_\tau\|_1$.
Hence, by the triangle inequality, $\|\pi_\tau-\pi^*\|_1
\le
2\|\widetilde\pi_\tau-\pi^*\|_1$.
Now,
\[
\begin{aligned}
\|\widetilde\pi_\tau-\pi^*\|_1
&=
K\sum_{a=1}^M
\left|
\frac{\widehat\phi_{\tau,a}}{\widehat\Phi_\tau}
-
\frac{\phi_a^K}{\Phi}
\right|\\
&\le
\frac{K}{\widehat\Phi_\tau}
\sum_{a=1}^M
|\widehat\phi_{\tau,a}-\phi_a^K|
+
K\sum_{a=1}^M
\phi_a^K
\left|
\frac{1}{\widehat\Phi_\tau}
-
\frac{1}{\Phi}
\right|\\
&=
\frac{K}{\widehat\Phi_\tau}
\sum_{a=1}^M e_{\tau,a}
+
\frac{K}{\widehat\Phi_\tau}
|\widehat\Phi_\tau-\Phi|\\
&\le
\frac{2K}{\widehat\Phi_\tau}
\sum_{a=1}^M e_{\tau,a}.
\end{aligned}
\]
It follows that
\[
\|\pi_\tau-\pi^*\|_1
\le
\frac{4K}{\widehat\Phi_\tau}
\sum_{a=1}^M e_{\tau,a}.
\]
Dividing and multiplying by $\widehat\phi_{\tau,a}$, we get:
\[
\begin{aligned}
\|\pi_t-\pi^*\|_1
&\le
4\sum_{a=1}^M
\widetilde\pi_t(a)
\frac{e_{t,a}}{\widehat\phi_{t,a}}\\
&\le
\frac{4}{\epsilon}
\sum_{a=1}^M
\widetilde\pi_t(a)e_{t,a}\\
&\le
\frac{4C}{\epsilon}
\sum_{a=1}^M
\pi_t(a)e_{t,a},
\end{aligned}
\]
where the final inequality follows from
Lemma~\ref{lemma:projection}.

Since
$
q_\tau=(1-\eta)\pi_\tau+\eta \pi_{unif},
$
we have
\[
\begin{aligned}
\|q_\tau-\pi^*\|_1
&\le
(1-\eta)\|\pi_\tau-\pi^*\|_1
+
\eta\|\pi_{unif}-\pi^*\|_1\\
&\le
\frac{4C(1-\eta)}{\epsilon}
\sum_{a=1}^M\pi_\tau(a)e_{\tau,a}
+
2\eta K.
\end{aligned}
\]
Moreover,
$
q_\tau(a)\ge(1-\eta)\pi_\tau(a)
$
for every arm. Therefore,
\begin{equation}
\label{eq:q-policy-error}
\|q_\tau-\pi^*\|_1
\le
\frac{4C}{\epsilon}
\sum_{a=1}^M q_\tau(a)e_{\tau,a}
+
2\eta K.
\end{equation}
\noindent\textbf{Query-normalized regret of a macro-round.}~
Let $P_{\tau,r}$ denote the prefix of size $r$ generated in
macro-round $t$. Its conditional marginal vector is $\pi_\tau^{(r)}
=
\frac{r}{K}q_\tau$ from Lemma \ref{lemma:marginalvector}, while the corresponding size-$r$ fair vector is
$\pi^{*,(r)}
=
\frac{r}{K}\pi^*$.
Therefore, its query-normalized fairness loss is
\[
\begin{aligned}
\frac{K}{r}
\left\|
\pi_\tau^{(r)}-\pi^{*,(r)}
\right\|_1
&=
\frac{K}{r}
\left\|
\frac{r}{K}q_\tau-\frac{r}{K}\pi^*
\right\|_1
&=
\|q_\tau-\pi^*\|_1.
\end{aligned}
\]
Since each macro-round contains exactly $K$ prefix queries,
its total query-normalized loss is
\begin{equation}
\label{eq:macro-regret-bound}
\mathcal{L}_t
=
K\|q_t-\pi^*\|_1 \le \frac{4CK}{\epsilon}
\sum_{a=1}^M q_t(a)e_{t,a}
+
2\eta K^2. 
\end{equation}

\noindent\textbf{Counting argument.}~
For every macro-round $\tau$ and arm $a$, define $I_{\tau,a}:=\mathbb{I}\{a\in S_\tau\}$.
Conditionally on the history before macro-round $\tau$, $\mathbb{E}[I_{\tau,a}\mid\mathcal{H}_{\tau-1}]
=
q_\tau(a)$. For a fixed arm $a$, every time $I_{\tau,a}=1$, the counter
$N_{\tau,a}$ increases by one. Therefore, 
\[
\begin{aligned}
\sum_{\tau=1}^n
\frac{I_{\tau,a}}
{\sqrt{N_{\tau,a}\vee1}}
&\le
1+\sum_{j=1}^{N_{n+1,a}-1}\frac{1}{\sqrt{j}}
&\le
1+2\sqrt{N_{n+1,a}},
\end{aligned}
\]
and
\[
\begin{aligned}
\sum_{\tau=1}^n
\frac{I_{\tau,a}}
{N_{\tau,a}\vee1}
&\le
1+\sum_{j=1}^{N_{n+1,a}-1}\frac{1}{j}
&\le
2+\log n.
\end{aligned}
\]

Summing the first inequality over the arms and applying
Cauchy--Schwarz gives
\[
\begin{aligned}
\sum_{a=1}^M\sqrt{N_{n+1,a}}
&\le
\sqrt{
M\sum_{a=1}^M N_{n+1,a}
}.
\end{aligned}
\]
Exactly $K$ arms are selected in each macro-round. Hence,
\[
\sum_{a=1}^M N_{n+1,a}
=
Kn
=
T.
\]
It follows that
\begin{equation}
\label{eq:sqrt-counting}
\sum_{\tau=1}^n\sum_{a=1}^M
\mathbb{E}
\left[
\frac{q_\tau(a)}
{\sqrt{N_{\tau,a}\vee1}}
\right]
\le
M+2\sqrt{MT}.
\end{equation}
Similarly,
\begin{equation}
\label{eq:linear-counting}
\sum_{\tau=1}^n\sum_{a=1}^M
\mathbb{E}
\left[
\frac{q_\tau(a)}
{N_{\tau,a}\vee1}
\right]
\le
M(2+\log n).
\end{equation}

On the Freedman event,
\[
e_{\tau,a}
\le
\sqrt{
\frac{2WL_\delta}{N_{\tau,a}\vee1}
}
+
\frac{2WL_\delta}
{3(N_{\tau,a}\vee1)}.
\]
Combining this with
\eqref{eq:sqrt-counting} and
\eqref{eq:linear-counting}, we obtain:
\begin{align}
\mathbb{E}
\left[
\sum_{t=1}^n\sum_{a=1}^M
q_t(a)e_{t,a}
\right]
&\quad\le
(1-\delta)\left[\sqrt{2WL_\delta}
\left(
M+2\sqrt{MT}
\right)
+
\frac{2WL_\delta}{3}
M(2+\log n)\right] + 2KT\delta .
\label{eq:weighted-error-count}
\end{align}

\noindent\textbf{Completing the regret bound.}~
Summing~\eqref{eq:macro-regret-bound} over the
$n=T/K$ macro-rounds and using
\eqref{eq:weighted-error-count}, 
\begin{align}
FR_T
\le{}&
(1-\delta)\left[\frac{4CK}{\epsilon}
\sqrt{2WL_\delta}
\left(
M+2\sqrt{MT}
\right)+
\frac{8CKWL_\delta}{3\epsilon}
M(2+\log n)
+
2\eta K^2n\right] + 2KT\delta.
\label{eq:regret-before-failure}
\end{align}
Thus, the total loss over $T$ queries is at most $2KT$ with probability atmost $\delta$.
\\
Choosing $\delta=T^{-1}$ gives
\begin{align}
FR_T
\le{}&
\frac{4CK}{\epsilon}
\sqrt{2WL_\delta}
\left(
M+2\sqrt{MT}
\right)+
\frac{8CKWL_\delta}{3\epsilon}
M(2+\log n)
+
2\eta KT
+
2K.
\label{eq:explicit-iwksv-bound}
\end{align}

Since $W\ge1$, the term proportional to $M\sqrt{W}$ is
absorbed by the term proportional to $MW$. Letting $\delta$ to be of the order $1/T$ and ignoring
logarithmic factors, we obtain
\begin{equation}
\label{eq:iwksv-general-C}
FR_T
=
\widetilde{O}
\left(
\frac{CK}{\epsilon}\sqrt{WMT}
+
\frac{CKWM}{\epsilon}
+
\eta KT
\right).
\end{equation}
Substituting
$
W=\frac{M}{K\eta}
$
into~\eqref{eq:iwksv-general-C} gives
\begin{equation}
\label{eq:iwksv-general-C-eta}
FR_T
=
\widetilde{O}
\left(
\frac{C}{\epsilon}M\sqrt{K}
\sqrt{\frac{T}{\eta}}
+
\frac{M^2C}{\epsilon\eta}
+
\eta KT
\right).
\end{equation}
Balancing the leading estimation term with the mixing term
gives
\[
\eta
= \Theta
\left(
\frac{M}
{\sqrt{KT}}
\right)^{2/3}.
\]
With this choice,
\begin{equation}
\label{eq:iwksv-general-C-final}
FR_T
=
\widetilde{O}
\left(
\frac{C}{\epsilon}M^{2/3}K^{2/3}
T^{2/3}
+
\frac{C}{\epsilon}M^{4/3}K^{1/3}
T^{1/3}
\right).
\end{equation}
This establishes a sublinear
$\widetilde{O}(T^{2/3})$ query-normalized fairness regret
bound.

If $T$ is not divisible by $K$, let
$n=\lfloor T/K\rfloor$ and run $n$ complete macro-rounds.
There are fewer than $K$ remaining atomic queries. Since the
query-normalized loss of each atomic query is at most $2K$,
the incomplete final block contributes at most $2K^2$.
Therefore, the same asymptotic bound continues to hold.

In experiments, we set the uniform mixing parameter to
$
\eta=\frac{M^{2/3}}{\Gamma K^{1/3}T^{1/3}},
$
where $\Gamma>0$ is a constant that controls the extent of uniform exploration.
A smaller value of $\Gamma$ results in a larger value of $\eta$ and
hence greater uniform exploration, while a larger
$\Gamma$ reduces the contribution of the uniform component.
As $\eta$ is a mixing probability, it must satisfy
$
\eta\leq 1.
$
So, we choose $\Gamma$ such that
$
\frac{M^{2/3}}{\Gamma K^{1/3}T^{1/3}}\leq 1,
$
or equivalently
$
\Gamma\geq
\frac{M^{2/3}}{K^{1/3}T^{1/3}}.
$
To guarantee feasibility for every considered horizon
$T\geq 10^{4}$, we impose the horizon-independent condition
$
\Gamma\geq
\frac{M^{2/3}}{K^{1/3}10^{4/3}}.
$
The empirical effect and selection of $\Gamma$ are discussed in
Appendix~\ref{appendix: gamma}.
\end{proof}

\appsubsection{Waterfilling}{appendix:poisson}
In our experiments, we use iterative capped proportional normalization, also known as upper-capped water filling~\cite{waterfilling}. Given non-negative $K$-Shapley scores $\hat\phi_i$, we initially assign marginal probabilities in proportion to these scores. Whenever the resulting marginal probability of an arm exceeds one, we set it to one, reduce the remaining budget by one, and redistribute the remaining probability mass among the unsaturated arms in proportion to their $K$-Shapley scores. This procedure is repeated until all marginal probabilities lie in $[0,1]$. Equivalently, when at least $K$ arms have positive merit, the resulting policy is $\pi_t(i)=\min\{1,\lambda_t \hat\phi_i\},$ where $\lambda_t \geq 0$ is chosen such that
$
\sum_{i\in[M]}\pi_t(i)=K.
$
If fewer than $K$ arms have positive scores, all positive-score arms are selected with probability one, and the remaining probability mass is distributed uniformly among the zero-score arms. Thus, the constructed marginal vector always satisfies $0\leq \pi_t(i)\leq 1$ for every $i\in[M]$ and
$
\sum_{i\in[M]}\pi_t(i)=K,
$
while preserving proportionality among all unsaturated positive-score arms.

Note that waterfilling is used only to construct feasible online
policies from the estimated $K$-Shapley values $\hat{\phi}_i$. The
optimal policy $\pi^*$ is instead computed directly from the true 
$K$-Shapley values as
\[
\pi^*(i)
=
\frac{K\phi_i^K}{\sum_{j\in[M]}\phi_j^K}.
\]
For every experimental instance in both the synthetic and SIM settings,
we verified that $\pi^*(i)\leq 1$ for all $i\in[M]$. Hence,
$\pi^*\in\mathcal{S}_{M,K}$ and no coordinate requires capping.

\appsubsection{Maximum-Entropy Sampling in \ouralgo}{appendix:entropy}

At each policy update, \ouralgo\ constructs a target marginal vector
$\boldsymbol{\pi}_t\in\mathcal{S}_{M,K}$. Since these arm-level
marginals do not uniquely determine a distribution over size-$K$ sets,
we use the maximum-entropy distribution~\cite{chen1994weighted, sampling}:
\[
P_{{\theta}}(S)
=
\frac{
\exp\!\left(\sum_{a\in S}\theta_a\right)
}{
Z_K({\theta})
},
\qquad |S|=K,
\]
where
\[
Z_K({\theta})
=
\sum_{\substack{S\subseteq[M]\\|S|=K}}
\exp\!\left(\sum_{a\in S}\theta_a\right).
\]
The parameter vector ${\theta}$ is calibrated so that the
resulting arm-inclusion probabilities match
$\pi_t$. This is the conditional Poisson representation of
the maximum-entropy distribution over fixed-cardinality sets. For the synthetic setting, where $M=20$ and $K=5$, we explicitly
enumerate all
$
\binom{20}{5}=15504
$
feasible sets and calibrate $\boldsymbol{\theta}$ using warm-started
BFGS (Broyden-Fletcher-Goldfarb-Shanno, an algorithm for unconstrained nonlinear optimization). The complete set probability vector can then be constructed and
sampled directly. Although this implementation provides an exact
maximum-entropy distribution up to numerical optimization tolerance,
its computational and memory requirements scale with
$\binom{M}{K}$ and therefore become prohibitive as $M$ or $K$
increases.

For SIM dataset where $M=534$ and $K=10$, explicit
enumeration is infeasible. So, we use a scalable
conditional Poisson implementation based on dynamic programming (DP). The
DP evaluates the partition function and the induced
marginals in $O(MK)$ time per calibration iteration and samples a
size-$K$ set without constructing the complete set-probability vector.
It also provides the probability $P_{\boldsymbol{\theta}}(S_t)$ of the
sampled set, which is required by the importance-weighted estimator.
The parameters are warm-started across policy updates and calibrated
until the maximum marginal error is below a prescribed 
tolerance.

Both implementations use the same conditional Poisson family. They
differ only in how its parameters, marginals, and samples are computed.
We refer to the dynamic-programming implementation as
\ouralgo~(approx) because the prescribed marginals are matched up to
the calibration tolerance rather than through explicit optimization
over all feasible sets. We also evaluate this approximation on the
synthetic setting to compare it with the enumerated implementation.

\begin{figure}[t]
    \centering
    \includegraphics[width=0.7\linewidth]{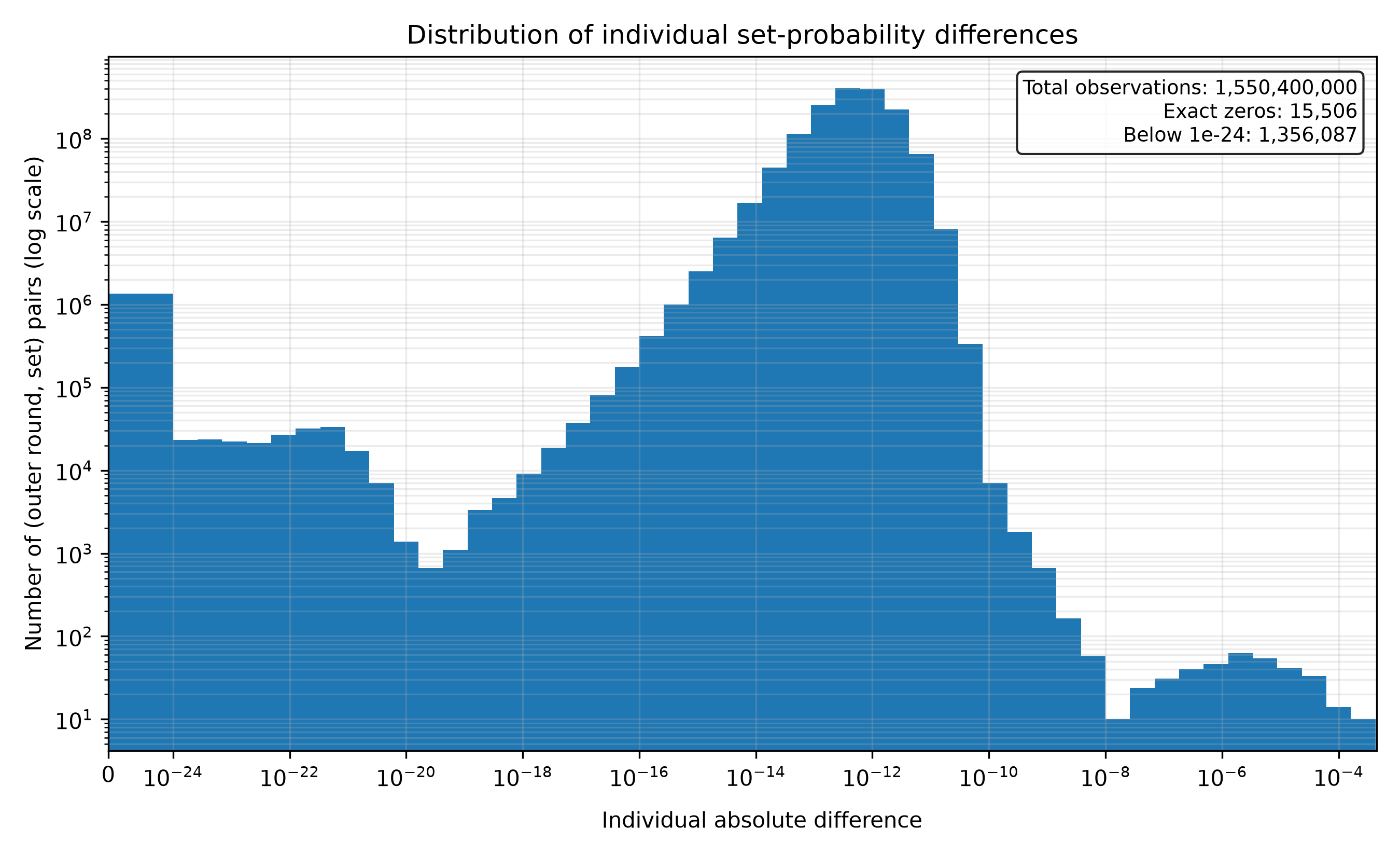}
    \caption{Distribution of the individual absolute differences
    $\left|
    P_t^{\mathrm{BFGS}}(S)
    -
    P_t^{\mathrm{DP}}(S)
    \right|$
    over all sets and all distribution-update rounds. Both axes are shown on
    logarithmic scales (differences below
$10^{-24}$ are grouped into the leftmost near-zero bin).}
    \label{fig:pt_individual_difference_histogram}
\end{figure}

Although the two implementations target the same distributional
family, they employ different numerical procedures. We therefore
empirically verify whether the scalable implementation reproduces the
distribution obtained by the enumerated reference implementation. This
comparison must be performed at the level of the complete subset
distribution, rather than only at the level of arm marginals. In
particular, different joint distributions can have identical arm
marginals while inducing substantially different dependencies among
the selected arms. Moreover, the subset probability enters directly
into the importance-weighted estimator.

For the synthetic dataset with $T=10^{6}$, we first run the exact
IW-KSVFair algorithm and record the sequence of feasible
arm-marginal vectors $\pi_t$ generated along its adaptive learning
trajectory. From this exact IW-KSVFair trajectory, we consider
$100{,}000$ evaluated policy-update rounds (outer rounds). At every such update,
the same target marginal vector $\pi_t$ is supplied to both
distribution construction procedures: the enumerated implementation,
whose parameters are calibrated using the
BFGS algorithm, and the scalable
DP implementation. 
Let $P_t^{\mathrm{BFGS}}$ and $P_t^{\mathrm{DP}}$ denote
the distributions obtained from the enumerated BFGS and
DP implementations, respectively, using the same
$\pi_t$ generated by exact IW-KSVFair. 
Since $M=20$ and $K=5$, each
distribution assigns probabilities to all
$
\binom{20}{5}=15{,}504
$
feasible size-$K$ sets. Hence, the comparison contains
$
100{,}000\times15{,}504
=
1{,}550{,}400{,}000
$
individual set-probability differences.
Figure~\ref{fig:pt_individual_difference_histogram} shows the distribution
of
\[
\left|
P_t^{\mathrm{BFGS}}(S)
-
P_t^{\mathrm{DP}}(S)
\right|
\]
over the evaluated policy updates and feasible sets. The histogram
is strongly concentrated at very small values, with most positive
differences lying approximately between $10^{-14}$ and $10^{-10}$.
Across all comparisons, the mean absolute difference is
$3.04\times10^{-10}$, the median among strictly positive differences
is $4.84\times10^{-13}$, and the maximum observed difference is
$2.98\times10^{-4}$. Also, $15{,}506$ comparisons have exactly
zero difference, $1{,}356{,}087$ strictly positive differences
are smaller than $10^{-24}$, and are grouped into the leftmost
near-zero bin. These results show that, for the generated marginal vectors, the scalable DP
implementation reproduces the enumerated BFGS set probabilities with
high pointwise accuracy.

\appsubsection{Computation of the Fair Policy ($\pi^*$)}{appendix:fair_policy}

The optimal fair policy $\pi^*$ represents the marginal selection probabilities that would be used if the $K$-Shapley values were known. For each arm $a$, the $K$-Shapley value can be written as
$$
\phi_a^K
=
\frac{1}{K}
\sum_{r=0}^{K-1}
\frac{1}{\binom{M-1}{r}}
\sum_{\substack{
B\subseteq[M]\setminus\{a\}\\
|B|=r
}}
\left[
V(B\cup\{a\})-V(B)
\right].
$$

This expression averages the marginal contribution of arm $a$ over predecessor coalitions of sizes $0,\ldots,K-1$. Every predecessor size receives weight $1/K$, and the predecessor coalition is uniformly distributed among all coalitions of that size.

For the synthetic environment, $M=20$ and $K=5$. The expected valuation of a set is
$$
V(S)
=
1-\prod_{a\in S}(1-\mu_a).
$$

For any arm $a\notin B$, its marginal contribution to $B$ is
$$
\begin{aligned}
\Delta_a(B)
&=
V(B\cup\{a\})-V(B)\\
&=
\left[
1-(1-\mu_a)\prod_{j\in B}(1-\mu_j)
\right]
-
\left[
1-\prod_{j\in B}(1-\mu_j)
\right]\\
&=
\mu_a
\prod_{j\in B}(1-\mu_j).
\end{aligned}
$$

We compute $\phi_a^K$ exactly by enumerating all predecessor coalitions $B\subseteq[M]\setminus\{a\}$ with $|B|\leq K-1$. The total number of predecessor coalitions for one arm is
$$
\sum_{r=0}^{K-1}\binom{M-1}{r}.
$$

For $M=20$ and $K=5$, this exact computation is practical and does not introduce Monte Carlo error.

\noindent\textbf{Monte Carlo Reference for Social Influence Maximization.}~
Exact $K$-Shapley computation is not practical for the Facebook graph because the number of coalitions is large and the valuation of a seed set is itself an expectation over random diffusion outcomes. So, we construct a fixed offline Monte Carlo reference.
For each node $a$, we independently repeat the following procedure $R_{\mathrm{ref}}=5000$ times:

\begin{itemize}
    \item Sample a set $S$ of size $K$ uniformly from all $K$-sets containing node $a$.
    \item Sample a uniformly random permutation of $S$.
    \item Let $B$ be the set of nodes that appear before $a$ in the permutation.
    \item Evaluate $V(B\cup\{a\})$ and $V(B)$ using the same live-edge realization.
    \item Record the marginal contribution
    $
    \Delta_a(B)
    =
    V(B\cup\{a\})-V(B).
    $
\end{itemize}

The reference estimate is
$$
\phi_a^{K,\mathrm{ref}}
=
\frac{1}{R_{\mathrm{ref}}}
\sum_{\ell=1}^{R_{\mathrm{ref}}}
\Delta_a^{(\ell)}.
$$

The same reference vector is used for every algorithm, time horizon, and online random seed. The reference is computed before the online experiments and is used only for evaluation. No learning algorithm receives access to this vector.

Given non-negative reference merits $\phi_a^{K,\mathrm{ref}}$, the direct proportional marginal vector is
$$
{\pi}^{\star}(a)
=
\frac{
K\phi_a^{K,\mathrm{ref}}
}{
\sum_{j=1}^{M}\phi_j^{K,\mathrm{ref}}
}.
$$
This vector sums to $K$ and 
\[
\max_{a} \pi^{\star}(a) \leq 1.
\]

\appsection{Ablation Study}
{appendix:abalition}

\begin{figure}[t]
    \centering

    \begin{subfigure}[t]{0.48\textwidth}
        \centering
        \includegraphics[width=\linewidth]{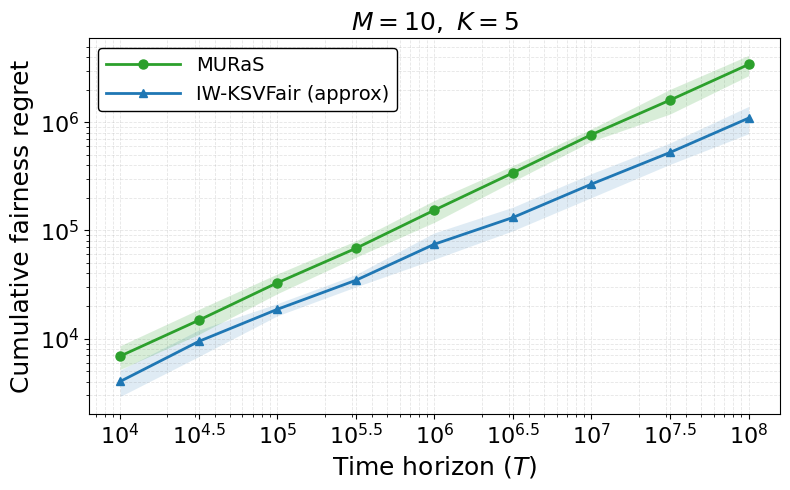}
        \subcaption{$M=10$, $K=5$}
        \label{fig:image1}
    \end{subfigure}
    \hfill
    \begin{subfigure}[t]{0.48\textwidth}
        \centering
        \includegraphics[width=\linewidth]{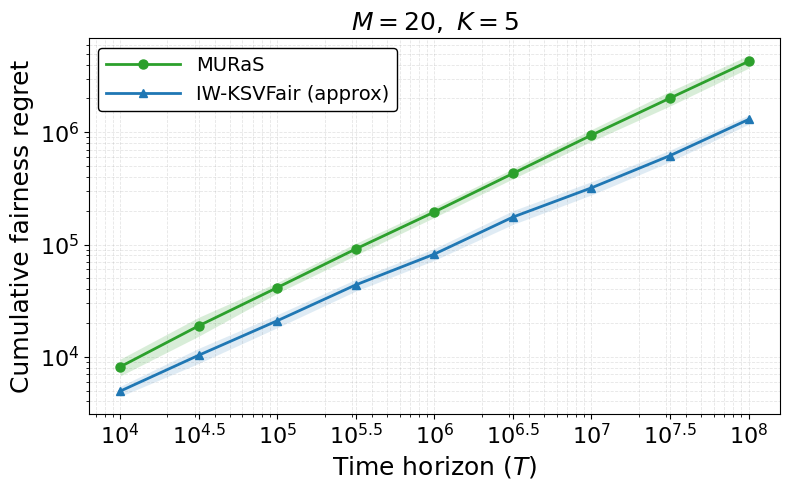}
        \subcaption{$M=20$, $K=5$}
        \label{fig:image2}
    \end{subfigure}

    \vspace{0.3cm}

    \begin{subfigure}[t]{0.48\textwidth}
        \centering
        \includegraphics[width=\linewidth]{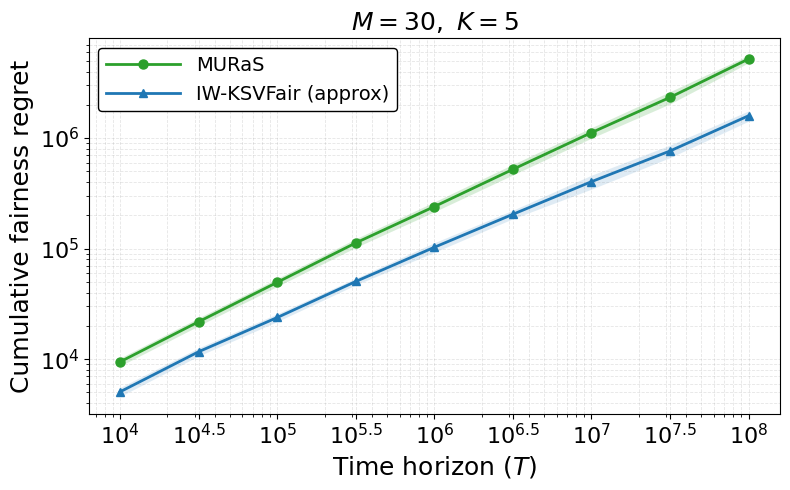}
        \subcaption{$M=30$, $K=5$}
        \label{fig:image3}
    \end{subfigure}
    \hfill
    \begin{subfigure}[t]{0.48\textwidth}
        \centering
        \includegraphics[width=\linewidth]{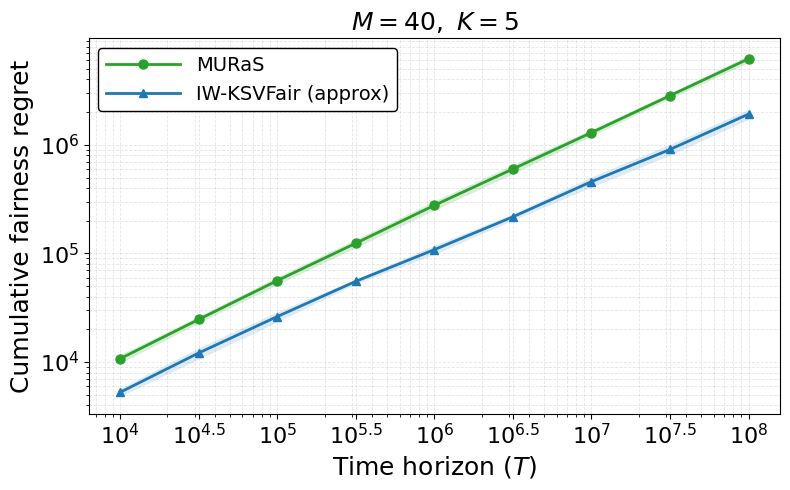}
        \subcaption{$M=40$, $K=5$}
        \label{fig:image4}
    \end{subfigure}

    \caption{Cumulative fairness regret of MURaS and \ouralgo~(approx) on synthetic dataset for $M\in\{10,20,30,40\}$ with $K=5$.}
    \label{fig:different_m}
\end{figure}

\appsubsection{Effect of the Number of Arms $M$}{appendix:different $M$}

Figure~\ref{fig:different_m} studies the effect of the number of arms on
the synthetic dataset by varying $M\in\{10,20,30,40\}$ while fixing the
selection budget at $K=5$. Across all values of $M$, both MURaS and \ouralgo~(approx)
exhibit sublinear growth in cumulative fairness regret. However,
\ouralgo~(approx) consistently achieves substantially lower regret
than MURaS over the entire time horizon. As $M$ increases, the cumulative fairness regret of both algorithms
increases because the learner must estimate the merits of a larger
number of arms and learn a fair policy over a larger action space. The
performance gap nevertheless persists across all considered values of
$M$. MURaS explores the arms uniformly before committing to its
estimated fair policy, and its exploration cost therefore increases
with the number of arms. In contrast, \ouralgo~(approx) continuously
adapts its sampling distribution using the estimated $K$-Shapley
values, resulting in considerably lower cumulative regret. These
results demonstrate that the empirical advantage of adaptive
merit-guided exploration is robust to an increasing number of arms.

\appsubsection{Effect of the Selection Budget $K$}{appendix:different_K}

Figure~\ref{fig:different_k} evaluates the effect of the selection
budget on the synthetic dataset by varying
$K\in\{3,5,10,15\}$ while fixing $M=20$. For every value of $K$,
both MURaS and \ouralgo~(approx) exhibit sublinear cumulative fairness
regret. Moreover, \ouralgo~(approx) consistently achieves lower regret
than MURaS.

The cumulative fairness regret of both algorithms increases with $K$,
which is consistent with the dependence on $K$ in their theoretical
regret bounds. However, the difference between the two algorithms
becomes smaller as $K$ increases. When $K$ is small, only a few arms
are selected in each round, and uniform exploration can differ
considerably from the merit-proportional target policy. Therefore,
MURaS incurs substantially higher regret during its uniform exploration
phase. In contrast, \ouralgo~(approx) adapts its selection probabilities
according to the estimated $K$-Shapley values.

When $K$ approaches $M$, most arms must be selected in every round, e.g., when $K=15$ and $M=20$, each round selects three-fourths
of the arms. As a result, both uniform exploration and the
merit-proportional policy assign high selection probabilities to most
arms. Hence, the two policies become more similar, which explains
why the regret gap between MURaS and \ouralgo~(approx) decreases for
larger values of $K$.

\begin{figure}[t]
    \centering

    \begin{subfigure}[t]{0.48\textwidth}
        \centering
        \includegraphics[width=\linewidth]{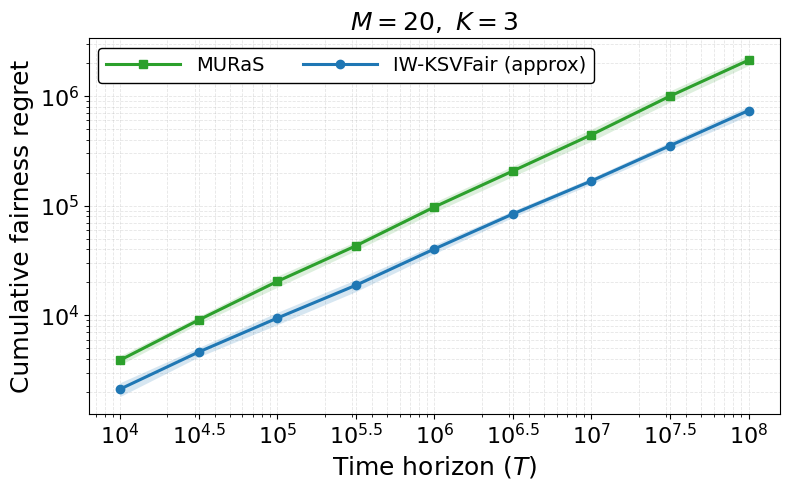}
        \subcaption{$M=20$, $K=3$}
        \label{fig:image1}
    \end{subfigure}
    \hfill
    \begin{subfigure}[t]{0.48\textwidth}
        \centering
        \includegraphics[width=\linewidth]{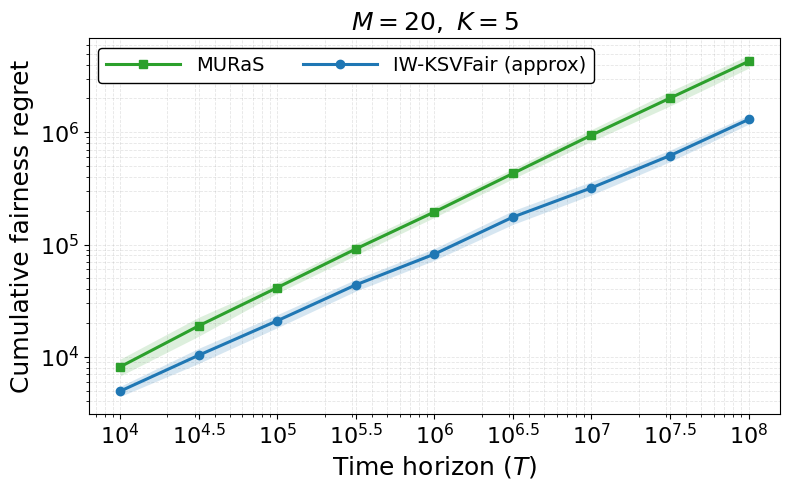}
        \subcaption{$M=20$, $K=5$}
        \label{fig:image2}
    \end{subfigure}

    \vspace{0.3cm}

    \begin{subfigure}[t]{0.48\textwidth}
        \centering
        \includegraphics[width=\linewidth]{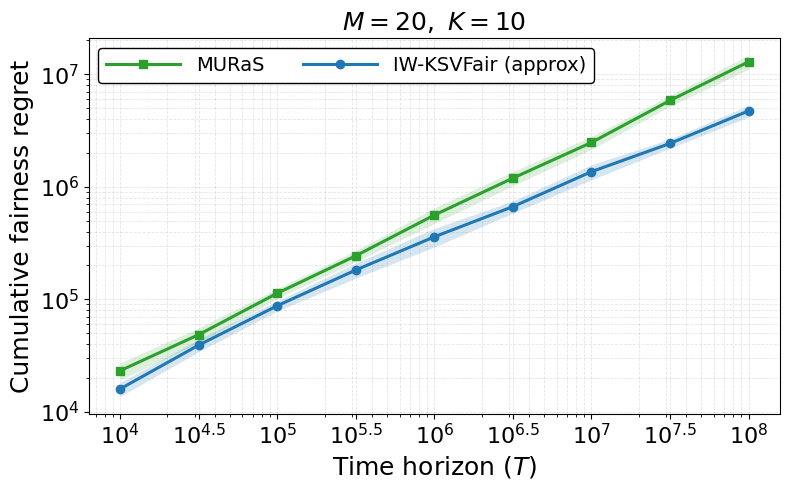}
        \subcaption{$M=20$, $K=10$}
        \label{fig:image3}
    \end{subfigure}
    \hfill
    \begin{subfigure}[t]{0.48\textwidth}
        \centering
        \includegraphics[width=\linewidth]{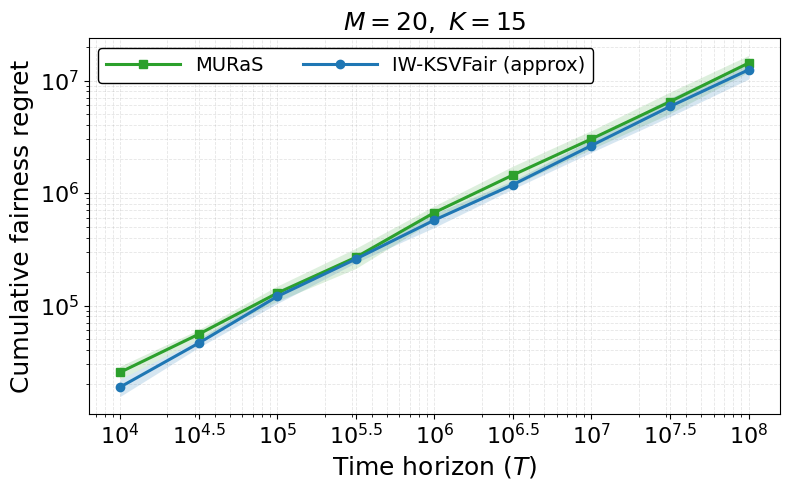}
        \subcaption{$M=20$, $K=15$}
        \label{fig:image4}
    \end{subfigure}

    \caption{Cumulative fairness regret of MURaS and \ouralgo~(approx) on synthetic dataset for $K\in\{3,5,10,15\}$ with $M=20$.}
    \label{fig:different_k}
\end{figure}

\appsubsection{Effect of the Exploration Parameters $\Lambda$ and $\Gamma$}
{appendix: gamma}

Recall that MURaS uses
$
R=\frac{T^{2/3}}{\Lambda}
$
samples for estimating the $K$-Shapley value of each arm, whereas
\ouralgo\ uses the uniform mixing parameter
$
\eta=\frac{M^{2/3}}{\Gamma K^{1/3}T^{1/3}}.
$
Thus, $\Lambda$ and $\Gamma$ control the amount of exploration in
MURaS and \ouralgo, respectively. 
Figure~\ref{fig:gamma} evaluates the sensitivity of the two algorithms to these parameters on the synthetic dataset with $M=20$ and $K=5$.
The parameter values are chosen such that the extents of exploration
in the two algorithms are comparable, thus ensuring a fair
comparison. In particular, MURaS performs $2MR$ full bandit queries
during its exploration phase, while $\eta T$ is the expected number of
queries corresponding to the uniform exploration component of
\ouralgo.

Figure~\ref{fig:gamma} illustrates the
exploration estimation trade-off controlled by $\Lambda$ and
$\Gamma$. For MURaS, a small value of $\Lambda$ results in a long
uniform exploration phase and therefore incurs high exploration
regret. On the other hand, a very large value of $\Lambda$ provides
fewer observations for estimating the arms' $K$-Shapley values, which
can reduce the accuracy of the committed policy. As shown in
Figure~\ref{fig:gamma}(a), $\Lambda=8$ achieves low
regret across the considered time horizons.

For \ouralgo, a small value of $\Gamma$ results in a large value of
$\eta$ and hence excessive uniform mixing. In contrast, a very large
value of $\Gamma$ provides insufficient exploration, which increases
the variability of the importance-weighted estimates. As shown in
Figure~\ref{fig:gamma}(b), $\Gamma=0.8$ provides a balanced trade-off.

Therefore, we use
$\Lambda=8$ and $\Gamma=0.8$ for the synthetic dataset. 
Similarly, we use $\Lambda=100$ and $\Gamma=3$ for SIM dataset. 
It is important to note that $\Lambda$ is not set separately for each
time horizon, neither is $\Gamma$.

\begin{figure}[t]
    \centering

    \begin{subfigure}[t]{0.48\textwidth}
        \centering
        \includegraphics[width=\linewidth]{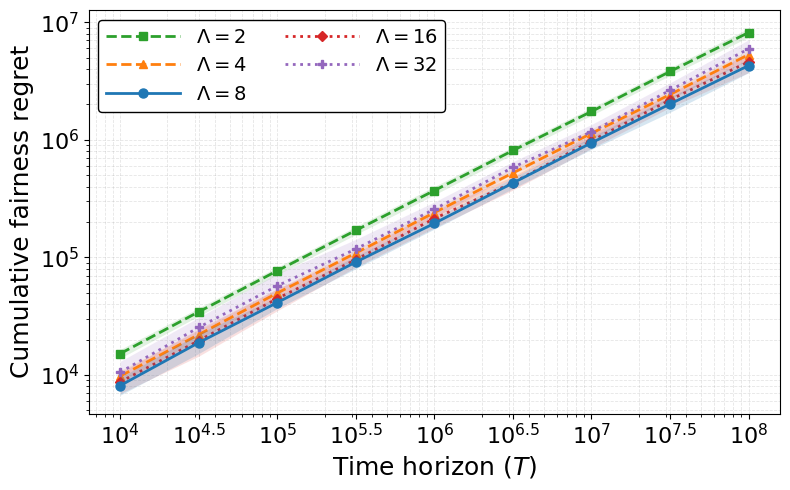}
        \subcaption{MURaS for different values of $\Lambda$}
        \label{fig:subfig1}
    \end{subfigure}
    \hfill
    \begin{subfigure}[t]{0.48\textwidth}
        \centering
        \includegraphics[width=\linewidth]{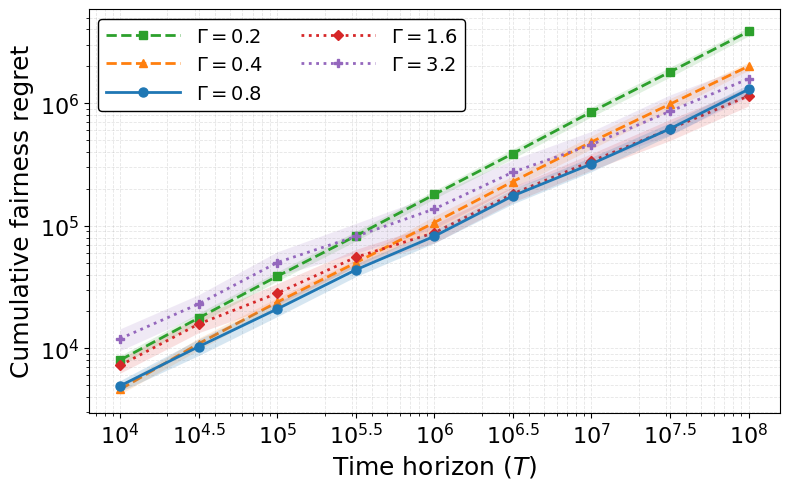}
        \subcaption{\ouralgo~(approx) for different values of $\Gamma$}
        \label{fig:subfig2}
    \end{subfigure}

    \caption{Cumulative fairness regret on the synthetic dataset for different values of $\Lambda$ in MURaS and $\Gamma$ in \ouralgo~(approx), with $M=20$ and $K=5$.}
    \label{fig:gamma}
\end{figure}

\appsubsection{Armwise Merit and Selection Frequency}
{appendix:merit_sel_freq}

Figures~\ref{fig:mainfigure_syn} and~\ref{fig:mainfigure_sim} provides an armwise comparison
between the normalized $K$-Shapley merit and the normalized
empirical selection on synthetic and SIM datasets. The arms are ordered in decreasing order of their true $K$-Shapley values.
Recall that for synthetic dataset, the true $K$-Shapley values are computed exactly, while for SIM, they are computed using Monte Carlo estimation.
For illustration, for synthetic dataset, all arms are shown, and for SIM dataset, 20 randomly chosen arms are shown.
Figures~\ref{fig:subfig1_syn} and~\ref{fig:subfig1_sim} compare the normalized merit with the empirical selection count for the two datasets, respectively. Figures~\ref{fig:subfig2_syn} and~\ref{fig:subfig2_sim} report the corresponding ratios of normalized merit to normalized selection count.
A ratio equal to 1 indicates that the arm is selected exactly in
proportion to its merit. A ratio below 1 indicates over-selection,
since the empirical selection exceeds the merit, while
a ratio above 1 indicates under-selection. 

\ouralgo\ and \ouralgo~(approx) closely track the target merit shares
across the complete merit range, and hence their ratios remain close to
one for almost all arms. MURaS also achieves a close alignment after
completing its uniform exploration phase and committing to the
estimated merit-proportional policy. The deviations observed for NoIW
and NoMix further demonstrate the importance of importance-weighted
bias correction and continued uniform exploration, respectively. 

URS assigns nearly the same selection share to every arm, irrespective
of its merit. Consequently, high-merit arms are under-selected, whereas
low-merit arms are over-selected. Fair-CMAB exhibits a similar mismatch
because its selection frequencies are determined by fixed exposure
requirements rather than the arms' $K$-Shapley values.

ETCG primarily selects the $K$ arms with the highest estimated rewards.
Therefore, its top-$K$ arms are selected substantially more often than
required by their merit shares, resulting in ratios below one. The
remaining arms are selected rarely, resulting in ratios considerably
larger than one. This explains the large values observed for the
lower ranked arms in Figure~\ref{fig:subfig2_syn} and~\ref{fig:subfig2_sim} .

GAP-E allocates most of its samples to arms near the boundary between
the estimated top-$K$ and non-top-$K$ arms, since these arms are the
most difficult to distinguish. As a result, arms near this boundary
are selected more frequently relative to their merit and have ratios
below one. Arms farther from the boundary receive fewer samples and
therefore have ratios above one. This behavior is expected because
GAP-E is designed for top-$K$ identification rather than
merit-proportional selection.

\begin{figure}[h]
    \centering
\vspace{3mm}
    \begin{subfigure}{0.7\textwidth}
        \centering
        \includegraphics[width=\linewidth]{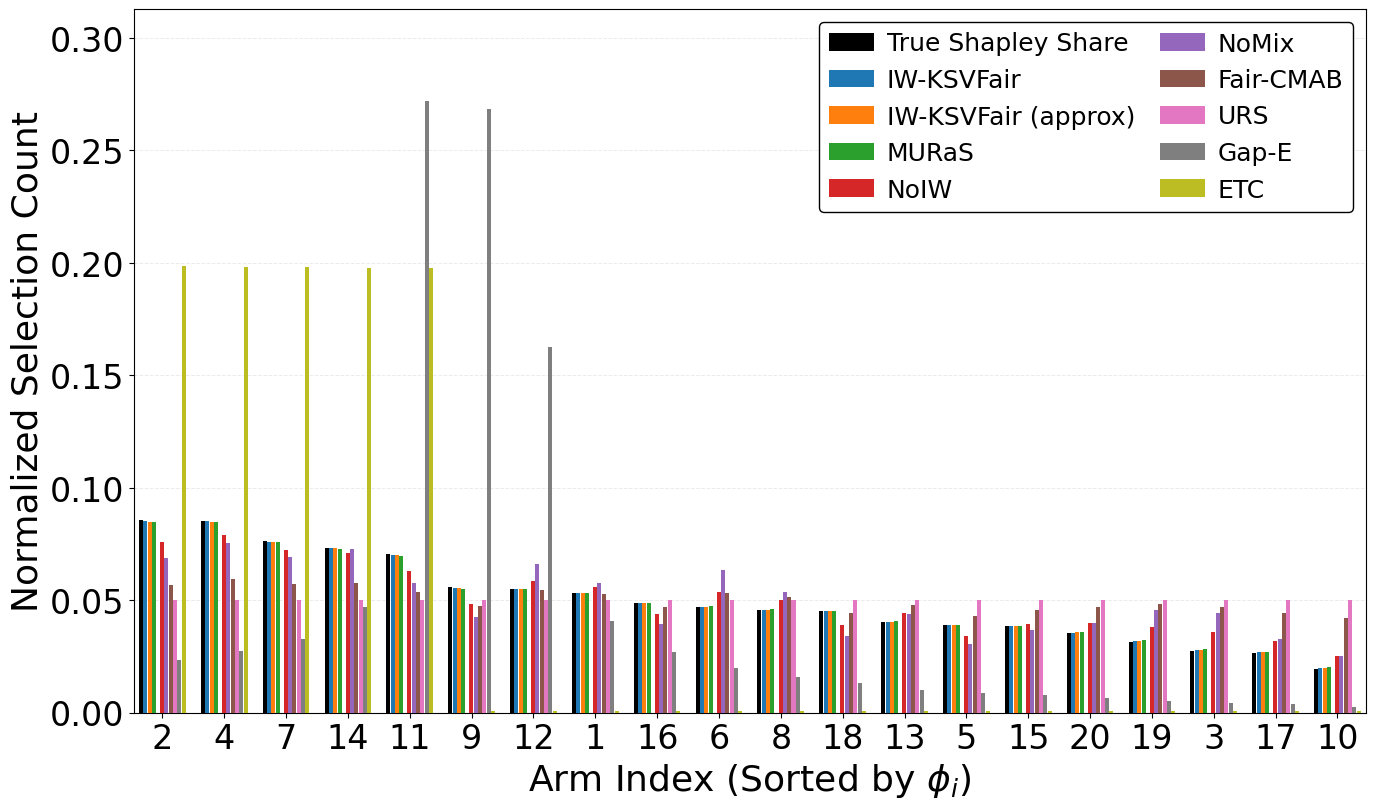}
        \caption{Normalized merit and empirical selection.
        \vspace{5mm}}
        \label{fig:subfig1_syn}
    \end{subfigure}
    \hfill
    \begin{subfigure}{0.7\textwidth}
        \centering
        \includegraphics[width=\linewidth]{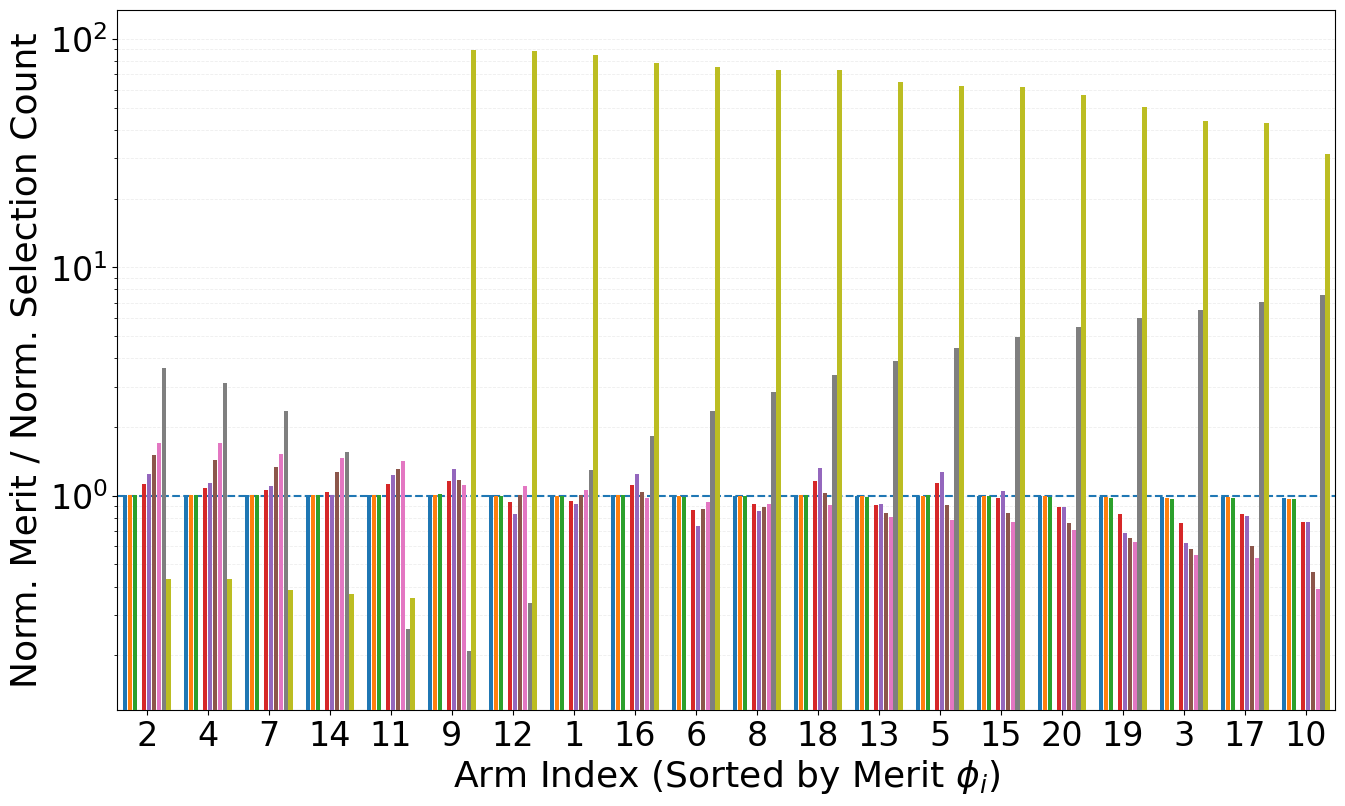}
        \caption{Ratio of normalized merit to normalized selection.}
        \label{fig:subfig2_syn}
    \end{subfigure}

    \caption{Armwise comparison of merit and empirical selection on the
    synthetic dataset.}
    \label{fig:mainfigure_syn}
\end{figure}

\begin{figure}[t]
    \centering

    \begin{subfigure}[t]{0.78\textwidth}
        \centering
        \includegraphics[width=\linewidth]{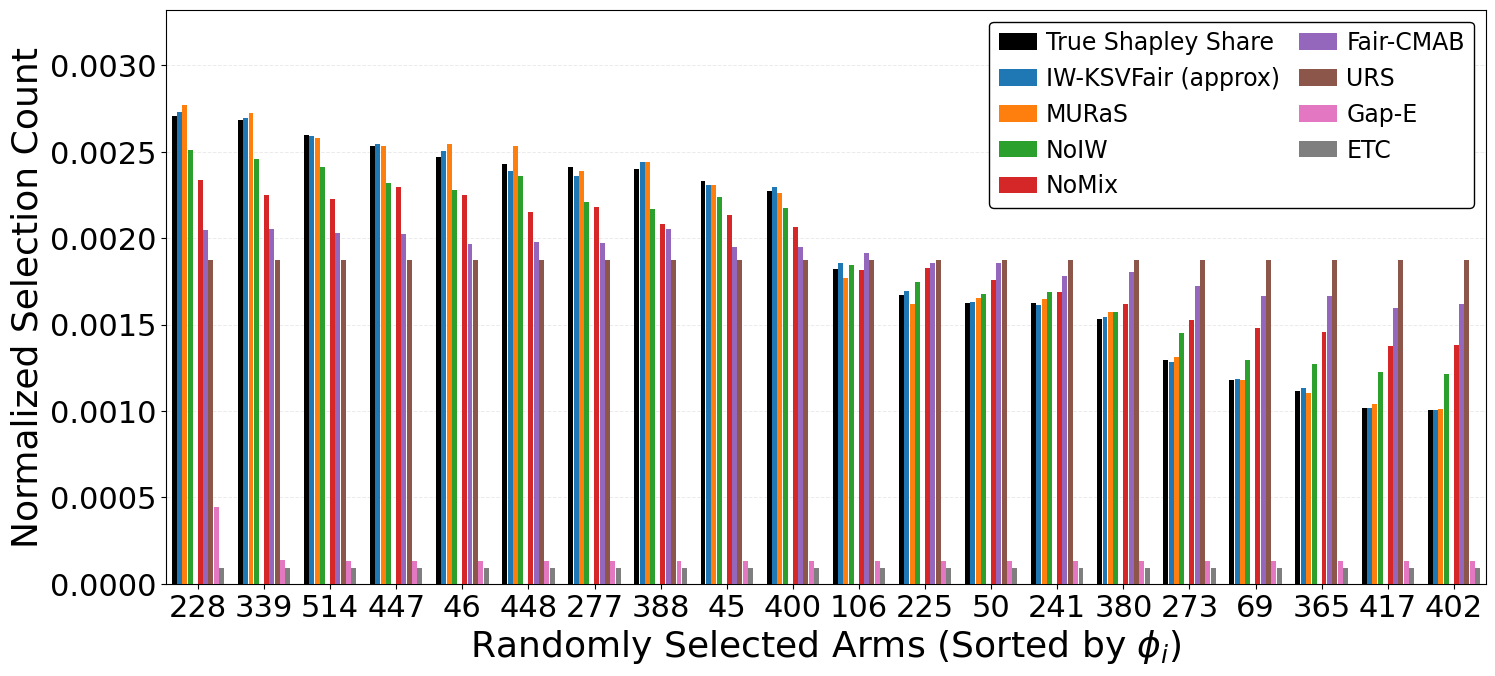}
        \caption{Normalized merit and empirical selection.
        \vspace{5mm}}
        \label{fig:subfig1_sim}
    \end{subfigure}
    \hfill
    \begin{subfigure}[t]{0.78\textwidth}
        \centering
        \includegraphics[width=\linewidth]{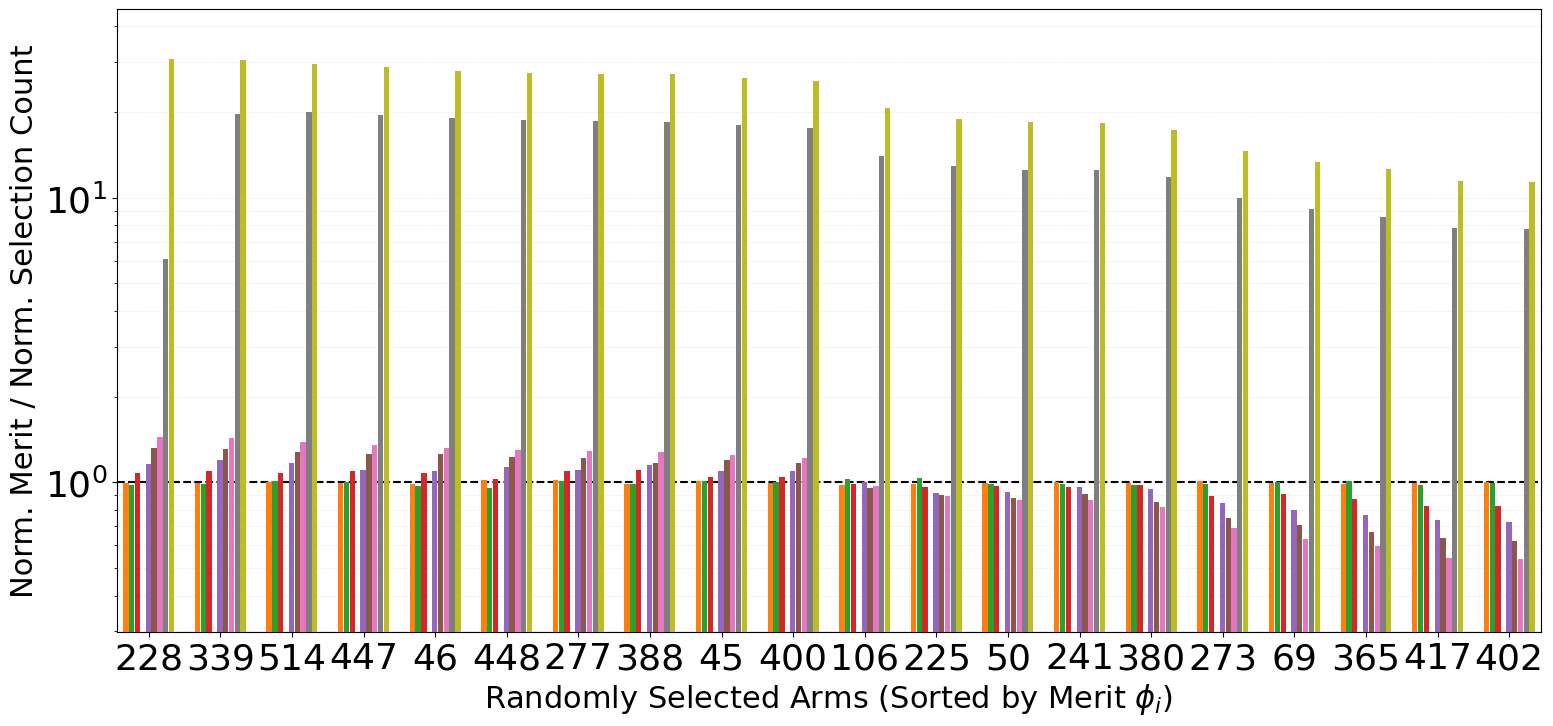}
        \caption{Ratio of normalized merit to normalized selection.}
        \label{fig:subfig2_sim}
    \end{subfigure}

    \caption{Armwise comparison of merit and empirical selection on the
    SIM dataset.}
    \label{fig:mainfigure_sim}
\end{figure}

\end{document}